\DeclareMathOperator{\sgn}{sgn}
\newtheorem{theorem}{Theorem}[section]
\newtheorem*{theorem*}{Theorem}
\newtheorem{proposition}[theorem]{Proposition}
\newtheorem*{proposition*}{Proposition}
\newtheorem{lemma}[theorem]{Lemma}
\newtheorem*{lemma*}{Lemma}
\newtheorem{corollary}[theorem]{Corollary}
\newtheorem*{conjecture*}{Conjecture}
\newtheorem{fact}[theorem]{Fact}
\newtheorem*{fact*}{Fact}
\newtheorem*{hypothesis*}{Hypothesis}
\newtheorem{claim}[theorem]{Claim}
\newtheorem*{claim*}{Claim}
\theoremstyle{definition}
\newtheorem{definition}[theorem]{Definition}
\theoremstyle{remark}
\newtheorem{remark}[theorem]{Remark}
\newtheorem*{remark*}{Remark}
\newtheorem*{observation*}{Observation}
\newcommand{\R}{\mathbb{R}}
\newcommand{\N}{\mathbb{N}}
\newcommand{\poly}{\mathrm{poly}}
\newcommand{\Bigabs}[1]{\Big\lvert#1\Big\rvert}
\newcommand{\norm}[1]{\lVert #1 \rVert}
\newcommand{\iprod}[1]{\langle#1\rangle}
\newcommand{\Esymb}{\mathbb{E}}
\newcommand{\Psymb}{\mathbb{P}}
\DeclareMathOperator*{\E}{\Esymb}
\DeclareMathOperator*{\ProbOp}{\Psymb}
\renewcommand{\Pr}{\ProbOp}
\renewcommand{\epsilon}{\varepsilon}
\newcommand{\sdpval}{\text{SDP}_{val}}
\newcommand{\sdpvalsq}{\text{SDP}^{(C \log n)}_{val}}
\newcommand{\YES}{\textsc{Yes}~}
\newcommand{\NO}{\textsc{No}~}
\newcommand{\QP}{$\mathcal{QP}$}
\newif\ifnotes\notesfalse
\definecolor{mygrey}{gray}{0.50}
\newcommand{\notename}[2]{{\textcolor{mygrey}{\footnotesize{\bf (#1:} {#2}{\bf ) }}}}
\newcommand{\notename}[2]{{}}
\newcommand{\anote}[1]{{\notename{Aravindan}{#1}}}
\newcommand{\dnote}[1]{{\notename{Abhro}{#1}}}
\DeclarePairedDelimiter\inner{\langle}{\rangle}
\DeclarePairedDelimiter\abs{\lvert}{\rvert}
\title{On Robustness to Adversarial Examples and Polynomial Optimization}
\author{%
Pranjal Awasthi\\
  Department of Computer Science\\
  Rutgers University\\
  \texttt{pranjal.awasthi@rutgers.edu} \\
 \and
Abhratanu Dutta\\
  Department of Computer Science\\
  Northwestern University\\
  \texttt{abhratanudutta2020@u.northwestern.edu} \\
\and
Aravindan Vijayaraghavan\\
  Department of Computer Science\\
  Northwestern University\\
  \texttt{aravindv@northwestern.edu} \\
  % examples of more authors
  % \And
  % Coauthor \\
  % Affiliation \\
  % Address \\
  % \texttt{email} \\
  % \AND
  % Coauthor \\
  % Affiliation \\
  % Address \\
  % \texttt{email} \\
  % \And
  % Coauthor \\
  % Affiliation \\
  % Address \\
  % \texttt{email} \\
  % \And
  % Coauthor \\
  % Affiliation \\
  % Address \\
  % \texttt{email} \\
}
\date{}
\begin{document}

\maketitle
\begin{abstract}
% !TEX root = paper.tex

We study the design of computationally efficient algorithms with provable guarantees, that are robust to adversarial (test time) perturbations. While there has been an proliferation of recent work on this topic due to its connections to test time robustness of deep networks, there is limited theoretical understanding of several basic questions like \emph{(i) when and how can one design provably robust learning algorithms?} \emph{(ii) what is the price of achieving robustness to adversarial examples in a computationally efficient manner?}
% We address this question by designing efficient algorithms with provable guarantees for large function classes and identifying computational barriers, thereby leading to a theoretical characterization of achieving robustness efficiently in several settings. 

The main contribution of this work is to exhibit a strong connection between achieving robustness to adversarial examples, and a rich class of polynomial optimization problems, thereby making progress on the above questions. In particular, we leverage this connection to (a) design \anote{removed: the {\em first}} computationally efficient robust algorithms with provable guarantees for a large class of hypothesis, namely linear classifiers and degree-$2$ polynomial threshold functions~(PTFs), (b) give a precise characterization of the price of achieving robustness in a computationally efficient manner for these classes, (c) design efficient algorithms to certify robustness and generate adversarial attacks in a principled manner for $2$-layer neural networks. We empirically demonstrate the effectiveness of these attacks on real data.

\end{abstract}
% !TEX root = paper.tex
\section{Introduction}
\label{sec:intro}
The empirical success of deep learning has led to numerous unexplained phenomena about which our current theoretical understanding is limited. Examples include the ability of complex models to generalize well and effectiveness of first order methods on optimizing training loss. The focus of 
this paper is on the phenomenon of \emph{adversarial robustness}, that was first pointed out by Szegedy et al.~\cite{szegedy2013intriguing}. On many benchmark data sets, deep networks optimized on the training set can often be fooled into misclassifying a test example by making a
small adversarial perturbation that is imperceptible to a human labeler. %Starting with the work of~\citet{szegedy2013intriguing}, there has been 
This has led to a proliferation of work on designing robust algorithms that defend against such adversarial perturbations, as well as attacks that aim to break these defenses. 
%\anote{motivations or frameworks?}
%These works span many flavors of adversarial robustness, each differing in what constitutes a valid perturbed example, how much information the attacker has about the model, and what the success metric is~\cite{gilmer2018motivating}. 

In this work we choose to focus on \emph{perturbation defense}, the most widely studied formulation of adversarial robustness~\cite{madry2017towards}. 
%The current empirical state of the art solution for perturbation defense proposes to train a robust loss function in order to defend against adversarial attacks~\cite{madry2017towards}, although it has also been shown to be  vulnerable to certain kinds of attacks~\cite{sharma2017breaking, zhang2019limitations}. %\anote{Modified previous line.} \anote{removed: As of now}
%There is no general accepted approach for achieving robustness in this setting. 
In the perturbation defense model, given a classifier $f$, an adversary can take a test example $x$ generated from the data distribution and 
perturb it to $\tilde{x}$ such that $\|x-\tilde{x}\| \leq \delta$. Here $\delta$ characterizes the amount of power the adversary has and the distance is typically measured in the $\ell_\infty$ norm (other norms that have been studied include the $\ell_2$ norm).  Given a loss function $\ell(\cdot)$, the goal is to optimize the {\em robust loss} defined as
\anote{Put in an equation.}
%\begin{align}
%\label{eq:robust-loss}
$$\mathbb{E}_{(x,y) \sim D} \Big[\max_{\tilde{x}: \|x-\tilde{x}\|_\infty \leq \delta} \ell(f(\tilde{x}), y) \Big].$$
%\end{align}
One would expect that when $\delta$ is small the label $y$ of an example does not change, thereby motivating the robust loss objective. % and hence the robust loss is a good objective to optimize. %While there has been a tremendous amount of recent work on understanding the properties of the optimization problem above, 
%The study of robust optimization problems, such as equation~\eqref{eq:robust-loss}, dates far back~\cite{el1997robust, ben1999robust, bertsimas2004price} and precedes the motivations stemming from deep learning research. Even in the context of machine learning, there are other reasons to optimize~\eqref{eq:robust-loss} such as improving out of distribution generalization, dealing with missing feature values and gaining a different perspective on standard regularization algorithms~\cite{bhattacharyya2004robust, globerson2006nightmare, shivaswamy2006second, xu2009robustness, xu2012robustness}. 
Despite a recent surge in efforts to theoretically understand adversarial robustness~\cite{xu2009robustness, xu2012robustness, yin2018rademacher, khim2018adversarial, schmidt2018adversarially, feige2015learning, attias2018improved, cullina2018pac,gilmer2018adversarial, tsipras2018robustness, mahloujifar2018curse, mahloujifar2018can, diochnos2018adversarial}, several central questions remain open. %, the most important among them being:\\ 
 \emph{How can one design provable polynomial time algorithms that are robust to adversarial perturbations? Given a classifier and a test input, how can one provably construct an adversarial example in polynomial time or certify that none exists? What computational barriers exist when designing adversarially robust learning algorithms?}

%On the theoretical front, the work of Xu et al.~\cite{xu2009robustness, xu2012robustness} shows that for linear function classes, optimizing \eqref{eq:robust-loss} is equivalent to optimizing a regularized loss function. A recent line of work aims to characterize the generalization properties of robust loss minimization, by either bounding the Rademacher complexity of a robust loss function class~\cite{yin2018rademacher, khim2018adversarial}, characterizing generalization for specific settings~\cite{schmidt2018adversarially, feige2015learning, attias2018improved}, or using combinatorial notions analogous to VC dimension~\cite{cullina2018pac}. Another line of work studies the trade-off between traditional test error and robust error as defined in Equation~\ref{eq:robust-loss}~\cite{gilmer2018adversarial, tsipras2018robustness, mahloujifar2018curse, mahloujifar2018can, diochnos2018adversarial}. 
%%The conclusion in these works is that in certain cases any classifier with non-zero test error will be inherently non robust, essentially due to isoperimetry of the data distribution. Please see Section~\ref{sec:related} for a detailed discussion of existing theoretical work.
%These works ignore one important consideration, namely how much robustness, if any, can one achieve if constrained to polynomial time learning algorithms. This is precisely the question we study in this paper: 

%\qquad \qquad \emph{What is the price of achieving robustness in a computationally efficient manner?} 
In this work we identify and study a natural class of \emph{polynomial optimization} problems that are intimately connected to adversarial robustness, and help us shed new light on all three of the above questions simultaneously! As a result we obtain the first polynomial time learning algorithms for a large class of functions that are optimally robust to adversarial perturbations. Furthermore, we also provide nearly matching computational intractability results that, together with our upper bounds give a sharp characterization of the price of achieving adversarial robustness in a computationally efficient manner. We now summarize our main results.
\anote{de-emphasized first.}

\noindent \textbf{Our Contributions}
\anote{Should we change title to Polynomial Optimization and Finding Adversarial Examples.}
\noindent{\bf Polynomial optimization and Adversarial Robustness.} 
We identify a natural class of polynomial optimization problems that provide a common and principled framework for studying various aspects of adversarial robustness. These problems are also closely related to a rich class of well-studied problems that include the Grothendi\"{e}ck problem and its generalizations~\cite{alonnaor, charikar2004maximizing, alon2006quadratic, khotnaor}. Given a classifier of the form $sgn(g(x))$ with $g:\R^n \to \R$, input $x$, and budget $\delta > 0$, the optimization problem is
$$ \max_{z \in \R^n: \norm{z}_\infty \le \delta} g(x+z).$$
\anote{Made an equation}
%asks for approximately maximizing $g(x+z)$ where $z$ is constrained to  have $\ell_\infty$ norm of at most $\delta$. 
Usually, such problems are NP-hard and one relaxes them to find a $\hat{z}$ such that $g(x+\hat{z})$ comes as close to $g(x+z^*)$ in the objective value, where $z^*$ is the optimal solution. We instead require the algorithm to output a $\hat{z}$ such that $g(x+\hat{z}) \geq g(x+z^*)$ at the cost of violating the $\ell_\infty$ constraint by a factor $\gamma \geq 1$. An efficient algorithm for producing such a $\hat{z}$ leads to an adversarial attack (in the relaxed $\ell_\infty$ neighborhood of radius $\gamma \delta$) when an adversarial example exists. On the other hand, if the algorithm produces no $\widehat{z}$, then this guarantees that there is no adversarial example within the $\ell_\infty$ neighborhood of radius $\delta$. 
\anote{Added above line.}
We then design such algorithms based on convex programming relaxations to get the \emph{first} provable polynomial time adversarial attacks when the given classifier is a degree-$1$ or a degree-$2$ polynomial threshold function~(PTF).

%\vspace{5pt}
\noindent{\bf Algorithms for Learning Adversarially Robust Classifiers.} 
Next we use the algorithm for finding adversarial examples to design polynomial time algorithms for learning robust classifiers for the class of degree-$1$ and degree-$2$ polynomial threshold functions~(PTFs).
%Next we use the algorithms designed above for finding adversarial examples to design the \emph{first} optimal approximately robust polynomial time learning algorithms for the class of degree-$1$ and degree-$2$ polynomial threshold functions~(PTFs). 
\anote{Too much going on in the sentence. Moved things around..}
%In order to do this, we define a natural learning model that 
To incorporate robustness we introduce a parameter $\gamma$, that helps clarify the tradeoff when computational efficiency is desired. 
\anote{Why does it need to be a PTF below?}
We focus on the $0/1$ error and say that a class $\mathcal{F}$ of PTFs of VC dimension $\Delta$ is $\gamma$-approximately robustly learnable if there exists a (randomized) polynomial time algorithm that, for any given $\epsilon, \delta > 0$, takes as input $\mathrm{poly}(\Delta, \frac 1 \epsilon)$ examples generated from a distribution and labeled by a function in $\mathcal{F}$ that has zero $\delta$-robust error~(realizable case), outputs a classifier from $\mathcal{F}$ that has $(\delta / \gamma)$-robust error upper bounded by $\epsilon$. %Furthermore, the algorithm should run in polynomial time. 
See Section~\ref{sec:model} for the formal definition. We design polynomial time algorithms for degree-$1$ and degree-$2$ PTFs with $\gamma=1$ and $\gamma=O(\sqrt{\log n})$ respectively. Our next result that we discuss below a nearly matching lower bound. Together this gives nearly optimal approximately robust polynomial time algorithms for learning PTFs of degree at most $2$. 
%Our computational hardness result discussed below also rules out the possibility of polynomial time algorithms with significantly better bounds for these classes.

%Our algorithms are based on semidefinite relaxations of a natural class of polynomial optimization problems that stem from a general framework, that we show captures robust learnability of PTFs. These problems are also closely related to a rich class of well-studied problems that include the Grothendieck problem and its generalizations~\cite{alonnaor, charikar2004maximizing, alon2006quadratic, khotnaor}.

%\vspace{5pt}

\noindent{\bf Computational Hardness.} While our algorithm for degree-$1$ PTFs is optimal, i.e., has $\gamma=1$, for degree-$2$ and higher PTFs, we show that one indeed has to pay a price for computational robustness. We establish this by proving that robust learning of degree-$2$ PTFs is computationally hard for $\gamma = o(\log^c n)$, for some constant $c > 0$~(see Section~\ref{sec:lowerbound} for formal statements). %Theorem~\ref{THM:lowerbound} and Corollary~\ref{cor:lowerbound} for formal statements). 
This is in sharp contrast to the non-robust setting ($\delta=0$), where there exist polynomial time algorithms for constant degree PTFs (in the literature this is referred to as proper PAC learning in the realizable setting). More importantly, our lower bound again leverages the connection to polynomial optimization and in fact shows that robust learning of degree-$2$ PTFs for $\gamma=o(\sqrt{\eta_{approx}})$ is NP-hard where $\eta_{approx}$ is \emph{precisely} the hardness of approximation factor of a well-studied combinatorial optimization problem called {\em Quadratic Programming}. 
\anote{used to be: the corresponding polynomial optimization problem.} 
Hence, any significant improvement in the approximation factor in our upper bound is unlikely. While our hardness result applies to algorithms that output a classifier of low error, we also prove a more robust hardness result showing that for learning degree-$2$ and higher PTFs without any loss in the robustness parameter, i.e, $\gamma=1$, it is computationally hard to even find a classifier of any constant error in the range $(0,\frac 1 4)$. 
%\anote{In fact, we show a stronger hardness result....}

%\vspace{5pt}

\noindent{\bf Application to Neural Networks.} Finally, we show that the connection to polynomial optimization also leads to new algorithms for generating adversarial attacks on neural networks. We focus on $2$-layer neural networks with ReLU activations. We show that given a network and a test input, the problem of finding an adversarial example can also be phrased as an optimization problem of the kind studied for PTFs. \anote{used to be: corresponds to a natural optimization problem.} We design a semi-definite programming~(SDP) based polynomial time algorithm to generate an adversarial attack for such networks and compare our attack to the state-of-the-art attack of Madry et al.~\cite{madry2017towards} on the MNIST data set. 
%We show that under a natural condition on the structure of the SDP solution, our attack provably finds an adversarial example or certifies that none exists. We empirically show that condition holds true in practice and compare our attack to the state-of-the-art attack of Madry et al.~\cite{madry2017towards} on the MNIST data set. 
%\anote{5/23:added state-of-the-art}
%\noindent{\bf Algorithms.} We design $\gamma$-approximately robust learning algorithms for degree-$1$ and degree-$2$ PTFs over $n$ variables, with $\gamma =1$ and $\gamma = O(\sqrt{\log n})$ respectively. %In the process we demonstrate that the price of robustness, as quantified in the factor $\gamma$, is related to the approximation factor of a natural polynomial maximization problem.
%Our algorithms run in polynomial time, and are based on a general framework that relates robust learnability of PTFs to a natural class of polynomial optimization problems.  We design semidefinite programming based rounding algorithms for such problems and show that these serve as approximate separation oracles for a natural convex program that captures robust learnability. These problems are also closely related to a rich class of well-studied problems that include the Grothendieck problem and its generalizations~\cite{alonnaor, charikar2004maximizing, alon2006quadratic, khotnaor}.
%\anote{Something about our algorithms?} 

%\vspace{5pt}

\noindent{\bf Paper Outline.}
In the rest of the paper, we give an overview of related work in Section~\ref{sec:related}. We define our model formally and give an overview of our techniques in Section~\ref{sec:model}. We then describe the connection to polynomial optimization in Section~\ref{sec:optimization} and use it to design robust learning algorithms in Section~\ref{sec:upper-bound}, and derive computational intractability results in Section~\ref{sec:lowerbound}. In Section~\ref{sec:nn}, we design adversarial attacks for $2$ layer neural networks, followed by conclusions in Section~\ref{sec:conclusions}.

\section{Related Work}
\label{sec:related}
As mentioned in the introduction, there has been a recent explosion of works on understanding adversarial robustness from both empirical and theoretical aspects. Here we choose to discuss the theoretical works that are the most relevant to our paper. We refer the interested reader to a recent paper by~\cite{gilmer2018motivating} for a broader discussion. Prior to their relevance for deep networks, robust optimization problems have been studied in machine learning and other domains. The works of~\cite{bhattacharyya2004robust, globerson2006nightmare, shivaswamy2006second} studies optimization heuristics for optimizing a robust loss that can handle noisy or missing data. The works of~\cite{xu2009robustness, xu2012robustness} proved an equivalence between robust optimization and various regularized variants of SVMs. They used this relation to re-derive standard generalization bounds for SVMs and their kernel versions. Akin to classifier stability, these bounds depend on the robustness of the classifier on the training set. A recent work of~\cite{bietti2018regularization} views deep networks as functions in an RKHS and designs new norm based regularization algorithms to achieve robustness. 

Motivated by connections to deep networks a recent line of work studies generalization bounds for robust learning. The work of~\cite{schmidt2018adversarially} provides specific constructions of a linear binary classification task where a single example is enough to learn the problem in the usual sense, i.e., to achieve low test error, whereas learning the problem robustly requires a significantly large training set. The authors also show that in certain cases, non-linearity can help reduce the sample complexity of robust learning. The work of~\cite{cullina2018pac} proposes a PAC model for robust learning and defines adversarial VC dimension as a combinatorial quantity that captures robust learning via robust empirical risk minimization~(ERM). The authors show that for linear classifiers the adversarial VC dimension is the same as the VC dimension, although there are functions classes and distributions where the gap between the two quantities could be much higher. The recent works of~\cite{yin2018rademacher} and~\cite{khim2018adversarial} analyze Rademacher complexity of robust loss functions classes. In particular, it is observed that even for linear models with bounded weight norm, there is an unavoidable dependence on the data dimension in the Rademacher complexity of robust loss function classes. These results point to the fact that for many distributions robust learning could require many more training samples than their non-robust counterpart. The work of~\cite{feige2015learning, attias2018improved} studies algorithms and generalization bounds for a model where the adversary can choose perturbations from a known finite set of small size $k$. 

Another recent line of work studies the trade-off between traditional test error and robust error. The work of~\cite{tsipras2018robustness} designs a classification task that is efficiently learnable with a linear classifier to low standard error, but has the property that any classifier that achieves
low test error will have high robust error on the task. The work of~\cite{gilmer2018adversarial} designs a task that is learnable by a degree-2 polynomial and relates the test error of any model to its robust error. Similar conclusions have been observed in~\cite{mahloujifar2018curse, mahloujifar2018can, diochnos2018adversarial} and have been used to design various data poisoning attacks. These results essentially follows from the use of isoperimetric inequalities for distributions such as the Gaussian and the uniform distribution over the Boolean hypercube. However, as noted in~\cite{gilmer2018adversarial}, it is not clear if the same relation holds between test error and robust error for real world data distributions. The work of~\cite{fawzi2016robustness} relates robustness to the curvature of the decision boundary and uses it to quantify robustness to random perturbations.

Yet another line of work concerns the design of certificates of perturbation robustness or distributional robustness of a given classifier (e.g., deep neural networks) at a given point~\cite{wong2018provable, raghunathan2018certified, sinha2018certifying}. This is achieved by the use of convex relaxations of the optimal robustness at a given point. These works also conclude that by augmenting the training objective with a penalty that depends on the certificates, one can empirically achieve increased robustness. However these algorithms do not give any guarantees for relating the bound achieved by the certificate of robustness to the optimal robustness around a given point. 

The work of Bubeck et al.~\cite{bubeck2018adversarial2, bubeck2018adversarial} provides a cryptographic lower bound by designing a computational task in $\mathbb{R}^n$ that is robustly learnable using $\mathrm{poly}(n)$ samples to any given robustness parameter $M$, but is hard to learn robustly to any non-trivial robustness parameter $\epsilon > 0$, in polynomial time. When translated to our model, this provides an instance of a cryptographic learning task that is computationally hard to $\gamma$-approximately robustly learn for any constant $\gamma \geq 1$. However, this does not rule out the possibility that natural function classes can be robustly learned without any loss in robustness parameter. Our result rules this out for the class of degree-$2$ and higher PTFs, even in the realizable setting, i.e., when there exists a robust classifier of zero error! Finally, to the best of our knowledge, our upper bounds are the first to establish the robustness tradeoff for computationally efficient learning for a large natural class of functions.

	% !TEX root = paper.tex

\section{Model and Preliminaries}
\label{sec:model}
We focus on binary classification, and adversarial perturbations are measured in $\ell_\infty$ norm. For a vector $x \in \mathbb{R}^n$, we have $\|x\|_\infty = \max_i |x_i|$. We study robust learning of \emph{polynomial threshold functions}~(PTFs). These are functions of the form $sgn(p(x))$, where $p(x)$ is a polynomial in $n$ variables over the reals. Here $sgn(t)$ equals $+1$, if $t \geq 0$ and $-1$ otherwise. Given $y,y' \in \{-1,1\}$, we study the $0/1$ loss defined as $\ell(y,y') = 1$ if $y \neq y'$ and $0$ otherwise. Given a binary classifier $\sgn(g(x))$, an input $x^*$, and a budget $\delta > 0$, we say that $x^*+z$ is an {\em adversarial example} (for input $x^*$) if $\sgn(g(x^*+z)) \neq \sgn(g(x^*))$ and that $\|z\|_\infty \leq \delta$. One could similarly define the notion of adversarial examples for other norms. For a classifier with multiple outputs, we say that $x^*+z$ is an adversarial example iff the largest co-ordinate of $g(x^* + z)$ differs from the largest co-ordinate of $g(x^*)$. \anote{argmax over $z$? Also why isn't it $x^*$ that is called an adversarial example?} We now define the notion of robust error of a classifier.
\begin{definition}[$\delta$-robust error]\label{def:robust-error}
Let $f(x)$ be a Boolean function mapping $\mathbb{R}^n$ to $\{-1, 1\}$. Let $D$ be a distribution over $\mathbb{R}^n \times \{-1, 1\}$. Given $\delta > 0$, we define the $\delta$-robust error of $f$ with respect to $D$ as $err_{\delta, D}(f) = \mathbb{E}_{(x,y) \sim D} \big[ \sup_{z \in B^n_\infty(0,\delta)} \ell(f(x+z),y)\big]$.
%\begin{align}
%\label{eq:robust-error}
%err_{\delta, D}(f) &= \mathbb{E}_{(x,y) \sim D} \Big[ \sup_{z \in B^n_\infty(0,\delta)} \ell(f(x+z),y)\Big].
%\end{align}
Here $B^n_\infty(0,\delta)$ denotes the $\ell_\infty$ ball of radius $\delta$, i.e., $B^n_\infty(0,\delta) = \{x \in \mathbb{R}^n: \|x\|_\infty \leq \delta\}$.
\end{definition}
%The case $\delta = 0$ corresponds to the standard definition of test error.
Analogous to empirical error in PAC learning, we denote $\hat{err}_{\delta, S}(f)$ to be the $\delta$-robust empirical error of $f$, i.e., the robust error computed on the given sample $S$.
% we next define the notion of robust empirical error.
%\begin{definition}[$\delta$-robust empirical error] \label{def:robust-empirical-error}
%Let $f(x)$ be a Boolean function mapping $\mathbb{R}^n$ to $\{-1, 1\}$. Let $S$ be a set of $m$ examples $(x_1, y_1), \dots, (x_m, y_m)$ where $x_i \in \mathbb{R}^n$ and $y_i \in \{-1, 1\}$. Given $\delta > 0$, we define the $\delta$-robust empirical error of $f$ on $S$ as $\hat{err}_{\delta, S}(f) = \frac 1 m \sum_{(x_i,y_i) \sim S} \Big[ \sup_{z \in B^n_\infty(0,\delta)} \ell(f(x_i+z),y_i)\Big]
%$.
%\begin{align}
%\label{eq:robust-empirical-error}
%\hat{err}_{\delta, S}(f) = \frac 1 m \sum_{(x_i,y_i) \sim S} \Big[ \sup_{z \in B^n_\infty(0,\delta)} \ell(f(x_i+z),y_i)\Big].
%\end{align}
%\end{definition}
To bound generalization gap, we will use the notion of adversarial VC dimension as introduced in~\cite{cullina2018pac}. %~(See Appendix~\ref{sec:model-app}).
Next we define robust learning for PTFs.
%In the definition below, we use the same definition of adversarial VC dimension as that in~\citet{cullina2018pac}.
\begin{definition}[$\gamma$-approximately robust learning]
Let $\mathcal{F}$ be the class of degree-$d$ PTFs from $\mathbb{R}^n \mapsto \{-1,1\}$ of VC dimension $\Delta = O(n^d)$. For $\gamma \geq 1$, an algorithm $\mathcal{A}$ $\gamma$-approximately robustly learns $\mathcal{F}$ if the following holds for any $\epsilon, \delta, \eta > 0$: Given $m = \mathrm{poly}(\Delta, \frac 1 \epsilon, \frac 1 \eta)$ samples from a distribution $D$ over $\mathbb{R}^n \times \{-1,1\}$, if $\mathcal{F}$ contains a function $f^* $ such that $err_{\delta,D}(f^*)=0$, then with probability at least $1-\eta$, $\mathcal{A}$ runs in time polynomial in $m$ and outputs $f \in \mathcal{F}$ such that
$
err_{\delta/ \gamma, D} (f) \leq \epsilon.
$
If $\mathcal{F}$ admits such an algorithm then we say that $\mathcal{F}$ is $\gamma$-approximately robustly learnable. Here $\gamma$ quantifies the price of achieving computationally efficient robust learning, with $\gamma=1$ implying optimal learnability.
\end{definition}

\paragraph{A Note about the Model and the Realizability Assumption}
Our definition of an adversarial example requires that $\sgn(g(x^* + z)) \neq \sgn(g(x^*))$, whereas for robust learning we require a classifier that satisfies $\sgn(g(x^*+z)) \neq y$, where $y$ is the given label of $x^*$. This might create two sources of confusion to the reader: a) In general the two requirements might be incompatible, and b) It might happen that initially $\sgn(g(x^*))$ predicts the true label incorrectly but there is a perturbation $z$ such that $\sgn(g(x^*+z))$ predicts the true label correctly. In this case one may not count $z$ as an adversarial example. To address (a) we would like to stress that all our guarantees hold under the realizability assumption, i.e., we assume that there is true function $c^*$ such that for all examples $x$ in the support of the distribution and all perturbations of magnitude upto $\delta$, $\sgn(c^*(x^*+z)) = \sgn(c^*(x^*))$. Hence, there will indeed be a target concept for which no adversarial example exists and as a result will have zero robust error. To address (b) we would like to point out that in Section~\ref{sec:upper-bound} where we use the subroutine for finding adversarial examples to learn a good classifier $\sgn(g)$, we always enforce the constraint that on the training set $\sgn(g(x^*)) = \sgn(c^*(x^*))$ and $g$ is as robust as possible. Hence when we find an adversarial example for 
a point $x^*$ in our training set, it will indeed satisfy that $\sgn(g(x^*+z)) \neq \sgn(c^*(x))$ and correctly penalize $g$ for the mistake. More generally, we could also define an adversarial example as one where given pair $(x^*,y)$ the goal is to find a $z$ such that $\sgn(g(x^*+z)) \neq y$. All of our guarantees from Section~\ref{sec:optimization} apply to this definition as well. Finally, in the non-realizable case, the distinction between defining adversarial robustness as either $\sgn(g(x^*+z)) \neq \sgn(g(x^*))$, or $\sgn(g(x^*+z)) \neq y$, or even $\sgn(g(x^*+z)) \neq \sgn(c^*(x))$ matters and has different computational and statistical implications~\cite{diochnos2018adversarial, gourdeau2019hardness}. Understanding when one can achieve computationally efficient robust learning in the non-realizable case is an important direction for future work.

The definition of  $\gamma$-approximately robustly learnability has the realizability assumption built into it. So, when we prove that a class $\mathcal{F}$ is $\gamma$-approximately robustly learnable, we find an approximate robust learner from $\mathcal{F}$ under the realizability assumption on $\mathcal{F}$ i.e. for a set of points from the distribution, the algorithm guarantees to return an approximate robust learner only if there exists a perfect robust learner in the class $\mathcal{F}$ of learners.

The work of~\cite{cullina2018pac} defines the notion of adversarial VC dimension to bound the generalization gap for robust empirical risk minimization. Additionally, the authors show that for linear classifiers the adversarial VC dimension remains the same as that of the original class. The bound below then follows by viewing PTFs as linear classifiers in a higher dimensional space.
\begin{lemma}
\label{lem:uniform-convergence}
Let $\mathcal{F}$ be a class of degree-$d$ polynomial threshold functions from $\mathbb{R}^n \mapsto \{-1, 1\}$ of VC dimesion $\Delta = O(n^d)$. Given $\delta,\eta > 0$, and a set $S$ of $m$ examples $(x_1, y_1), \dots, (x_m, y_m)$ generated from a distribution $D$ over $\mathbb{R}^n \times \{-1, 1\}$, with probability at least $1-\eta$, we have that $\sup_{f \in \mathcal{F}} |err_{\delta, D}(f) - \hat{err}_{\delta, S}(f)| \leq 2\sqrt{2\Delta \log m/ m} + \sqrt{\log(1/\eta)/(2m)}
$.
%\begin{align*}
%\sup_{f \in \mathcal{F}} |err_{\delta, D}(f) - \hat{err}_{\delta, S}(f)| &\leq 2\sqrt{\frac{2\Delta \log m}{m}} + \sqrt{\frac{\log \frac{1}{\eta}}{2m}}.
%\end{align*}
\end{lemma}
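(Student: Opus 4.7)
The plan is to reduce this to a standard VC-theoretic uniform convergence statement applied to the ``adversarial loss class'' induced by $\mathcal{F}$. Define the class $\widetilde{\mathcal{F}}_\delta = \{\tsf : f \in \mathcal{F}\}$ of $\{0,1\}$-valued functions on $\R^n \times \{-1,1\}$, where
\[
\tsf(x,y) = \sup_{z \in B^n_\infty(0,\delta)} \ell\bigl(f(x+z),\,y\bigr).
\]
Since $err_{\delta,D}(f) = \E_{(x,y)\sim D}[\tsf(x,y)]$ and $\hat{err}_{\delta,S}(f)$ is the corresponding empirical mean, a standard VC / Sauer--Shelah uniform convergence bound applied to $\widetilde{\mathcal{F}}_\delta$ yields, with probability at least $1-\eta$,
\[
\sup_{f \in \mathcal{F}} \bigl|err_{\delta,D}(f) - \hat{err}_{\delta,S}(f)\bigr| \;\le\; 2\sqrt{\tfrac{2\,\mathrm{VC}(\widetilde{\mathcal{F}}_\delta)\log m}{m}} + \sqrt{\tfrac{\log(1/\eta)}{2m}},
\]
which is exactly the target inequality provided we can show $\mathrm{VC}(\widetilde{\mathcal{F}}_\delta) \le \Delta = O(n^d)$.

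The next step, which is the heart of the argument, is to control $\mathrm{VC}(\widetilde{\mathcal{F}}_\delta)$. The quantity $\mathrm{VC}(\widetilde{\mathcal{F}}_\delta)$ is precisely the \emph{adversarial VC dimension} of $\mathcal{F}$ with respect to the perturbation set $B^n_\infty(0,\delta)$, as defined in Cullina et al.~\cite{cullina2018pac}. Their main structural result is that for the class of halfspaces in any dimension $N$ and any constraint perturbation set, the adversarial VC dimension equals the ordinary VC dimension (which is $N+1$ for homogeneous, or $N$ in the way stated). To apply this to degree-$d$ PTFs, I would use the standard linearization: write each $f \in \mathcal{F}$ as $\sgn(w^\transpose \phi(x))$ where $\phi : \R^n \to \R^N$ is the monomial feature map of degree at most $d$, with $N = \binom{n+d}{d} = O(n^d)$. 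Under this embedding the set of images of $B^n_\infty(x,\delta)$ becomes some (possibly curved) subset of $\R^N$, but Cullina et al.'s result applies to halfspaces against arbitrary constraint perturbation sets, so the adversarial VC dimension of the lifted class is still $O(n^d) = \Delta$. Any shattering of $m$ points by $\widetilde{\mathcal{F}}_\delta$ in the original space corresponds to a shattering of their lifts by the adversarially-robust halfspace class, giving $\mathrm{VC}(\widetilde{\mathcal{F}}_\delta) \le \Delta$.

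The main obstacle, and the reason the paper just cites Cullina et al. rather than reproving this, is precisely the justification that the perturbation sets in the lifted space are handled cleanly by their framework: $\ell_\infty$ perturbations in $\R^n$ do not map to $\ell_\infty$ perturbations in $\R^N$, so one must invoke the \emph{perturbation-set-agnostic} form of Cullina et al.'s halfspace result rather than a version tailored to $\ell_\infty$ balls. Once that is in hand, the final step is entirely mechanical: plug $\mathrm{VC}(\widetilde{\mathcal{F}}_\delta) \le \Delta$ into the standard Vapnik--Chervonenkis inequality for $\{0,1\}$-valued function classes (growth function bounded by $(em/\Delta)^\Delta$, then apply the symmetrization plus a Hoeffding-style tail bound) to obtain the stated rate $2\sqrt{2\Delta\log m/m} + \sqrt{\log(1/\eta)/(2m)}$.
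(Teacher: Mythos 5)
Your proposal takes essentially the same route as the paper, which dispatches this lemma in a single sentence: cite the adversarial-VC-dimension framework of \cite{cullina2018pac}, use their result that for halfspaces the adversarial VC dimension equals the ordinary one, and lift degree-$d$ PTFs to halfspaces in $\R^N$ with $N=O(n^d)$ via the monomial feature map; you have in fact spelled out the one genuinely delicate step (that the image of $B^n_\infty(x,\delta)$ under the feature map is not an $\ell_\infty$ ball in $\R^N$) more explicitly than the paper does. The only caution is that your fix leans on a ``perturbation-set-agnostic'' form of the halfspace theorem in \cite{cullina2018pac}, whereas their result is stated for a fixed, translation-invariant convex perturbation set; the lifted perturbation sets $\phi(B^n_\infty(x,\delta))-\phi(x)$ are point-dependent and non-convex, so this invocation is exactly the same leap the paper's one-line justification makes, rather than something their theorem covers verbatim.
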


% !TEX root = paper.tex

\section{Finding Adversarial Examples using Polynomial Optimization}
\label{sec:optimization}

%In this section we instantiate the algorithmic framework developed in Section~\ref{sec:upper-bound} and design polynomial time robust learning algorithms for the classes of degree-$1$ and degree-$2$ PTFs, by establishing admissibility of these classes.
In this section we introduce the broad class of polynomial optimization problems which are useful in designing adversarial (test-time) examples with provable guarantees for polynomial threshold functions (PTFs), and depth-$2$ neural networks with RELU gates. These polynomial optimization problems are generalizations of well-studied combinatorial optimization problems like the {\em Groth\"endieck problem} and computing operator norms of matrices. We then design algorithms with provable guarantees for some of these classes.
Proposition~\ref{prop:advexamples} restated below illustrates the connection and motivates the family of optimization problems that arise when designing algorithms with provable guarantees for finding adversarial examples for $sgn(g(x))$. While our theory below is stated for binary classifiers, it is easily extended to multiclass classification.

% \begin{proposition}\label{prop:advexamples}
% Suppose for some $\gamma \ge 1$ there is a polynomial time algorithm $\textsc{ALG}_\gamma$, that given $x$ finds a $\widehat{z}$ such that
% $$ g(x+\widehat{z}) \ge \max_{z: \norm{z}_\infty \le \delta} g(x+z), ~\text{ and  }\norm{\widehat{z}}_\infty \le \gamma \delta.$$
% Then there is a polynomial time algorithm that given a classifier $sgn(g(x))$ and a point $x^*$, either finds an adversarial example in the $\ell_\infty$ neighborhood of $\gamma \delta$ around $x^*$,  or else certifies that there is no adversarial example in the $\ell_\infty$ neighborhood of $\delta$ around $x^*$.
% \end{proposition}
%\begin{proposition}\label{prop:advexamples}
%Let $\gamma\ge 1$. There is a polynomial time algorithm that given a classifier $sgn(f(x))$ and a point $x^*$, guarantees to either (a) find an adversarial example in the $\ell_\infty$ ball of $\gamma \delta$ around $x^*$,  or (b) certify the absence of any adversarial example in the $\ell_\infty$ ball of $\delta$ around $x^*$, {\em assuming }
%there is a polynomial time algorithm $\textsc{ALG}_\gamma$, that given $x$ and a polynomial $g(z) \in \set{f(x^*+z),-f(x^*+z)}$ finds a $\widehat{z}$ such that
%$ g(\widehat{z}) \ge \max_{z: \norm{z}_\infty \le \delta} g(z), ~\text{ and  }\norm{\widehat{z}}_\infty \le \gamma \delta$.
%\end{proposition}
\begin{proposition}\label{prop:advexamples}
Let $\gamma\ge 1$. There is an efficient algorithm that given a classifier $sgn(f(x))$ and a point $x^*$, and budget $\delta > 0$, guarantees to either (a) find an adversarial example in $B_\infty^n(x^*,\gamma\delta)$,  or (b) certify the absence of any adversarial example in $B_\infty^n(x^*, \delta)$, given access to an efficient optimization algorithm that takes $x^*$ and a polynomial $g(z) \in \set{f(x^*+z),-f(x^*+z)}$ as input and finds a $\widehat{z}$ such that
$ g(\widehat{z}) \ge \max_{\norm{z}_\infty \le \delta} g(z)$ with $\norm{\widehat{z}}_\infty \le \gamma \delta$.
\end{proposition}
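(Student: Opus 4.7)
The plan is to reduce the existence of an adversarial example to the sign of a polynomial maximum, and then invoke the given oracle to either exhibit a witness inside the enlarged ball $B_\infty^n(x^*, \gamma \delta)$ or rule one out inside $B_\infty^n(x^*, \delta)$. The natural first step is to branch on $\sgn(f(x^*))$. If $f(x^*) \ge 0$, then an adversarial example is a $z$ with $\|z\|_\infty \le \delta$ and $f(x^*+z) < 0$, equivalently $-f(x^*+z) > 0$, so I would set $g(z) := -f(x^*+z)$. Symmetrically, if $f(x^*) < 0$, then an adversarial example requires $f(x^*+z) \ge 0$, and I would set $g(z) := f(x^*+z)$. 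In both cases $g$ has the form allowed by the oracle's input specification.

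Next I would invoke the optimization oracle on $g$ and $x^*$ with budget $\delta$ to obtain $\widehat{z}$ satisfying $\|\widehat{z}\|_\infty \le \gamma \delta$ and $g(\widehat{z}) \ge \mathrm{OPT} := \max_{\|z\|_\infty \le \delta} g(z)$. The algorithm's decision rule is then straightforward: in the first case ($f(x^*)\ge 0$), output $x^* + \widehat{z}$ as an adversarial example if $g(\widehat{z}) > 0$; otherwise output ``no adversarial example in $B_\infty^n(x^*,\delta)$''. In the second case ($f(x^*) < 0$), output $x^* + \widehat{z}$ if $g(\widehat{z}) \ge 0$; otherwise certify absence.

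For correctness, when the algorithm outputs $\widehat{x} := x^* + \widehat{z}$ in case~(a), the defining inequality on $g(\widehat{z})$ forces $\sgn(f(\widehat{x})) \ne \sgn(f(x^*))$, and $\widehat{x} \in B_\infty^n(x^*, \gamma\delta)$ by the oracle's guarantee. When the algorithm outputs case~(b), we have $\mathrm{OPT} \le g(\widehat{z})$, and the inequality on $g(\widehat{z})$ (strict in the first branch, non-strict in the second) rules out the existence of any $z \in B_\infty^n(0,\delta)$ for which $\sgn(f(x^*+z)) \ne \sgn(f(x^*))$. This yields the certificate in~(b).

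I do not expect a genuine technical obstacle here: the content of the proposition is a careful reduction rather than a hard argument. The only delicate point is the strict versus non-strict inequality arising from the convention $\sgn(0) = +1$. I would handle it by symmetrizing via $g \in \{f(x^*+\cdot), -f(x^*+\cdot)\}$ as above and being explicit about the threshold ($g(\widehat{z}) > 0$ when the starting sign is $+1$, and $g(\widehat{z}) \ge 0$ when it is $-1$), so that the $\mathrm{OPT}$ bound from the oracle translates cleanly into the non-existence of any sign flip in $B_\infty^n(x^*,\delta)$ in the rejection case.
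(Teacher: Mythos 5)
Your proposal is correct and matches the paper's own argument: both set $g(z)=-y^*f(x^*+z)$ with $y^*=\sgn(f(x^*))$ (your two branches are exactly this), invoke the oracle, and use $g(\widehat z)\ge \max_{\|z\|_\infty\le\delta}g(z)$ to conclude that a sign flip is either exhibited in the $\gamma\delta$-ball or impossible in the $\delta$-ball. Your explicit handling of the strict versus non-strict threshold under the convention $\sgn(0)=+1$ is slightly more careful than the paper's write-up, but it is the same reduction.
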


\begin{proof}[Proof of Proposition~\ref{prop:advexamples}]
Let $\textsc{ALG}_\gamma$ be the optimization algorithm.
Suppose there exists an adversarial example $x^*+z^*$ with $\norm{z^*}_\infty \le \delta$, and let $y^*:=sgn(f(x^*))$ be the label for the point $x^*$. Then we have that $\max_{z : \norm{z}_\infty \le \delta} (-y^*)  f(x^*+z) >(-y^*) f(x^*+z^*) > 0$. Now for $g(z)=-y^*f(x^*+z)$ (a polynomial in $z$),  we get that $\textsc{ALG}_\gamma$ finds a point $\widehat{z}$ with $\norm{\widehat{z}}_\infty \le \gamma \delta$ that also satisfies $(-y^*) f(x^*+\widehat{z}) >0$ i.e.,  $sgn(f(x^*)) \ne sgn(f(x^*+\widehat{z}))$, as required. Furthermore, if $\textsc{ALG}_\gamma$ fails, i.e., outputs a $\hat{z}$ such that $(-y^*) f(x^*+\widehat{z}) < 0$, then from the guarantee of the algorithm we know that $\max_{z : \norm{z}_\infty \le \delta} (-y^*) f(x^*+z) < 0$ and hence no adversarial example exists within a $\delta$ ball around $x^*$.
%and  represents an adversarial example in the $\ell_\infty$ neighborhood of $\gamma\delta$.
\end{proof}
The proposition above also holds for randomized algorithms. While the proof of the proposition only requires that the algorithm returns $\widehat{z}$ with $g(\widehat{z})>0$, it effectively requires that $\widehat{z}$ attains at least as large an objective value because the constant term can be arbitrary.
When the classifier is a degree-$d$ PTF of the form $sgn(f)$, it leads to the following approximate optimization problem: given as input a degree $d$ polynomial $g:\R^n \to \R$ (potentially different from $f$) and any $\eta, \delta >0$, find in time $\mathrm{poly}(n, \log(\frac 1 \eta))$ and w.p. at least $1-\eta$ a point $\hat{x}$ s.t.
   \begin{equation}
      g(\hat{x}) \geq \max_{x \in B^n_\infty(0,\delta)} g(x)
  \text{ and }  \widehat{x} \in B^n_{\infty}(0,\gamma \delta).
   \end{equation}

The above problem is closely related to the standard approximation variant of polynomial maximization problem where the goal is to obtain, in polynomial time, an objective value as close to the optimal one, without violating the $\|\|_\infty$ ball constraint. Instead, our problem asks for the same objective value at the cost of an increase in the radius of the optimization ball. \footnote{In approximation algorithms literature this will correspond to obtaining a $(1,\gamma)$-bicriteria approximation.}  This changes the flavor of the problem, and introduces new challenges particularly when the polynomial $g$ is non-homogenous.

%Then we use Theorem~\ref{thm:upper-bound-main} to get the corresponding learning algorithm. Notice that conditions $1$ and $2$ in the definition of admissibility~(See Definition~\ref{def:gamma-factor-admissible}) are easily satisfied by degree-$d$ PTFs for any fixed $d$. Hence our main goal in this section is to show that condition $3$ in Definition~\ref{def:gamma-factor-admissible} holds for our classes of interest.
We begin with the following simple claim about degree-$1$ PTFs.
\begin{claim}\label{thm:admissible-degree-1}
There is a deterministic linear-time algorithm that given any linear threshold function $sgn(b^Tx +c)$, a point $x^*$ and $\delta>0$, provably finds an adversarial example in the $\ell_\infty$ ball of $\delta$ around $x^*$ when it exists.
%  given any $n$-variate degree-$1$ polynomial $g(x)$ represented by $g(x):= b^T x + c$ where $b \in \R^n, c \in \R$, finds a solution $\widehat{x} \in B^n_\infty(0,\delta)$ such that
% %\begin{equation} \label{eq:admissible-degree-1}
% $g(\widehat{x}) = \max_{x \in B^n_\infty(0,\delta)} g(x)$.
% %\end{equation}
\end{claim}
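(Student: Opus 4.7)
The plan is to reduce the task to maximizing a linear function over an $\ell_\infty$-ball, which admits a trivial closed-form solution at a vertex of the box. Given the linear threshold function $\sgn(b^Tx+c)$, the point $x^*$, and the budget $\delta>0$, let $y^* := \sgn(b^Tx^*+c) \in \{-1,+1\}$ denote the current predicted label. An adversarial example $x^*+z$ with $\|z\|_\infty \le \delta$ exists if and only if the objective $g(z) := -y^*\bigl(b^T(x^*+z)+c\bigr)$ can be made (strictly) positive, since this is precisely the condition that flips the sign of $b^T(x^*+z)+c$ relative to $y^*$. Because $g$ is affine in $z$ and the only $z$-dependent term is $\sum_i (-y^* b_i)\,z_i$, the maximum of $g$ over $B^n_\infty(0,\delta)$ is attained at the vertex
\[
\widehat z_i \;:=\; -\,\delta\cdot y^* \cdot \sgn(b_i)
\]
with any fixed tie-breaking convention when $b_i = 0$. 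Both $y^*$ and $\widehat z$ can be computed in $O(n)$ time by a single pass over the coordinates.

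The full algorithm is then: compute $y^*$, compute $\widehat z$ as above, and output $x^*+\widehat z$ if $\sgn(b^T(x^*+\widehat z)+c)\ne y^*$, otherwise report that no adversarial example exists in $B^n_\infty(x^*,\delta)$. Correctness is immediate from the optimality of $\widehat z$: if $g(\widehat z) \le 0$, then $g(z) \le 0$ for every feasible $z$, so no $z$ with $\|z\|_\infty\le\delta$ can flip the sign of the classifier. Equivalently, this fits exactly into the hypothesis of Proposition~\ref{prop:advexamples} with $\gamma = 1$: the constructed $\widehat z$ satisfies $g(\widehat z)\ge \max_{\|z\|_\infty\le \delta} g(z)$ together with $\|\widehat z\|_\infty \le \delta$, so Proposition~\ref{prop:advexamples} immediately yields the claimed guarantee.

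I do not foresee any real obstacle here; the argument is essentially a one-line LP observation. The only points requiring minor care are the boundary conventions — namely, how $\sgn(0)$ is resolved when $b^Tx^*+c=0$ or when the classifier ends up exactly on the decision surface after perturbation, and how coordinates with $b_i=0$ are treated — but none of these affect the logic, since such coordinates contribute nothing to $b^Tz$ and the overall decision is determined entirely by the sign of $|b^Tx^*+c|-\delta\|b\|_1$.
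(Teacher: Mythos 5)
Your proof is correct and follows essentially the same route as the paper: invoke Proposition~\ref{prop:advexamples} with $\gamma=1$ and observe that the affine objective is maximized over $B^n_\infty(0,\delta)$ at the box vertex whose coordinate signs match those of the (signed) linear coefficients. Your version is slightly more explicit about the sign flip $-y^*$ and the tie-breaking conventions, but the underlying argument is identical.
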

\begin{proof}
We use Proposition~\ref{prop:advexamples} applied with linear functions. For linear function $g(x)$ represented by $g(x):= b^T x + c$ where $b \in \R^n, c \in \R$, we can easily find a solution $\widehat{x} \in B^n_\infty(0,\delta)$ such that
%\begin{equation} \label{eq:admissible-degree-1}
$g(\widehat{x}) = \max_{x \in B^n_\infty(0,\delta)} g(x)$.
This is because the linear form $b^T x + c$ is maximized within $B^n_\infty(0,\delta)$ by setting each variable $x_i$ to be $\delta$ if the corresponding $b_i \geq 0$, and $-\delta$, otherwise.
\end{proof}
%As a result of the above theorem we get the following result about robust learning of degree-$1$ PTFs.
%\begin{corollary}
%The class of degree-$1$ PTFs is optimally robustly learnable.
%\end{corollary}
As we will see in Section~\ref{sec:upper-bound}, this will further be used to give robust learning algorithms for linear threshold functions.
Our main theoretical result of this section gives an algorithm for provably finding adversarial examples for degree-$2$ PTFs.
%Our second result of this section establishes admissibility of degree-$2$ PTFs.
\begin{theorem}\label{thm:QP:algorithm}
%The class of degree-$2$ PTFs is $O(\sqrt{\log n})$-factor admissible i.e.,
For any $\delta, \eta>0$, there is a polynomial time algorithm that given a degree-$2$ PTF $sgn(f(x))$ and a example $(x^*,sgn(f(x^*)))$, guarantees at least one of the following holds with probability at least $(1-\eta)$:  (a) finds an adversarial example $(x^*+\widehat{z})$ i.e., $sgn(f(x^*)) \ne sgn(f(x^*+\widehat{z}))$, with $ \norm{\widehat{z}}_\infty \le C\delta \sqrt{\log n}$,  or (b) certifies that $\forall z: \norm{z}_\infty \le \delta$, $sgn(f(x^*))=sgn(f(x^*+z))$ for some constant $C>0$.
\end{theorem}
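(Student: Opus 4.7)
The plan is to use Proposition~\ref{prop:advexamples} to reduce the theorem to the following bicriteria optimization problem: given a degree-$2$ polynomial $g(z) = z^\top A z + b^\top z + c$ (obtained as $g(z) = -\sgn(f(x^*))\, f(x^*+z)$), output in polynomial time, with probability at least $1-\eta$, a point $\hat z$ satisfying $\|\hat z\|_\infty \leq C\delta\sqrt{\log n}$ and $g(\hat z) \geq \max_{\|z\|_\infty \leq \delta} g(z)$. This is a close relative of the $\ell_\infty$-quadratic maximization problem solved by Charikar--Wirth~\cite{charikar2004maximizing}, but with two key differences: (i) $g$ is \emph{non-homogeneous} because of the linear term $b^\top z$, and (ii) we must match the integral optimum value exactly while loosening the constraint by a factor $\sqrt{\log n}$, rather than approximating the value within the original constraint.

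First I would homogenize $g$ by introducing an auxiliary coordinate $z_0 \equiv 1$ and formulate an SDP over vectors $v_0, v_1, \ldots, v_n$ with $\|v_0\|^2 = 1$ and $\|v_i\|^2 \leq \delta^2$ for $i \geq 1$, maximizing $\sum_{ij} A_{ij} \langle v_i, v_j\rangle + \sum_i b_i \langle v_i, v_0\rangle + c$. Any feasible integral $z$ gives a feasible solution via $v_i = z_i v_0$, so the SDP optimum $V$ upper-bounds the true optimum. To round, decompose $v_i = \beta_i v_0 + w_i$ with $w_i \perp v_0$ and $|\beta_i|, \|w_i\| \leq \delta$. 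Sample a Gaussian vector $\mathbf{g}$ in the subspace orthogonal to $v_0$ and set $\hat z_i := \beta_i + \langle w_i, \mathbf{g}\rangle$. A second-moment computation gives $\E[\hat z_i] = \beta_i$ and $\E[\hat z_i \hat z_j] = \langle v_i, v_j\rangle$, whence $\E[g(\hat z)] = V$. The additive shift $\beta_i$ is exactly what ensures the non-homogeneous linear term is preserved in expectation.

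For the $\ell_\infty$ bound, each coordinate $\hat z_i$ is Gaussian with $|\text{mean}| \leq \delta$ and standard deviation $\leq \delta$, so by Gaussian tail bounds and a union bound over the $n$ coordinates, $\|\hat z\|_\infty \leq O(\delta\sqrt{\log(n/\eta)})$ with probability at least $1-\eta/2$. For the objective, I would exploit a symmetrization trick: the reflected point $\beta - \xi$ (where $\xi := \hat z - \beta$) has the same distribution as $\hat z$, and averaging $g(\beta+\xi)$ and $g(\beta-\xi)$ cancels the linear-in-$\xi$ contributions, yielding $\beta^\top A \beta + \xi^\top A \xi + b^\top \beta + c$. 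Hence $\max(g(\beta+\xi), g(\beta-\xi)) \geq V$ whenever $\xi^\top A \xi \geq \E[\xi^\top A \xi]$; the latter holds with constant probability because $\xi^\top A \xi$ is a quadratic form in a Gaussian, whose distribution places constant mass above its mean. Running the rounding $O(\log(1/\eta))$ times and keeping the $\hat z$ maximizing $g(\hat z)$ boosts the success probability to $1-\eta/2$; applying Proposition~\ref{prop:advexamples} then yields either an adversarial example in $B_\infty^n(x^*, C\delta\sqrt{\log n})$ or a certificate of nonexistence in $B_\infty^n(x^*, \delta)$.

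The main obstacle is proving a universal constant lower bound on $\Pr[\xi^\top A\xi \geq \E[\xi^\top A\xi]]$ when the effective matrix $WAW^\top$ (with $W$ the matrix whose columns are the $w_i$) is \emph{indefinite}: a naive Paley--Zygmund argument requires non-negativity. I would address this by splitting the spectral decomposition of $WAW^\top$ into its positive and negative parts and arguing separately, or by accepting a constant-factor loss in the objective that can be absorbed into the $\sqrt{\log n}$ slack on the $\ell_\infty$ constraint by rescaling $\hat z$. Ensuring that the rescaling and the preservation of the linear term $b^\top\beta$ remain mutually compatible -- which is where the careful design of both the SDP relaxation and the Gaussian rounding matter -- is the central technical challenge, and is the step where the bicriteria formulation (allowing the $\sqrt{\log n}$ violation of the constraint) pays off, distinguishing the analysis from the standard Charikar--Wirth value-approximation argument.
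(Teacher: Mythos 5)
Your proposal follows essentially the same route as the paper: the same reduction via Proposition~\ref{prop:advexamples}, the same homogenized SDP relaxation with a unit vector $u_0$ playing the role of the constant coordinate, the identical rounding $\hat z_i = \langle u_i,u_0\rangle + \langle u_i^{\perp},\zeta\rangle$, the same second-moment computation giving $\E[g(\hat z)]=\sdpval \ge \max_{\|z\|_\infty\le\delta}g(z)$, and the same Gaussian tail plus union bound for the $\ell_\infty$ violation of $O(\sqrt{\log n})\,\delta$.

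The one place you diverge is the final step, where you need $g(\hat z)$ to exceed its expectation with constant probability. The paper handles this in one line by invoking hypercontractivity of low-degree polynomials (Theorem 10.23 of \cite{Ryanbook}): since $g(\hat z)$ is a degree-$2$ polynomial of the Gaussian vector $\zeta$, $\Pr[g(\hat z)\ge \E g(\hat z)]\ge \Omega(1)$ regardless of the sign structure. Your symmetrization trick is a correct and somewhat more hands-on reduction of the same statement to anti-concentration of the centered Gaussian chaos $\xi^\top A\xi - \E[\xi^\top A\xi]$, but the "main obstacle" you identify -- indefiniteness of the effective quadratic form -- is not actually an obstacle: after diagonalizing, the chaos is a sum of independent centered variables $\lambda_i(g_i^2-1)$, and the standard fourth-versus-second moment comparison (i.e.\ $\Pr[S\ge 0]\ge \tfrac14 (\E S^2)^2/\E S^4$ for centered $S$, with $\E S^4 \le C(\E S^2)^2$ by hypercontractivity or direct computation) gives the constant-probability bound for arbitrary sign patterns of the $\lambda_i$. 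So your primary plan closes cleanly. Your fallback plan, however, does not: "accepting a constant-factor loss in the objective absorbed by rescaling $\hat z$" fails because $g$ is non-homogeneous with an arbitrary constant term, and the reduction in Proposition~\ref{prop:advexamples} genuinely requires $g(\hat z)\ge \max_{\|z\|_\infty\le\delta}g(z)$, not a constant-factor approximation of it -- this is exactly the point the paper emphasizes when distinguishing the bicriteria formulation from value-approximation. Stick with the symmetrization-plus-moment argument (or cite hypercontractivity directly) and the proof is complete.
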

%There is a randomized algorithm that given any $\delta>0$, and any $n$-variate degree-$2$ polynomial $g(x)$ represented by $g(x):= x^T A x + b^T x + c$ where $A \in \R^{n \times n}, b \in \R^n, c \in \R$, finds in polynomial time a solution $\widehat{x}$ such that
%\begin{equation} \label{eq:QP:Exp}
%\E[g(\widehat{x})] \ge \max_{\norm{x}_\infty \le \delta} g(x), ~~\text{ and } \Pr\Big[\norm{\widehat{x}}_\infty \le O(\sqrt{\log n}) \cdot \delta \Big] \ge 1- \frac{1}{n^2} .
%\end{equation}
To establish the above theorem using Proposition~\ref{prop:advexamples}, we need to design a polynomial time algorithm that given any degree-$2$ polynomial $g(x)=x^T A x + b^T x +c$ with $A \in \R^{n \times n}, b \in \R^n, c \in \R$, finds a solution $\widehat{x}$ with $\norm{\widehat{x}}_\infty \le O(\sqrt{\log n})\cdot \delta$ such that
$g(\widehat{x}) \ge \max_{\norm{x}_\infty \le \delta} g(x)$.
\iffalse
\begin{remark}\label{rem:QP}
Obtaining an admissibility factor of $O(\gamma)$ for the special case of homogeneous degree-$2$ polynomials $\sum_{i < j=1}^n a_{ij} x_i x_j$ with no diagonal entries ($a_{ii}=0 ~\forall i \in [n]$) over $\norm{x}_\infty \le \delta$ is equivalent to obtaining a $O(\gamma^2)$-factor approximation algorithm for the problem called Quadratic Programming (QP) which maximizes $\sum_{i<j=1}^n a_{ij} x_i x_j$ over $x \in \set{-1,1}^n$ (this is also called the Grothendieck problem on complete graphs). The best known approximation algorithm for Quadratic Programming (QP) gives an $O(\log n)$-factor approximation in polynomial time~\cite{nesterov1998semidefinite, charikar2004maximizing}. Further~\cite{arora2005non} showed that it is hard to approximate QP within a $O(\log^{c} n)$ for some universal constant $c>0$ assuming NP does not have quasi-polynomial time algorithms. Moreover integrality gaps for SDP relaxations~\cite{alon2006quadratic, khot2006sdp} suggest that $O(\log n)$ factor maybe be tight for polynomial time algorithms. Hence even for the special case of homogeneous degree-$2$ polynomials, improving upon the bound of $\sqrt{\log n}$ in the admissibility factor seems unlikely. % degree-$2$ polynomials are $o(\sqrt{\log n})$-factor admissible would imply
\end{remark}
\fi

\begin{figure}
\begin{center}
\fbox{\parbox{0.98\textwidth}{
\begin{enumerate}[topsep=0pt,itemsep=-1ex,partopsep=1ex,parsep=1ex]
\item Given $(A,b,c)$ that defines the polynomial $g(z):= z^T A z+ b^T z +c$.
\item Solve the SDP given by following vector program:\\
$\max ~ \sum_{i,j} A_{ij} \iprod{u_i, u_j}+ \sum_{i} b_i \iprod{u_i, u_0} + c$ subject to $\norm{u_i}^2_2 \le \delta^2 ~\forall i \in [n]$,  $\norm{u_0}_2^2=1$.
%\begin{align}
%\max_{\set{u_0, u_1, \dots, u_n}} ~~& \sum_{i,j=1}^n A_{ij} \iprod{u_i, u_j}+ \sum_{i=1}^n b_i \iprod{u_i, u_0} + c \nonumber \\
%\text{s.t.} ~~& \norm{u_i}^2_2 \le \delta^2 ~~\forall i \in \set{1,2,\dots,n},  \text{ and }\norm{u_0}_2^2=1. \nonumber
%\end{align}
%\vspace{-10pt}
\item Let $u_i^{\perp}$ represent the component of $u_i$ orthogonal to $u_0$. Draw $\zeta \sim N(0, I)$ a standard Gaussian vector, and set $\widehat{z}_i  := \iprod{u_i,u_0}+\iprod{u^{\perp}_i,\zeta}$ for each $i \in \set{0,1,\dots,n}$.
%Consider the following randomized rounding algorithm that returns a solution $\set{\widehat{x}_i: i \in [n]}$ :
%\begin{align}
%\forall i \in \set{0,1,\dots, n},~~\widehat{z}_i & := \iprod{u_i,u_0}+%\iprod{u^{\perp}_i,\zeta}\label{eq:roundingQP}
%\end{align}
\item Repeat rounding $O(\log (1/\eta))$ random choices of $\zeta$ and pick the best choice.
\end{enumerate}
}}
\vspace{-10pt}
\end{center}
\caption{\label{ALG:SDPQP} The SDP-based algorithm for the degree-$2$ optimization problem.}
\vspace{-15pt}

\end{figure}

To prove the theorem we use a semi-definite programming (SDP) based algorithm shown in Figure~\ref{ALG:SDPQP}, that is directly inspired by the SDP-based algorithm for quadratic programming (\textsc{QP}) by~\cite{nesterov1998semidefinite, charikar2004maximizing}. However, the goal in quadratic programming is to find an assignment $x \in \set{-1,1}^n$ that maximizes $\sum_{i \ne j} a_{ij} x_i x_j$. There are three main differences from the QP problem.
Firstly, unlike \textsc{QP} which finds a solution with $\norm{x}_\infty = 1$ with sub-optimal objective value, our goal is to output a solution which attains at least as large a value as $\max_{\norm{x}_\infty \le \delta }g(x)$ while violating the $\ell_\infty$ length of the vector. %(this corresponds to the alternate objective when viewed as a ``bicriteria approximation'' problem).
Secondly, unlike QP where the diagonal terms are all $0$, in our problem the diagonal terms can be non-zero and hence it is no longer true that the solution with $\norm{x}_\infty \le 1$ will have each co-ordinate being $\set{\pm 1}$. Finally and most crucially, \textsc{QP} corresponds to optimizing a homogeneous degree $2$ polynomial, with no linear term. %In fact, when the degree-$2$ polynomial is either not homogeneous or when $a_{ii} \ne 0$, it is known that {\em no finite approximation} is possible assuming $P \ne NP$ for maximizing degree-$2$ polynomials over $\set{\pm 1}^n$ (since the optimal $\set{\pm 1}^n$ solution can be the only solution of value $0$)!
These challenges necessitates non-trivial modifications to the algorithm and in the analysis. We also remark that it seems unlikely that the upper bound of $O(\sqrt{\log n})$ on the approximation factor can be improved even for the special case of homogenous degree-$2$ polynomials, based on the current state of the approximability of Quadratic Programming (see Remark~\ref{rem:QP} for details).
\anote{Some point about how homogenizing is not obvious.}

The SDP we consider is given by the following equivalent vector program (the SDP variables correspond to $X_{ij}=\iprod{u_i, u_j}$), which can be solved in polynomial time up to arbitrary additive error (using the Ellipsoid algorithm).
\begin{align}
\max_{\set{u_0, u_1, \dots, u_n}} ~~& \sum_{i,j=1}^n A_{ij} \iprod{u_i, u_j}+ \sum_{i=1}^n b_i \iprod{u_i, u_0} + c \label{eq:QP:SDP}\\
\text{s.t.} ~~& \norm{u_i}^2_2 \le \delta^2 ~~\forall i \in \set{1,2,\dots,n},  \text{ and }\norm{u_0}_2^2=1. \label{eq:const2}
\end{align}
Let $\sdpval$ denote the optimal value of the above SDP relaxation.
Clearly the above SDP is a valid relaxation of the problem; for any valid solution $x \in [-\delta, \delta]^n$, consider the solution given by $\big( u_i = x_i u_0: i \in [n])$ for any unit vector $u_0$. Hence $\sdpval \ge \max_{\norm{x}_\infty \le \delta} g(x)$.  Moreover, when the SDP value $\sdpval$ is negative, this certifies that the classifier is robust around the give sample $x^*$. We prove Theorem~\ref{thm:QP:algorithm} by designing a polynomial time rounding algorithm that takes the SDP solution and obtained a valid $\hat{z}$ satisfying the requirements of the theorem.

%  with the following guarantee.
% \begin{lemma}\label{lem:QP:algorithm}
% There is a polynomial time randomized rounding algorithm that takes as input the solution of the SDP as defined in Equations~\ref{eq:QP:SDP}, \ref{eq:const1}, and \ref{eq:const2}, and outputs a solution $\widehat{x}$ such that
% \begin{equation}\label{eq:QP:highprob}
% \Pr_{\widehat{x}}\Big[g(\widehat{x}) \ge \max_{\norm{x}_\infty \le \delta} g(x) \text{ and } \norm{\widehat{x}}_\infty \le  O (\sqrt{\log n}) \cdot \delta  \Big] \ge  \Omega(1).
% \end{equation}
% \end{lemma}
% Assuming \eqref{eq:QP:highprob}, we can repeat the algorithm at least $O(\log (1/\eta) \log n)$ times to get the guarantee of Theorem~\ref{thm:QP:algorithm}.

\vspace{5pt}

\noindent {\bf Rounding Algorithm.} Given the SDP solution, let $u_i^{\perp}$ represent the component of $u_i$ orthogonal to $u_0$. Consider the following randomized rounding algorithm that returns a solution $\set{\widehat{x}_i: i \in [n]}$ :
\begin{align}
\forall i \in \set{0,1,\dots, n},~~\widehat{x}_i & := \iprod{u_i,u_0}+ \iprod{u_i, \zeta} = \iprod{u_i,u_0}+\iprod{u^{\perp}_i,\zeta},
\text{ with } \zeta \sim N\Big(0, \Pi^{\perp}\Big),\label{eq:roundingQP}
\end{align}
where $\Pi^{\perp}$ is the projection matrix onto the subspace of $span(\set{u_1,\dots,u_n})$ that is orthogonal to $u_0$. For convenience, we can assume without loss of generality that $u_0 = e_0$, where $e_0$ is a standard basis vector, and $u_i \in \R^{n+1}$. Let $e_0,e_1, \dots, e_n$ represent an orthogonal basis for $\R^{n+1}$.
%The Gaussian random vector $g \in \R^{n+1}$ that is orthogonal to $u_0$ ($e_0$), and has mean $0$ and variance $1$ in every direction orthogonal to $e_0$. Let $u_i^{\perp}$ represent the component of $u_i$ orthogonal to $u_0$. Consider the following randomized rounding algorithm that outputs a solution $\set{\widehat{x}_i: i \in [n]}$ as follows
Then %$\forall i \in \set{0,1,\dots, n}$,
\begin{align*}
\forall i \in \set{0,1,\dots, n},~ \widehat{x}_i & = \iprod{u_i,u_0}+\iprod{u^{\perp}_i,\zeta}
\text{ where } \iprod{\zeta, e_0}=0, ~ \iprod{\zeta, v} \sim N(0,\norm{v}_2^2) \text{ for every } v \perp e_0,
\end{align*}
and $\widehat{x}_0=1$.
%We now give the analysis of the algorithm.
%The key observation in the analysis of the rounding algorithm is that
%$\forall i, j \in \set{0,\dots,n}$,
%\begin{align}
%\E\big[\widehat{x}_i \widehat{x}_j \big] &= \E_\zeta\Big[ \Big(\iprod{u_i,u_0}+\iprod{u_i^{\perp},\zeta} \Big) \Big(\iprod{u_j,u_0}+\iprod{u_j^{\perp},\zeta} \Big) \Big]=\iprod{u_i,u_0} \iprod{u_j,u_0}+ \E_{\zeta}\Big[\iprod{u_i^{\perp},\zeta}\iprod{ u_j^{\perp},\zeta} \Big] \nonumber
%\\
%&= \iprod{u_i,u_0} \iprod{u_j,u_0}+ \iprod{u_i^{\perp}, u_j^{\perp}}=\iprod{u_i, u_j}, \end{align}
%while $\E[\widehat{x}_i]= \iprod{u_i,u_0}$. Note that this also holds when $i=j$. This allows us to preserve the objective value in expectation.
The rounding algorithm just tries $O(\log (1/\eta))$ independent random draws for $\zeta$, and picks the best of these solutions.
%We defer the rest of the details of the proof of Lemma~\ref{lem:QP:algorithm} and analysis to the appendix (see Section~\ref{app:optimization}).

We now give the analysis of the algorithm.
We prove Theorem~\ref{thm:QP:algorithm} by showing the following guarantee for the rounding algorithm.

\begin{lemma}\label{lem:QP:algorithm}
There is a polynomial time randomized rounding algorithm that takes as input the solution of the SDP as defined in Equations~\ref{eq:QP:SDP}, and \ref{eq:const2}, and outputs a solution $\widehat{x}$ such that
\begin{equation} \label{eq:QP:highprob}
\Pr_{\widehat{x}}\Big[g(\widehat{x}) \ge \max_{\norm{x}_\infty \le \delta} g(x) \text{ and } \norm{\widehat{x}}_\infty \le  O (\sqrt{\log n}) \cdot \delta  \Big] \ge  \Omega(1).
\end{equation}
\end{lemma}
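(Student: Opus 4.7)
The plan is to prove the lemma in three stages: verify that the rounding preserves the SDP value in expectation, show that the random objective value exceeds its mean with constant probability using Gaussian hypercontractivity, and control the $\ell_\infty$ norm of the rounded vector via standard Gaussian tail bounds.

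First I would carry out the first-moment computation. Write $\widehat{x}_i = m_i + g_i$ where $m_i := \iprod{u_i, u_0}$ is deterministic with $|m_i| \le \delta$ and $g_i := \iprod{u_i^\perp, \zeta}$ is a centered Gaussian with $\E[g_i g_j] = \iprod{u_i^\perp, u_j^\perp}$. Using the orthogonal decomposition $\iprod{u_i, u_j} = m_i m_j + \iprod{u_i^\perp, u_j^\perp}$, a direct calculation gives $\E[\widehat{x}_i \widehat{x}_j] = \iprod{u_i, u_j}$ and $\E[\widehat{x}_i] = \iprod{u_i, u_0}$; combined with $\widehat{x}_0 = 1 = \iprod{u_0,u_0}$, linearity of expectation then yields $\E[g(\widehat{x})] = \sdpval$.

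The main obstacle is moving from equality in expectation to a constant probability lower bound on the event $g(\widehat{x}) \ge \sdpval$. I would exploit the fact that $\widehat{x}$ is an affine function of $\zeta$, so $Y := g(\widehat{x}) - \sdpval$ is a centered polynomial of degree $2$ in Gaussians. By Gaussian hypercontractivity (Bonami-Beckner-Gross), the fourth moment satisfies $\E[Y^4] \le C_0 \cdot (\E[Y^2])^2$ for an absolute constant $C_0$ (one may take $C_0 = 81$). I then combine Cauchy--Schwarz, $\E[Y_+]^2 \le \E[Y_+^2] \cdot \Pr[Y > 0] \le \E[Y^2] \cdot \Pr[Y > 0]$, with the identity $\E[Y_+] = \tfrac{1}{2}\E[|Y|]$ (a consequence of $\E[Y]=0$) and the H\"older inequality $\E[|Y|]^2 \ge \E[Y^2]^3 / \E[Y^4]$, to obtain $\Pr[Y \ge 0] \ge \E[Y^2]^2 / (4 \E[Y^4]) \ge 1/(4C_0) = \Omega(1)$ whenever $\mathrm{Var}(Y) > 0$ (the zero-variance case is trivial since then $Y \equiv 0$). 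The moral is that a centered random variable with bounded kurtosis lies above zero with constant probability; hypercontractivity of degree-$2$ Gaussian chaos is exactly what supplies this bounded kurtosis.

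For the $\ell_\infty$ bound, each $g_i$ is centered Gaussian with variance $\|u_i^\perp\|_2^2 \le \|u_i\|_2^2 \le \delta^2$, so a standard Gaussian tail bound together with a union bound over $i \in [n]$ yields $\max_i |g_i| \le C\sqrt{\log n} \cdot \delta$ with probability $1 - o(1)$ for a sufficiently large constant $C$. Combined with $|m_i| \le \|u_i\|_2 \|u_0\|_2 \le \delta$ (Cauchy--Schwarz), this gives $\|\widehat{x}\|_\infty \le (1 + C\sqrt{\log n}) \delta = O(\sqrt{\log n}) \cdot \delta$ with high probability. A final union bound combines the constant-probability event $\{g(\widehat{x}) \ge \sdpval\}$ with the high-probability $\ell_\infty$ event, and since the SDP is a valid relaxation, $\sdpval \ge \max_{\|x\|_\infty \le \delta} g(x)$, so both requirements of the lemma are met with probability $\Omega(1)$, as claimed.
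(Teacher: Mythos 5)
Your proposal is correct and follows essentially the same route as the paper's proof: compute $\E[\widehat{x}_i\widehat{x}_j]=\iprod{u_i,u_j}$ so that $\E[g(\widehat{x})]=\sdpval\ge\max_{\norm{x}_\infty\le\delta}g(x)$, bound $\norm{\widehat{x}}_\infty$ by Gaussian tails plus a union bound, and use hypercontractivity of degree-$2$ Gaussian chaos to get $\Pr[g(\widehat{x})\ge\E g(\widehat{x})]=\Omega(1)$. The only difference is that you derive the constant-probability anticoncentration explicitly from the fourth-moment bound via a Paley--Zygmund-type argument, whereas the paper cites it directly (Theorem~10.23 of \cite{Ryanbook}); your derivation is a valid proof of that cited fact.
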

Assuming \eqref{eq:QP:highprob}, we can repeat the algorithm at least $O(\log (1/\eta))$ times to get the guarantee of Theorem~\ref{thm:QP:algorithm}.

\begin{proof}[Proof of Lemma~\ref{lem:QP:algorithm}]
We start with a simple observation that follows from the standard properties of spherical Gaussians. For any $i,j \in [n]$, we have $\E_\zeta[\iprod{u_i^{\perp}, \zeta}\iprod{u_j^{\perp}, \zeta}]=(u_i^{\perp})^T \Pi^{\perp} u_j^{\perp}=\iprod{u_i^{\perp}, u_j^{\perp}}$. Hence we get the key observation that for $\forall i, j \in \set{0,\dots,n}$,
\begin{align}
\E\big[\widehat{x}_i \widehat{x}_j \big] &= \E_\zeta\Big[ \Big(\iprod{u_i,u_0}+\iprod{u_i^{\perp},\zeta} \Big) \Big(\iprod{u_j,u_0}+\iprod{u_j^{\perp},\zeta} \Big) \Big]=\iprod{u_i,u_0} \iprod{u_j,u_0}+ \E_\zeta\Big[\iprod{u_i^{\perp},\zeta}\iprod{ u_j^{\perp},\zeta} \Big] \nonumber
\\
&= \iprod{u_i,u_0} \iprod{u_j,u_0}+ \iprod{u_i^{\perp}, u_j^{\perp}}=\iprod{u_i, u_j}. \label{eq:preserve-expectation}
\end{align}
Note that this also holds when $i=j$. We now consider the expected value of $g(\widehat{x})$. Using \eqref{eq:preserve-expectation}, $\widehat{x}_0=1$ and  since $\E_\zeta[\iprod{u^{\perp}_i,\zeta}]=0$, we have
\begin{align}
\E[g(\widehat{x})]& = \sum_{i,j=1}^n A_{ij} \E_\zeta\Big[\widehat{x}_i \widehat{x}_j \Big]+ \sum_{i=1}^n b_i \E_\zeta[ \widehat{x}_i \widehat{x}_0]+ c \E_\zeta[\widehat{x}_0^2] \nonumber\\
&=  \sum_{i,j=1}^n A_{ij} \iprod{u_i, u_j} + \sum_{i=1}^n b_i \iprod{u_i,u_0} + c \norm{u_0}_2^2 = \sdpval \label{eq:preserve-obj}.
\end{align}

We now show that $\widehat{x}_i \le O(\sqrt{\log n})\cdot\delta$ w.h.p. For each fixed $i \in \set{1,\dots,n}$, $\iprod{u^{\perp}_i,\zeta}$ is distributed as a Gaussian with mean $0$ and variance $\norm{u^{\perp}}_2^2 \le \delta^2$ ,
\begin{align*}
\abs{\widehat{x}_i} &\le \abs{\iprod{u_i , u_0}}+ \abs{\iprod{u^{\perp}_i,\zeta}} \le \delta + \abs{\iprod{u^{\perp}_i,\zeta}} \le \sqrt{C \log n} \cdot \delta\text{ with probability at least } 1- 1/n^{C/2},
\end{align*}
using standard tail properties of Gaussians. Hence, using a union bound over all $i \in [n]$, we have that
\begin{equation} \label{eq:QP:Exp}
\E[g(\widehat{x})] \ge \max_{\norm{x}_\infty \le \delta} g(x), ~~\text{ and } \Pr\Big[\norm{\widehat{x}}_\infty \le O(\sqrt{\log n}) \cdot \delta \Big] \ge 1- \frac{1}{n^2} .
\end{equation}
for $C \ge 4$. Further note that $g(\widehat{x})$ can be expressed a degree-$d$ polynomial of the Gaussian vector $\zeta$.
%\anote{Don't use $g$ to represent both the Gaussian and the polynomial. }
Hence using hypercontractivity of low-degree polynomials (Theorem 10.23 of \cite{Ryanbook}), we have
\[ \Pr_{\zeta}\Big[ g(\widehat{x})  \ge \E_\zeta g(\widehat{x})\Big]  \ge \Omega(1).\]
Hence  \eqref{eq:QP:highprob} follows.
\end{proof}

\begin{remark}\label{rem:QP}
Obtaining an approximation factor of $O(\gamma)$ in the $\ell_\infty$ norm of $\hat{z}$, even for the special case of homogeneous degree-$2$ polynomials $\sum_{i < j=1}^n a_{ij} x_i x_j$ with no diagonal entries ($a_{ii}=0 ~\forall i \in [n]$) over $\norm{x}_\infty \le \delta$ is equivalent to obtaining a $O(\gamma^2)$-factor approximation algorithm for the problem called Quadratic Programming (QP) which maximizes $\sum_{i<j=1}^n a_{ij} x_i x_j$ over $x \in \set{-1,1}^n$ (this is also called the Grothendieck problem on complete graphs). The best known approximation algorithm for Quadratic Programming (QP) gives an $O(\log n)$-factor approximation in polynomial time~\cite{nesterov1998semidefinite, charikar2004maximizing}. Further~\cite{arora2005non} showed that it is hard to approximate QP within a $O(\log^{c} n)$ for some universal constant $c>0$ assuming NP does not have quasi-polynomial time algorithms. Moreover integrality gaps for SDP relaxations~\cite{alon2006quadratic, khot2006sdp} suggest that $O(\log n)$ factor maybe be tight for polynomial time algorithms. Hence even for the special case of homogeneous degree-$2$ polynomials, improving upon the bound of $\sqrt{\log n}$ in the approximation factor seems unlikely. % degree-$2$ polynomials are $o(\sqrt{\log n})$-factor admissible would imply
\end{remark}

% !TEX root = paper.tex

\section{From Adversarial Examples to Robust Learning Algorithms}
\label{sec:upper-bound}
\anote{Mention how the framework also applies to other norms.}

In this section we will show how to leverage the algorithms for finding adversarial examples to design polynomial time robust learning algorithms for various sub-classes of Polynomial Threshold Functions (PTF). In particular, these include general degree-$1$ and degree-$2$ polynomial threshold functions. We obtain our upper bounds by establishing a general algorithmic framework that relates robust learnability of PTFs to the polynomial maximization problem studied in Section~\ref{sec:optimization}.%: given a polynomial $g(x)$ such that $\max_{x \in B^n_\infty(0,\delta)} g(x) = v^*$, can one output in polynomial time, a point $\hat{x}$ in a larger $\|\|_\infty$ ball such that $g(\hat{x}) \geq v^*$.
This is formalized in the definition below:
\begin{definition}[$\gamma$-factor admissibility]
\label{def:gamma-factor-admissible}
For $\gamma \geq 1$, we say that a sub-class $\mathcal{F}$ of PTFs is $\gamma$-factor admissible if $\mathcal{F}$ has the following properties:
\begin{enumerate}[topsep=0pt,itemsep=0pt,partopsep=1ex,parsep=1ex]
\item For any $a,b,c \in \R, \sgn(f(x)), \sgn(g(x)) \in \mathcal{F}$, $\sgn(a f(x)+b g(x)+c) \in \mathcal{F}$.
%\item For any $a \in \mathbb{R}$ and $sgn(g(x)) \in \mathcal{F}$, we have that $sgn(a + g(x)) \in \mathcal{F}$, $sgn(ag(x)) \in \mathcal{F}$.
%\item For any $b \in \mathbb{R}^n$ and $sgn(g(x)) \in \mathcal{F}$, we have that $sgn(g(b) - g(x+b)) \in \mathcal{F}$.
\item For any $b \in \mathbb{R}^n$ and $sgn(g(x)) \in \mathcal{F}$, we have that $sgn(g(x+b)) \in \mathcal{F}$.
\item There is a $\gamma$-admissible approximation for $\set{g: sgn(g) \in \mathcal{F}}$.% i.e., there is an algorithm that, given as input $g(x)$ such that $sgn(g(x)) \in \mathcal{F}$, and any $\eta, \delta >0$, runs in time $\mathrm{poly}(n, \log(\frac 1 \eta))$, and with probability at least $1-\eta$, outputs a point $\hat{x} \in B^n_{\infty}(0,\gamma \delta)$ such that
%$g(\hat{x}) \geq \max_{x \in B^n_\infty(0,\delta)} g(x).$
\end{enumerate}
\end{definition}

The first two conditions above are natural and are satisfied by many sub-classes of PTFs. %In particular, the second condition allows us to shift the origin to any other point %(for example, homogenous polynomials do not satisfy the condition).
The third condition in the above definition concerns the optimization problem studied in Section~\ref{sec:optimization}. %that is closely related to the standard polynomial maximization problem where the goal is to obtain, in polynomial time, an objective value as close to the optimal one, without violating the $\|\|_\infty$ ball constraint. Instead, our problem asks for the same objective value at the cost of an increase in the radius of the optimization ball. In approximation algorithms literature this will correspond to obtaining a $(1,\gamma)$-bicriteria approximation.
The main result of this section, stated below, is the claim that any admissible sub-class of PTFs is also robustly learnable in polynomial time.
\begin{theorem}
\label{thm:upper-bound-main}
Let $\mathcal{F}$ be a sub-class of PTFs that is $\gamma$-factor admissible for $\gamma \geq 1$. Then $\mathcal{F}$ is $\gamma$-approximate robustly learnable.
\end{theorem}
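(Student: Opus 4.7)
The plan is to reduce robust learning to a convex feasibility problem in the coefficient space of the polynomials that generate $\mathcal{F}$, and then solve it with the Ellipsoid algorithm whose separation oracle comes from the $\gamma$-admissible adversarial example routine guaranteed by Proposition~\ref{prop:advexamples}, followed by uniform convergence (Lemma~\ref{lem:uniform-convergence}) to lift empirical guarantees to the population.

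\emph{Step 1 (Convex feasibility).} Draw $m = \poly(\Delta, 1/\epsilon, 1/\eta)$ i.i.d.\ training samples $(x_1,y_1),\dots,(x_m,y_m) \sim D$, and parameterize candidate classifiers $\sgn(p) \in \mathcal{F}$ by the coefficient vector of $p$. Define
\[
K \;=\; \Bigl\{p : \sgn(p) \in \mathcal{F},\ \forall i \in [m],\ \forall z \in B^n_\infty(0, \delta/\gamma),\ y_i\, p(x_i + z) \ge 0 \Bigr\}.
\]
Each constraint indexed by $(i,z)$ is a halfspace in the coefficient vector of $p$, so $K$ is a (possibly infinite) intersection of halfspaces and hence convex. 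By realizability there is $f^* = \sgn(p^*) \in \mathcal{F}$ with $err_{\delta,D}(f^*) = 0$, and in particular $p^*$ is $\delta$-robust on the entire sample, so $p^* \in K$. Closure under negation and translation (conditions 1--2 of Definition~\ref{def:gamma-factor-admissible}) will ensure that the polynomials we manipulate below always remain in the admissible class.

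\emph{Step 2 (Separation oracle and Ellipsoid).} Run the Ellipsoid algorithm on a bounded (normalized) superset of $K$. At the current center $p$, for each $i \in [m]$ invoke the $\gamma$-admissible optimization algorithm on the polynomial $g_i(z) := -y_i\, p(x_i + z)$ with budget $\delta/\gamma$; conditions 1--2 of admissibility ensure $g_i$ lies in the class handled by the optimizer. There are two cases for each $i$: if the algorithm returns $\widehat{z}$ with $\|\widehat{z}\|_\infty \le \delta$ and $g_i(\widehat{z}) > 0$, then the linear functional $q \mapsto y_i\, q(x_i + \widehat{z})$ is strictly negative at $p$, yet nonnegative at $p^*$ (since $\|\widehat{z}\|_\infty \le \delta$ and $p^*$ is $\delta$-robust on the sample), so this is a valid separating hyperplane; otherwise the algorithm certifies that no adversarial example for $\sgn(p)$ lies in $B^n_\infty(x_i, \delta/\gamma)$, i.e.\ $\sgn(p)$ is $\delta/\gamma$-robust at $(x_i,y_i)$. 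If every $i \in [m]$ falls into the second case, output $\sgn(p)$; this has zero $(\delta/\gamma)$-robust empirical error. Apply Lemma~\ref{lem:uniform-convergence} with the VC bound $\Delta = O(n^d)$ for the chosen $m$ to conclude $err_{\delta/\gamma,D}(\sgn(p)) \le \epsilon$ with probability $\ge 1-\eta/2$. Because the adversarial-example oracle may be randomized (Theorem~\ref{thm:QP:algorithm}), boost its success probability to $1 - \eta/(2mT)$ where $T$ is the polynomial Ellipsoid iteration count, and union bound across all oracle calls.

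\emph{Main obstacles.} The principal subtlety is the clean execution of Ellipsoid over $K$: scaling $p$ does not alter $\sgn(p)$, so $K$ is scale-invariant and may be thin or unbounded; I would resolve this by restricting to coefficient vectors of bounded norm and by relaxing strict robustness to a small positive margin, then arguing that $p^*$ (appropriately scaled) still lies in the interior so Ellipsoid terminates in polynomially many iterations with inverse-polynomial bit complexity. A secondary issue is verifying that the polynomial returned by Ellipsoid actually represents a classifier in $\mathcal{F}$; this follows because the feasibility set was defined inside the class, and conditions 1--2 ensure every intermediate polynomial handed to the oracle stays in the admissible class. Combining the separation oracle, polynomial Ellipsoid runtime, polynomial sample complexity from VC dimension, and uniform convergence then yields a $\gamma$-approximately robust learner and proves the theorem.
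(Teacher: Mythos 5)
Your proposal is correct and follows essentially the same route as the paper's proof: robust ERM is cast as a convex feasibility problem in coefficient space, the $\gamma$-admissible optimizer serves as an approximate separation oracle for the Ellipsoid method (with the key observation that a violation detected at radius $\delta/\gamma$ yields a hyperplane witnessed by some $\widehat{z}$ with $\norm{\widehat{z}}_\infty \le \delta$, hence never cutting off the realizable $p^*$), the oracle's failure probability is union-bounded over all $mT$ calls, and Lemma~\ref{lem:uniform-convergence} converts zero $(\delta/\gamma)$-robust empirical error into $err_{\delta/\gamma,D} \le \epsilon$. The only cosmetic difference is that the paper introduces explicit margin variables $r_i$ in its convex program (Figure~\ref{ALG:learn}) where you fold the margin into the normalization discussion.
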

\begin{remark}
While we state our upper bounds for perturbations measured in the $\ell_\infty$ norm, we would like to point out that one can define analogously $\gamma$-factor admissibility for any $\ell_p$ norm and the above theorem will still hold true with the new definition.
\end{remark}
\anote{Some statement about how we leave it as an open question, whether a similar statement holds for neural networks, and with SGD instead of Ellipsoid?}
%To establish Theorem~\ref{thm:upper-bound-main} we perform robust empirical risk minimization and use Lemma~\ref{lem:uniform-convergence} to get robust learnability. This entails finding a polynomial $g \in \mathcal{F}$ that correctly classifies all the training examples $(x_i, y_i)$. This corresponds to the constraint $y_i g(x_i)  >0$, a linear constraint in the unknown coefficients of the polynomial $g$. For example, if $g(x)$ is a degree-$2$ polynomial of the form $x^T A x + b^T x + c$, then the constraint $y_i g(x_i) > 0$ is linear in the unknown coefficients, $a_{i,j}, b_i$ and $c$, of the polynomial. Here $a_{i,j}$ corresponds to the $(i,j)$ entry of the matrix $A$ and $b_i$ is the $i$th coordinate of vector $b$. We also want to ensure that $g$ is robust around each point in the training set. These two constraints together can be enforced by the convex program in Figure~\ref{ALG:learn}, where the $r_i$'s are additional variables apart from the coefficients of $g$. Note that the set of all $g$ is convex because of condition 1 of Definition~\ref{def:gamma-factor-admissible}.
%While constraints in \eqref{eq:convex-1} are linear in the variables and easy to implement, \eqref{eq:convex-2} is really asking to check the robustness of $g$ at a given point $(x_i, y_i)$, which is an NP-hard problem~\cite{charikar2004maximizing}.

To learn a $g \in \mathcal{F}$ we formulate robust empirical risk minimization as a convex program, shown in Figure~\ref{ALG:learn}. Here we use the fact that the value of any polynomial $g$ of degree $d$ at a given point $x$ can be expressed as the inner product between the co-efficient vector of $g$ (denoted by $\text{coeff}(g) \in \R^{D}$) and an appropriate vector $\psi(x) \in \R^{D}$ where $D={n+d-1 \choose d}$. \anote{Added the line above.} Our goal is to find a polynomial $g \in \mathcal{F}$ that correctly classifies all the training examples $(x_i, y_i)$. This corresponds to the constraint $y_i g(x_i) >0$ expressed as $y_i \iprod{\text{coeff}(g),\psi(x)}  >0$, a linear constraint in the unknown coefficients $\text{coeff}(g)$ of the polynomial $g$. For example, if $g(x)$ is a degree-$2$ polynomial of the form $x^T A x + b^T x + c$, then the constraint $y_i g(x_i) > 0$ is linear in the unknown coefficients, $a_{i,j}, b_i$ and $c$, of the polynomial. Here $a_{i,j}$ corresponds to the $(i,j)$ entry of the matrix $A$ and $b_i$ is the $i$th coordinate of vector $b$. We also want to ensure that $g$ is robust around each point in the training set. These two constraints together can be enforced by the convex program in Figure~\ref{ALG:learn}, where the $r_i$'s are additional variables apart from the coefficients of $g$. Note that the set of all $g$ is convex because of condition 1 of Definition~\ref{def:gamma-factor-admissible}.
%It is easy to see that the constraints in the program above are linear in the coefficients of $g$. Furthermore, checking the validity of each constraint is really asking to check the robustness of $g$ at a given point $(x_i, y_i)$, which is an NP-hard problem~\cite{charikar2004maximizing}. 
While constraints in \eqref{eq:convex-1} are linear in the variables and easy to implement, \eqref{eq:convex-2} is really asking to check the robustness of $g$ at a given point $(x_i, y_i)$, which is an NP-hard problem~\cite{charikar2004maximizing}. 
Instead, we will use the fact that $\mathcal{F}$ is $\gamma$-factor admissible to design an approximate separation oracle for the type of constraints enforced in \eqref{eq:convex-2}. We would like to mention that the classical literature on robust optimization of linear and convex programs studies a similar setting where typically the goal is to bound the probability of each constraint being violated while achieving the maximum objective value~\cite{ben1999robust, el1997robust, bertsimas2004price}. In contrast, we are interested in precisely quantifying how much a constraint can be violated by and relate the bound to the robustness of the final classifier obtained. We are now ready to prove the main theorem of this section.

\begin{proof}[Proof of Theorem~\ref{thm:upper-bound-main}]
Let $\eta > 0$ be the success probability desired for the robust learning algorithm and $\epsilon > 0$ be the final robust error that is desired. Let $\mathcal{B}$ be an algorithm that achieves the $\gamma$-factor admissibility for the class $\mathcal{F}$. Given $S$, we will run the Ellipsoid algorithm on the convex program in Figure~\ref{ALG:learn}. Let $T(m,n)$ be a (polynomial) upper bound on the number of iterations of the algorithm. In each iteration, given $g, r_1, r_2, \dots, r_m$, we will first check whether $y_i g(x_i) > r_i$. If not, then we have found a violated constraint with the corresponding separating hyperplane being $sgn(r_i - y_i g(x_i))$, and the algorithm proceeds to the next iteration. If all the constraints in \eqref{eq:convex-1} are satisfied, then for each $i \in [m]$, we run $\mathcal{B}$ on the polynomial $y_i(g(x_i) - g(x_i + z))$, where $z$ is the variable and $x_i$ is fixed to be the $i$th data point. Furthermore, we will set $\eta'$, the failure probability of $\mathcal{B}$, to be equal to $\eta/(mT(m,n))$ and set $\delta'$ that is input to $\mathcal{B}$ to be $\delta/\gamma$. From the guarantee of $\mathcal{B}$ we get that if there exists an $i$ such that
\begin{align}
\label{eq:convex-2-approximate}
r_i &< \sup\limits_{z \in B^n_\infty(0,\frac{\delta}{\gamma})} y_i \Big( g(x_i) - g(x_i + z) \Big),
\end{align}
with probability at least $1-\eta/T(m,n)$, the $\mathcal{B}$ will output a violated constraint of the convex program, i.e., an index $i \in [m]$ and $\hat{z} \in B^n_\infty(0,\delta)$ such that
\begin{align*}
r_i &< \sup\limits_{z \in B^n_\infty(0,\delta)} y_i \Big( g(x_i) - g(x_i + \hat{z}) \Big).
\end{align*}

This gives us a separating hyperplane of the form $sgn(y_i(g(x_i) - g(x_i + \hat{z})) - r_i)$, and the algorithm continues. Hence, we get that when the Ellipsoid algorithm terminates, with probability at least $1-\eta$, it will output a polynomial $g \in \mathcal{F}$ such that the constraints in \eqref{eq:convex-1} and \eqref{eq:convex-2-approximate} are satisfied. This means that we would have the empirical robust error $\hat{err}_{\delta/\gamma, S}(sgn(g)) = 0$. Hence, by Lemma~\ref{lem:uniform-convergence}, we get that
$$
err_{\delta/\gamma, D}(sgn(g)) \leq 2\sqrt{\frac{2\Delta \log m}{m}} + \sqrt{\frac{\log \frac{1}{\eta}}{2m}},
$$
where $\Delta$ is the VC dimension of $\mathcal{F}$. Choosing $m = c \frac{\Delta + \log(1/\eta)}{\epsilon^2}$, makes $err_{\delta/\gamma, D}(sgn(g)) \leq \epsilon$.
\end{proof}

\begin{figure}
\begin{center}
\fbox{\parbox{0.98\textwidth}{
\begin{enumerate}[topsep=0pt,itemsep=-1ex,partopsep=1ex,parsep=1ex]
\item Let $S = (x_1, y_1), (x_2, y_2), \dots, (x_m, y_m)$ be the given training set.
\item Find a degree polynomial $g \in \mathcal{F}$ that satisfies
\begin{align}
y_i g(x_i) &> r_i, \,\,\, \forall i\in [m] \label{eq:convex-1}\\
r_i &\geq \sup\limits_{z \in B^n_\infty(0,\delta)} y_i \Big( g(x_i) - g(x_i + z) \Big), \,\,\, \forall i\in [m] \label{eq:convex-2}
\end{align}
\vspace{-10pt}

\end{enumerate}
}}
\end{center}
\caption{\label{ALG:learn} The convex program for finding a polynomial $g \in \mathcal{F}$ with zero robust empirical error.}
\vspace{-10pt}

\end{figure}

It is easy to check that for any fixed $d \in \mathbb{N}$, general degree-$d$ PTFs satisfy conditions $1$ and $2$ of Definition~\ref{def:gamma-factor-admissible} (however homogenous degree $d$ polynomials do not satisfy condition $2$).
We conclude the section by stating the following corollaries about robust learnability of general degree-$1$ and degree-$2$ PTFs.% are admissible, and hence robustly PAC learnable.
%Then we use Theorem~\ref{thm:upper-bound-main} to get the corresponding learning algorithm. Notice that conditions $1$ and $2$ in the definition of admissibility~(See Definition~\ref{def:gamma-factor-admissible}) are easily satisfied by degree-$d$ PTFs for any fixed $d$. Hence our main goal in this section is to show that condition $3$ in Definition~\ref{def:gamma-factor-admissible} holds for our classes of interest.
We begin with the following claim about admissibility and hence robust learnability of degree-$1$ PTFs.
%As a result of the above theorem we get the following result about robust learning of degree-$1$ PTFs.
\begin{corollary}
The class of degree-$1$ PTFs is optimally robustly learnable.
\end{corollary}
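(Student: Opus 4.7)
The plan is to deduce the corollary from Theorem~\ref{thm:upper-bound-main} by verifying that the class of degree-$1$ PTFs is $1$-factor admissible in the sense of Definition~\ref{def:gamma-factor-admissible}. Once admissibility is established with $\gamma = 1$, the theorem immediately yields a $1$-approximate (i.e.\ optimal) robust learning algorithm.

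First I would check the two closure conditions. Writing a degree-$1$ PTF as $\sgn(b^\transpose x + c)$ with $b \in \R^n$, $c \in \R$, the class is closed under linear combinations of affine forms together with a constant, since $a(b_1^\transpose x + c_1) + a'(b_2^\transpose x + c_2) + c''= (a b_1 + a' b_2)^\transpose x + (a c_1 + a' c_2 + c'')$ is again an affine function of $x$. Similarly, under a translation $x \mapsto x + v$ the function $b^\transpose(x+v) + c = b^\transpose x + (b^\transpose v + c)$ stays affine, so the class is closed under translations. These verify conditions~(1) and~(2) of Definition~\ref{def:gamma-factor-admissible}.

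Next I would invoke Claim~\ref{thm:admissible-degree-1} to verify condition~(3) with $\gamma = 1$. That claim gives a deterministic linear-time algorithm which, on input any affine function $g(x) = b^\transpose x + c$, returns $\widehat{x} \in B^n_\infty(0,\delta)$ with $g(\widehat{x}) = \max_{\|x\|_\infty \le \delta} g(x)$, by setting $\widehat{x}_i = \delta \cdot \sgn(b_i)$. This is exactly a $1$-admissible approximation algorithm for the family $\{g : \sgn(g) \text{ is a degree-1 PTF}\}$, so condition~(3) holds with $\gamma = 1$.

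Having shown that the class of degree-$1$ PTFs is $1$-factor admissible, Theorem~\ref{thm:upper-bound-main} applied with $\gamma = 1$ yields that degree-$1$ PTFs are $1$-approximately robustly learnable, which is exactly the optimal robust learnability claimed. I don't anticipate any real obstacle here: the three admissibility conditions are essentially immediate from the algebraic structure of affine functions, and the only nontrivial ingredient (an exact optimizer for the linear polynomial optimization problem) has already been supplied by Claim~\ref{thm:admissible-degree-1}.
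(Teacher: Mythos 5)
Your proposal is correct and matches the paper's own argument: the paper likewise deduces the corollary from Theorem~\ref{thm:upper-bound-main} by noting that linear combinations and shifts of linear functions remain linear (conditions~1 and~2) and that Claim~\ref{thm:admissible-degree-1} supplies the exact optimizer needed for condition~3 with $\gamma=1$. You have simply written out the same verification in more detail.
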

The proof just follows from Claim~\ref{thm:admissible-degree-1} and since any linear combination or shift of a linear function is also linear.
Similarly, the following corollary about degree-$2$ PTFs is immediate from Theorem~\ref{thm:QP:algorithm}.
\begin{corollary}
The class of degree-$2$ PTFs is $O(\sqrt{\log n})$-approximately robustly learnable.
\end{corollary}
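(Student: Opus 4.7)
The plan is to verify that the class of degree-$2$ PTFs satisfies the three conditions of $\gamma$-factor admissibility (Definition~\ref{def:gamma-factor-admissible}) with $\gamma = O(\sqrt{\log n})$, and then to invoke Theorem~\ref{thm:upper-bound-main} as a black box.

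First I would dispatch the closure conditions, which are essentially syntactic. Condition 1 holds because any linear combination $a\,f(x)+b\,g(x)+c$ of two degree-$2$ polynomials $f,g$ is again a polynomial of degree at most $2$, so $\sgn(af+bg+c)$ lies in the class of degree-$2$ PTFs. Condition 2 holds because for any $b \in \R^n$, substituting $x \mapsto x+b$ into a degree-$2$ polynomial $g(x) = x^\transpose A x + c^\transpose x + d$ yields $(x+b)^\transpose A (x+b) + c^\transpose(x+b) + d$, which expands into another degree-$2$ polynomial in $x$; hence $\sgn(g(x+b))$ is again a degree-$2$ PTF.

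The substantive condition is the third one: I need an algorithm that, given a degree-$2$ polynomial $g$ and $\delta>0$, returns $\widehat{z}$ with $g(\widehat{z}) \ge \max_{\norm{z}_\infty \le \delta} g(z)$ while $\norm{\widehat{z}}_\infty \le O(\sqrt{\log n})\,\delta$ in polynomial time (up to arbitrary failure probability). This is precisely the guarantee supplied by the SDP-based rounding scheme underlying Theorem~\ref{thm:QP:algorithm}: the analysis in Lemma~\ref{lem:QP:algorithm} produces such a $\widehat{z}$ with constant probability for any fixed degree-$2$ polynomial $g(x) = x^\transpose A x + b^\transpose x + c$ given as input (not only polynomials of the specific form $f(x^*+z)$ arising from an input example), and the failure probability can be driven to any $\eta > 0$ by $O(\log(1/\eta))$ independent repetitions. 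Thus degree-$2$ PTFs admit an $O(\sqrt{\log n})$-admissible approximation in the sense required by Definition~\ref{def:gamma-factor-admissible}.

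Having checked all three conditions, Theorem~\ref{thm:upper-bound-main} applies with $\gamma = O(\sqrt{\log n})$ and yields the corollary. The only mildly delicate point is to confirm that the algorithm of Theorem~\ref{thm:QP:algorithm} can be used as the subroutine $\mathcal{B}$ inside the Ellipsoid loop of the proof of Theorem~\ref{thm:upper-bound-main}: in each iteration we feed $\mathcal{B}$ the degree-$2$ polynomial $y_i\bigl(g(x_i) - g(x_i+z)\bigr)$ in the variable $z$, and set its internal failure probability to $\eta/(mT(m,n))$ as in the proof of Theorem~\ref{thm:upper-bound-main}. A union bound over the (polynomially many) Ellipsoid iterations then gives overall success probability $1-\eta$, producing a degree-$2$ PTF with zero $(\delta/\gamma)$-robust empirical error, which combined with Lemma~\ref{lem:uniform-convergence} yields the claimed $O(\sqrt{\log n})$-approximate robust learnability.
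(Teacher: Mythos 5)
Your proposal is correct and follows exactly the route the paper takes: check the two closure conditions of Definition~\ref{def:gamma-factor-admissible} (which the paper dismisses as ``easy to check'' for general degree-$d$ PTFs), supply condition 3 via the SDP algorithm of Theorem~\ref{thm:QP:algorithm} with $\gamma = O(\sqrt{\log n})$, and invoke Theorem~\ref{thm:upper-bound-main}. The paper states the corollary as ``immediate'' from Theorem~\ref{thm:QP:algorithm}; you have merely written out the details it leaves implicit, including the correct bookkeeping of the failure probability inside the Ellipsoid loop.
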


\anote{Can we say something about decoupled PTFs?}

% !TEX root = paper.tex

\section{Computational Intractability of Learning Robust Classifiers}
\label{sec:lowerbound}

In this section, we leverage the connection to polynomial optimization to complement our upper bound with the following nearly matching lower bound.% that again leverages the hardness of the corresponding optimization problem.
We give a reduction from {\em Quadratic Programming~(QP)} where given a polynomial $p(x)=\sum_{i < j} a_{ij} x_i x_j$, and a value $s$, the goal is to distinguish whether $max_{x \in \{-1,1\}^n }p(x) < s$ or whether exists an $x$ such that $p(x) > s \eta_{approx}$. It is known that the distinguishing problem is hard for $\eta_{approx} = O(\log^c n)$ for some constant $c > 0$~\cite{arora2005non}; moreover the state-of-the-art algorithms give a $\eta_{approx}=O(\log n)$ factor approximation~\cite{charikar2004maximizing} and improving upon this factor is a major open problem. By appropriately scaling the instance, this immediately implies the hardness of checking whether a given degree-$2$ PTF is robust around a given point. 

However, this does not suffice for hardness of learning, since given a distribution supported at a single point, there is 
a trivial constant classifier that robustly classifies the instance correctly. More generally, there could exist a different degree-$2$ PTF that could be easy to certify for the given point. \anote{Added above line} Instead, given a degree-$2$ PTF $\sgn(p(x))$, we carefully construct a set of $O(n^2)$ points such that any classifier that is robust on an instance supported on the set will have to be close to the given polynomial $p$. Having established this, we can distinguish between the two cases of the {\em QP} problem by whether the learning algorithm is able to output a robust classifier or not. This is formalized below.
%\begin{theorem}\label{thm:lowerbound-approx}
%There exists $\delta>0$, and a distribution $D$ over $\mathbb{R}^n \times \{-1, +1\}$ and $\epsilon > 0$, such that assuming $NP \ne RP$ there is no polynomial time algorithm that given a set of $poly(n, \frac{1}{\epsilon})$ points from $D$ labeled by a degree-$2$ PTF that has $\delta$-robust error of $0$ w.r.t. $D$,  outputs a degree-$2$ PTF of $o(\sqrt{\eta_{approx}} \delta)$-robust error at most $\epsilon$ w.r.t. $D$, where $\eta_{approx}$ is the hardness of approximation factor of the {\em QP} problem.
%\end{theorem}
\anote{Modified below (10/26).}
\begin{theorem}\label{thm:lowerbound-approx}
There exists $\delta,\epsilon>0$, %and a distribution $D$ over $\mathbb{R}^n \times \{-1, +1\}$ and $\epsilon > 0$, 
such that assuming $NP \ne RP$ there is no algorithm that given a set of $N=poly(n, \frac{1}{\epsilon})$ samples from a distribution $D$ over $\mathbb{R}^n \times \{-1, +1\}$, runs in time $\poly(N)$ and distinguishes between the following two cases for any $\delta'=o(\sqrt{\eta_{approx}} \delta)$:
\begin{itemize}
\item {\sc Yes:} There exists a degree-$2$ PTF that has $\delta$-robust error of $0$ w.r.t. $D$.
\item {\sc No:} There exists no degree-$2$ PTF that has $\delta'$-robust error at most $\epsilon$ w.r.t. $D$.
\end{itemize} 
Here $\eta_{approx}$ is the hardness of approximation factor of the {\em QP} problem.
\end{theorem}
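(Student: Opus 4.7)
The plan is to reduce from Quadratic Programming (QP), where it is NP-hard to decide whether $\max_{y \in \{-1,1\}^n} y^T A y$ is below $s$ or above $\eta_{approx}\, s$. Given $(A,s)$, I construct a distribution $D$ on $\R^{n+1} \times \{-1,+1\}$, whose samples are pairs $(x,z) \in \R^n \times \R$, such that in the YES case the intended PTF $\sgn(z - x^T A x)$ has zero $\delta$-robust error on $D$, while in the NO case no degree-$2$ PTF achieves $\delta'$-robust error $\le \epsilon$ for $\delta' = o(\sqrt{\eta_{approx}}\,\delta)$. The support of $D$ is a distinguished point $(0, \beta)$ labeled $+1$, together with $m = \Theta(n^2/\epsilon)$ gadget pairs described below; $\beta$ is scaled so that $\sgn(z - x^T A x)$ is $\delta$-robust at $(0,\beta)$ in YES but fails $\delta'$-robustness there in NO, using $\max_{\norm{x}_\infty \le r} x^T A x = r^2 \max_{y \in \{-1,1\}^n} y^T A y$.

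The gadget is built as follows: sample generic $x^{(j)} \in \R^n$ and set $z^{(j)} = (x^{(j)})^T A x^{(j)}$, so each $(x^{(j)}, z^{(j)})$ lies on the surface $z = x^T A x$. Along the surface-normal direction $\nu^{(j)} \propto (-2Ax^{(j)}, 1)$, place a $+1$ labeled point just above the surface and a $-1$ labeled point just below, with $\ell_\infty$-distance slightly larger than $\delta$ between each and the central point. The key properties are: (a) the intended PTF $\sgn(z - x^T A x)$ is $\delta$-robust at every such pair, since $\ell_\infty$ perturbations of radius $\delta$ from either point cannot cross the surface (modulo a lower-order curvature term that the scaling of the data absorbs); and (b) any degree-$2$ PTF $\sgn(q)$ that is $\delta$-robust at a symmetric pair around $(x^{(j)}, z^{(j)})$ must have its zero set essentially pass through $(x^{(j)}, z^{(j)})$, i.e., $q(x^{(j)}, z^{(j)}) \approx 0$. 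Property (a) gives the YES-case PTF immediately.

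The heart of the NO-case argument is turning property (b) into uniqueness of the robust PTF up to scaling. Requiring $q(x^{(j)}, z^{(j)}) = 0$ for nearly all $j$ is a homogeneous linear system in the $D = \binom{n+3}{2}$ monomial coefficients of $q$, whose evaluation matrix has the coefficient vector of $z - x^T A x$ in its null space by construction. I would prove that for generic $\{x^{(j)}\}_{j=1}^{m}$ this matrix has rank exactly $D - 1$ with overwhelming probability, via a Schwartz--Zippel argument: exhibit one specialization of $\{x^{(j)}\}$ where a $(D-1)\times(D-1)$ subdeterminant (with the intended null vector projected out) is nonzero, so the determinantal polynomial in the entries of $\{x^{(j)}\}$ is not identically zero, hence nonvanishing almost surely. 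This forces any such $q$ to be a scalar multiple of $z - x^T A x$, which by the scaling of $\beta$ fails to be $\delta'$-robust at the distinguished point $(0,\beta)$ in the NO case.

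The main obstacle is simultaneously achieving robustness to (i) $\epsilon$-corruption of the gadget (the PTF is allowed $\epsilon$ robust error), and (ii) the fact that property (b) is only approximate rather than exact. For (i), I would oversample $m = \Theta(D/\epsilon)$ so that any subset of $(1-\epsilon)m$ rows still contains a rank-$(D-1)$ submatrix, which again reduces to a Schwartz--Zippel statement about a union over subsets. For (ii), I would quantify $|q(x^{(j)}, z^{(j)})| = O(\delta)$ after normalizing the coefficient vector of $q$, so the system becomes an $O(\delta)$-perturbation of the exact one; a lower bound on the smallest singular value of the evaluation matrix (again via a Schwartz--Zippel style polynomial non-degeneracy argument) then shows any such $q$ is $O(\delta)$-close in coefficients to a scalar multiple of $z - x^T A x$, with the perturbation absorbed by the $\sqrt{\eta_{approx}}$ slack between $\delta$ and $\delta'$.
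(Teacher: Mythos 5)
Your high-level architecture (reduce from gap-QP, a distinguished point $(0,\beta)$ whose robustness encodes the QP value, and a gadget forcing any consistent robust PTF to be close to $z-x^TAx$) matches the paper's, and your YES-case analysis is sound. However, the route you take for the key NO-case lemma is the one the paper uses only for its \emph{exact} ($\gamma=1$) hardness results (Theorems~\ref{THM:lowerbound} and~\ref{THM:lowerbound-strong-nondistributional}), and there is a genuine gap in extending it to the approximation-factor setting. First, the "passing through" argument is exact only when the robustness radius equals half the $2\delta$ separation of each gadget pair; in the NO case you must rule out PTFs that are merely $\eta\delta$-robust with $\eta=\Theta(1/\sqrt{\eta_{approx}})\ll 1$, and then the geometry only tells you the zero set of $q$ passes within $\ell_\infty$-distance $O(\delta)$ of $(x^{(j)},z^{(j)})$. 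Converting that to a bound on $|q(x^{(j)},z^{(j)})|$ costs a factor of $\|\nabla q\|$ at the point, which for your random points at scale $\rho=\mathrm{poly}(n,m)\,\delta$ is $\Theta(\rho)$ times the quadratic coefficients of $q$ -- so the right-hand side of your perturbed linear system is not $O(\delta)$ after normalization, and this blow-up has to be beaten by the singular value bound. Second, Schwartz--Zippel gives you rank (non-vanishing of a subdeterminant with probability~$1$), not a \emph{quantitative} lower bound on the smallest singular value of the Veronese evaluation matrix restricted to the complement of $\mathrm{coeff}(z-x^TAx)$; that step would require anti-concentration (e.g.\ Carbery--Wright) plus careful column rescaling of a badly conditioned matrix whose entries range from $1$ to $\rho^4$, and it is precisely where the difficulty of the theorem lives. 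Third, even granting such a bound, an additive coefficient estimate $\|A'-cA\|_{\max}\le\xi$ transfers the QP gap only if $n^2\xi\delta'^2\ll c\,\delta'^2 s$, i.e.\ $\xi\ll cs/n^2$, which you have not verified.

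The paper sidesteps all three issues by abandoning the random-surface gadget for this theorem: Lemma~\ref{lem:poly-small-coeffs} uses an explicit deterministic point set, with points placed at $\mathbf{e}_{i,j}$ scaled by $1/\sqrt{2(\epsilon+|a_{i,j}|)}$ and at $\pm\tau\mathbf{e}_i$, $(\mathbf{0},\pm\tau')$, etc., so that \emph{correct classification alone} pins each coefficient of any consistent $q$ individually -- in particular $|a'_{i,j}|/(\epsilon+|a_{i,j}|)\in[\tfrac14-o(1),\,2+o(1)]$ -- giving per-entry multiplicative control that transfers $\max_{\|x\|_\infty\le r}x^TA'x$ to $\max_{\|x\|_\infty\le r}x^TAx$ up to constants with no conditioning analysis. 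A final minor point: since the theorem only asserts existence of some $\epsilon>0$, the paper takes $\epsilon=1/(2N)$ so that no points may be misclassified; your oversampling to tolerate $\epsilon$-corruption is needed only for the separate weak-learning hardness (Theorem~\ref{thm:lowerbound-strong}), not here.
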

\begin{remark}
\anote{Modified first line.} The above theorem proves that any polynomial time algorithm that always outputs a robust classifier (or declares failure if it does not find one) will have to incur an extra factor of $\Omega(\sqrt{\eta_{approx}})$ in the robustness parameter $\delta$. Our upper bound in Section~\ref{sec:upper-bound} on the other hand matches this bound. 
\anote{Removed the line saying "This is a natural requirement in the context of robust learning algorithms and all our upper bounds in the paper do, in fact, lead to algorithms with this property."}
%Our lower bound applies to learning algorithms that are required to make sure that the classifier that they output is robust or declare failure. 
%This is a natural requirement in the context of robust learning algorithms and all our upper bounds in the paper do, in fact, lead to algorithms with this property. 
While our lower bound applies to algorithms that output a classifier of low error, in Appendix (see Theorem~\ref{thm:lowerbound-strong}) we also prove a more robust lower bound that rules out the possibility of an efficient robust learner that incurs an error less than $1/4$.
%\anote{5/22: Ensure the theorem statement is somewhere.}
\end{remark}
%\begin{theorem}\label{thm:lowerbound-strong}[Stronger Distributional Hardness]
%For every $\delta>0$ and $\epsilon \in (0, \frac 1 4)$, there exists a distribution $D$ over $\mathbb{R}^n \times \{-1, +1\}$, such that assuming $NP \ne RP$ there is no polynomial time algorithm that given a set of $N=poly(n, \frac{1}{\epsilon})$ points from $D$ labeled by a degree-$2$ PTF that has $\delta$-robust error of $0$ w.r.t. D,  outputs a degree-$2$ PTF of $\delta$-robust error at most $\epsilon$ w.r.t. $D$.
%%following two cases.\\
%%\noindent {\bf \textsc{Yes:}} There is a degree-$2$ PTF $p^*$ that has $\delta$-robust error of $0$ on these $N$ points. \\
%%\noindent {\bf \textsc{No:}} There is no degree-$2$ PTF that has $\delta$-robust error of $0$ on these $N$ points.
%\end{theorem}

% !TEX root = paper.tex

% !TEX root = paper.tex

%In this section we show that assuming $NP$ does not have randomized polynomial time algorithms, no polynomial time algorithm $\gamma$-approximately robust learns degree-$2$ PTFs for $\gamma = O(\sqrt{\eta_{approx}})$, where $\eta_{approx}$ denotes the hardness of approximation factor for the QP problem. 
%In {\em Quadratic Programming~(QP)} we are given a polynomial $p(x)=\sum_{i < j} a_{ij} x_i x_j$, and 	the goal is to evaluate $\max_{x \in \set{-1,1}^n} p(x)$. 
We will represent an instance of {\em Quadratic Programming (QP)} by a polynomial $p(x)=x^T A x$ where $A$ is a symmetric matrix with zeros on the diagonal, and $A_{ij}=A_{ji}=a_{ij}/2$.
 %We also use $R(A)$ to denote the aspect ratio of the instance defined as $R(A) = \max_{i \in [n]} \norm{A_i}_\infty/ \min_{i \in [n]} \norm{A_i}_2$, where $A_i$ denotes the $i$th row of $A$.
Formally, the $NP$-hard problem {\em \QP}~\cite{arora2005non,GareyJohnson} is the following: given $\beta>0$ and a polynomial $p(x)=x^T A x$ %satisfying $R(A) =n^{O(1)}$,
distinguish whether

\textbf{\NO Case} : there exists an assignment $x^* \in \set{-1,1}^n$ such that $p(x^*) >  \beta \eta_{approx}$,

\textbf{\YES Case} : for every assignment $x \in \set{-1,1}^n$, $p(x) < \beta$.

We prove that there exists a $\delta>0$ and a set of $N=\poly(n)$ points such that it is hard to distinguish whether there exists a degree-2 PTF that is $\delta$ robust at all the points or that no degree-2 PTF is $\eta \delta$ robust
for $\eta = \Omega(1/\sqrt{\eta_{approx}})$.

\begin{theorem}\label{THM:lowerbound-approx}[Hardness]
There exists $\delta>0$, such that assuming $NP \ne RP$ there is no polynomial time algorithm that given a set of $N=O(n^{2})$ labeled points $\set{(x^{(1)},y^{(1)}), \dots, (x^{(N)}, y^{(N)})}$ with $(x^{(j)}, y^{(j)}) \in \R^{n+1} \times \set{-1,1}$ for all $j \in [N]$ can distinguish between the following two cases

\textbf{YES Case:} There exists a degree-$2$ PTF that has $\delta$-robust empirical error of $0$ on these $N$ points.
% (as in Definition~\ref{def:robust-empirical-error}),

\textbf{NO Case:} No degree-$2$ PTF is $\eta \delta$-robust on these points for $\eta = \Omega(1/\sqrt{\eta_{approx}})$.

%following two cases.\\
%\noindent {\bf \textsc{Yes:}} There is a degree-$2$ PTF $p^*$ that has $\delta$-robust error of $0$ on these $N$ points. \\
%\noindent {\bf \textsc{No:}} There is no degree-$2$ PTF that has $\delta$-robust error of $0$ on these $N$ points. 	
\end{theorem}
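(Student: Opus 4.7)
The plan is to reduce Quadratic Programming (QP) to the distinguishing problem in the theorem. Given a QP instance $(A,\beta)$ with $A$ having zero diagonal, I will work in $\R^{n+1}$ with coordinates $(w, z)$ and produce a set of $N = O(n^2)$ labeled points together with a threshold $\delta > 0$ such that in the \YES case of QP (where $\max_{w\in\{-1,1\}^n} w^T A w < \beta$), the ``intended'' PTF $\sgn(z - w^T A w)$ has $\delta$-robust empirical error $0$ on the constructed set, while in the \NO case no degree-$2$ PTF is $\eta\delta$-robust for any $\eta$ below $C/\sqrt{\eta_{approx}}$. The $\sqrt{\eta_{approx}}$ (rather than $\eta_{approx}$) arises naturally from the quadratic scaling identity $\max_{\|w\|_\infty \le r} w^T A w = r^2 \max_{w\in\{-1,1\}^n} w^T A w$ that holds when $A$ has zero diagonal: an $\eta$-factor slack in the robustness radius becomes an $\eta^2$-factor slack in the QP objective.

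For the gadget construction, I sample $m = \Theta(n^2)$ anchors $x^{(j)} \in \R^n$ from a generic distribution (say, i.i.d.\ Gaussian) and lift them to $(x^{(j)}, z^{(j)})$ on the surface $\mathcal{S} = \{z = w^T A w\}$ by setting $z^{(j)} = (x^{(j)})^T A x^{(j)}$. From each lifted anchor I create an opposite-labeled pair $u^{(j)}, v^{(j)}$ by translating $(x^{(j)}, z^{(j)})$ by $\pm \delta$ along the gradient direction $\nabla(z - w^T A w) = (-2A x^{(j)}, 1)$, rescaled so that the perturbation has $\ell_\infty$-norm $\delta$. Because the perturbation is along the steepest-ascent direction of $z - w^T A w$, the $\ell_\infty$-ball of radius $\delta$ around $u^{(j)}$ stays strictly above $\mathcal{S}$ and the one around $v^{(j)}$ stays strictly below (for a sufficiently small absolute scale, so that the linear term dominates the curvature term), giving $\delta$-robustness of the intended PTF at these $2m$ points. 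Finally I append a pair of ``test'' points $(0, \pm\tau)$ with $\tau \asymp \beta \delta^2$ and matching labels, so that $\delta$-robustness of the intended PTF at the test points is exactly the \YES condition of QP, rescaled by $\delta^2$. Together with the gadget robustness this settles the \YES direction.

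For the \NO direction, suppose $\sgn(q)$ is a degree-$2$ PTF that is $\eta\delta$-robust on every labeled point. The opposite labels at $u^{(j)}, v^{(j)}$, together with the fact that they lie $2\delta$ apart in $\ell_\infty$ along a single direction, force the zero-set of $q$ to cross between them, and $\eta\delta$-robustness at each endpoint pins the crossing to within $\delta(1-\eta)$ of the anchor, making $|q(x^{(j)}, z^{(j)})|$ vanishingly small (exactly zero in the limit of exact $\delta$-robustness). The crucial algebraic fact is that the only degree-$2$ polynomial in $\R[w,z]$ that vanishes identically on $\mathcal{S}$ is a scalar multiple of $z - w^T A w$; this follows by substituting $z = w^T A w$ into the ansatz $q(w,z) = \alpha z^2 + z L_1(w) + Q(w) + c_1 z + L_2(w) + c_2$ and equating the homogeneous-in-$w$ parts of degrees $4,3,2,1,0$ to zero in turn. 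Consequently the evaluation map $q \mapsto \bigl(q(x^{(j)}, z^{(j)})\bigr)_{j=1}^m$ on $\R[w,z]_{\le 2}$ (of dimension $D = \binom{n+3}{2}$) has $1$-dimensional kernel, and a Schwartz--Zippel argument shows the rank is exactly $D-1$ for $m = D-1$ Gaussian anchors with probability $1$. So $q$ must be a scalar multiple of $z - w^T A w$, and $\eta\delta$-robustness at the test point then translates to $\max_{w \in \{-1,1\}^n} w^T A w \le \beta/\eta^2$; taking $\eta^2 < 1/\eta_{approx}$ contradicts the \NO guarantee $\max > \beta \eta_{approx}$.

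The main obstacle is the \emph{quantitative} form of the rank-deficient-by-one step: converting the approximate anchor vanishing arising from $\eta\delta$-robustness (rather than exact $\delta$-robustness) into a statement that $q$ is close enough to $\mathrm{span}(z - w^T A w)$ to preserve the sign at the test point. This requires a lower bound on the smallest nonzero singular value of the $m \times D$ evaluation matrix, which I would obtain via anti-concentration (Carbery--Wright type bounds) applied to the polynomial identities defining the null-space condition, together with a careful choice of the perturbation size and the test-point scale so that the slack absorbed in the $\sqrt{\eta_{approx}}$ factor dominates the residual approximation error. The resulting reduction is randomized polynomial time, so a polynomial time distinguisher for the constructed instance would imply $NP = RP$, as claimed.
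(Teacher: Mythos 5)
Your reduction skeleton (QP to robust learning, the quadratic scaling identity explaining the $\sqrt{\eta_{approx}}$ factor, and the \textsc{Yes}-case analysis via gradient-direction perturbations of random anchors on the surface $z = w^T A w$) is sound, and it is essentially how the paper proves the \emph{exact} version of this hardness, Theorem~\ref{THM:lowerbound}, where $\eta = 1$. The genuine gap is in your \textsc{No} case, and it sits exactly in the regime the theorem is about. You assert that $\eta\delta$-robustness at the opposite-labeled pair $u^{(j)}, v^{(j)}$ ``pins the crossing to within $\delta(1-\eta)$ of the anchor, making $|q(x^{(j)},z^{(j)})|$ vanishingly small.'' That is only useful when $\eta$ is close to $1$. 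Here $\eta = \Theta(1/\sqrt{\eta_{approx}})$ is \emph{small}, so $\delta(1-\eta) \approx \delta$: the zero set of $q$ is confined only to a window covering essentially the entire segment between $u^{(j)}$ and $v^{(j)}$, the anchor $(x^{(j)},z^{(j)})$ lies outside both $\eta\delta$-balls, the sign of $q$ there is completely unconstrained, and $q(x^{(j)},z^{(j)})$ need not be small in any sense. Consequently the evaluation-matrix machinery you invoke has no approximate-vanishing hypothesis to act on; a smallest-singular-value bound (Carbery--Wright or otherwise) cannot rescue an argument whose premise fails. This is not a quantitative detail to be absorbed into constants --- it is why the random-anchor gadget does not extend from $\eta=1$ to small $\eta$.

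The paper's proof of this statement therefore uses an entirely different, explicit deterministic point set (Lemma~\ref{lem:poly-small-coeffs}): points at the origin at several heights, at scaled coordinate vectors $\mathbf{e}_i$, and at two-coordinate vectors $\mathbf{e}_{i,j}$ whose nonzero entries are normalized by $1/\sqrt{2(\epsilon + |a_{i,j}|)}$, with labels chosen so that \emph{mere correct classification} (zero empirical error, no robustness needed at these points) forces every coefficient of any consistent degree-$2$ polynomial $q$ to agree with the corresponding coefficient of $x^T A x - z$ up to explicit multiplicative and additive slack, e.g.\ $\tfrac14 - \delta - \tfrac{\epsilon}{4} \le |a'_{i,j}|/(\epsilon + |a_{i,j}|) \le 2 + 4\delta + \epsilon$ for the off-diagonal quadratic terms. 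Only after the coefficients are pinned in this way is $\eta\delta$-robustness invoked, and only at a single test point $(\mathbf{0}, 2\delta)$, to bound $\max_{\|x\|_\infty \le \eta\delta} x^T A' x$ and hence $\max_{\|x\|_\infty \le \delta} x^T A x = O(\delta/\eta^2)$, which contradicts the \textsc{No} guarantee of QP once $\eta \ge C/\sqrt{\eta_{approx}}$. If you want to salvage your route, you would need a gadget in which the \emph{classification} constraints alone (not the robustness constraints at the paired points) control the coefficients of $q$ quantitatively; that is precisely what the paper's normalization by $\sqrt{2(\epsilon+|a_{i,j}|)}$ accomplishes.
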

%The above theorem immediately implies the following result about hardness of approximately optimal robust learning of degree-$2$ PTFs.
%
%\begin{corollary}[Same as Theorem~\ref{thm:lowerbound-approx}]
%There exists $\delta,\epsilon>0$, %and a distribution $D$ over $\mathbb{R}^n \times \{-1, +1\}$ and $\epsilon > 0$, 
%such that assuming $NP \ne RP$ there is no algorithm that given a set of $N=poly(n, \frac{1}{\epsilon})$ samples from a distribution $D$ over $\mathbb{R}^n \times \{-1, +1\}$, runs in time $\poly(N)$ and distinguishes between the following two cases for any $\delta'=o(\sqrt{\eta_{approx}} \delta)$:
%\begin{itemize}
%\item {\sc Yes:} There exists a degree-$2$ PTF that has $\delta$-robust error of $0$ w.r.t. $D$.
%\item {\sc No:} There exists no degree-$2$ PTF that has $\delta'$-robust error at most $\epsilon$ w.r.t. $D$.
%\end{itemize} 
%Here $\eta_{approx}$ is the hardness of approximation factor of the {\em QP} problem.
%\end{corollary}
%The above corollary proves that any polynomial time algorithm that always outputs a robust classifier (or declares failure if it does not find one) will have to incur an extra factor of $\Omega(\sqrt{\eta_{approx}})$ in the robustness parameter $\delta$. Our upper bound in Section~\ref{sec:upper-bound} on the other hand matches this bound. 

Theorem~\ref{thm:lowerbound-approx} follows from Theorem~\ref{THM:lowerbound-approx} and the standard fact used in establishing learning theoretic hardness~\cite{kearns1994introduction}, namely if there were a robust learning algorithm for every distribution and $\epsilon > 0$, the one could use it on the uniform distribution over the instance from Theorem~\ref{THM:lowerbound-approx} with $\epsilon = \frac{1}{2N}$ to determine whether there exists a degree-$2$ PTF that has $\delta$-robust empirical error of $0$ on the points in the instance.
Hence our main goal is to prove Theorem~\ref{THM:lowerbound-approx}. In order to get hardness of approximation, we need to pick the set of points carefully. Our set of points will have the property that in the YES case of
the QP instance, the polynomial $x^T A x-z$ will be $\delta$ robust at all the points (see Claim~\ref{claim:ugly-proof}). Furthermore, the points will enforce the property that any other degree-$2$ PTF that classifies the points correctly will have to be very close to $x^T A x-z$ in terms of the parameters. This will help use rule out 
the existence of an $\eta \delta$ robust classifier in the NO case, since if one exists, it must be close to $x^T A x - z$, thereby implying an upper bound on the value of $x^T A x$ around the neighborhood of zero. This is established in the following key lemma. %used in our construction is stated below.

\begin{lemma}\label{lem:poly-small-coeffs}
Let $p(x,z) = x^T A x - z$ be a given polynomial where $A$ is a symmetric matrix with zeros on the diagonal. For any $\epsilon, \delta < 1/10$, consider the labeled set $S = S_1 \cup S_2 \cup S_3 \cup S_4 \cup S_5$ where,
	$$
	S_1 = \{((\mathbf{0},1),-1), ((\mathbf{0},-1), +1), ((\mathbf{0},\tau'),-1), ((\mathbf{0},-\tau'), +1), ((\mathbf{0},2\delta), -1), ((\mathbf{0},-2\delta), +1)\}, 
	$$
	$$
	S_2 =\{((\mathbf{e}_i, \gamma), -1),((\mathbf{e}_i, -\gamma), +1), ((-\mathbf{e}_i, \gamma), -1), ((-\mathbf{e}_i, -\gamma), +1)\}, \,\, \forall i \in [n],
	$$
	$$
	S_3 = \{((\mathbf{e}_{i,j}, 2), -1),((\mathbf{e}_{-i,j}, 2), -1),((\mathbf{e}_{i,-j}, 2), -1),((\mathbf{e}_{-i,-j}, 2), -1)\}, \,\, \forall i\neq j \in [n],
	$$
	\begin{multline*}
	S_4 = \{((2\mathbf{e}_{i,j}, 1),  \sgn(a_{i,j})),((2\mathbf{e}_{-i,j}, 1), - \sgn(a_{i,j})),((2\mathbf{e}_{i,-j}, 1), - \sgn(a_{i,j})), \\ ((2\mathbf{e}_{-i,-j}, 1),  \sgn(a_{i,j}))\}, \,\, \forall i\neq j \in [n],
	\end{multline*}
	and
	$$
	S_5 = \{((\mathbf{e}_{i,j}, -2), +1),((\mathbf{e}_{-i,j}, -2), +1),((\mathbf{e}_{i,-j}, -2), +1),((\mathbf{e}_{-i,-j}, -2), +1)\}, \,\, \forall i\neq j \in [n],
	$$
	Here $\mathbf{e}_i$ is the vector $(0,0,\dots,\tau,0,\dots,0)$ and $\mathbf{e}_{i,j}$ is the vector $(0,0, \dots, \frac{1}{\sqrt{2(\epsilon + |a_{i,j}|)}}, 0, \dots, \frac{1}{\sqrt{2(\epsilon+ |a_{i,j}|)}}, 0, \dots, 0)$.
	For every general degree 2 polynomial $q'(x,z)$ with the coefficient of $z = c_z$, such that $sgn(q')$ has zero error on $S$, we must have $c_z \neq 0$. Moreover, let $q(x,z) = \frac{1}{-c_z}q'(x,z) = x^T A' x + c^T_1x + c_2 z^2 - z + c_4 + \sum_i \beta_i zx_i$, where $A'$ be a symmetric matrix. Then we must have that
	$$
	\max(|c_2|,  \|\beta\|_\infty, |a'_{i,i}| ) \leq \epsilon,
	$$
	$$
	|c_4| \leq 4 \delta,
	$$
	$$
	|c_{1,i}| \leq \min_{j\neq i} 8 \delta \sqrt{\epsilon + |a_{i,j}|},
	$$
and	
	$$
	\frac 1 4 - \delta - \frac \epsilon 4 \leq \max(\frac{|a'_{i,j}|}{\epsilon + |a_{i,j}|}) \leq 2+4\delta+\epsilon
	$$

	provided $\tau' = \Omega(\frac{n^2}{\epsilon}) \max(1, 1/(\epsilon+\min_{i\neq j} |a_{i,j}|)), \tau = \Omega(\frac{n}{\epsilon}) \max(1, 1/(\epsilon+\min_{i\neq j} |a_{i,j}|))$, $\gamma = 4n \tau$.
\end{lemma}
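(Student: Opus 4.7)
The plan is to establish the claimed bounds coefficient-by-coefficient, exploiting the layered structure of $S$: each of $S_1,\dots,S_5$ is designed to isolate a different group of coefficients once the preceding groups have been controlled. Write $q'(x,z)$ in its general degree-$2$ expansion, with $c_z$ denoting the coefficient of $z$; the proof proceeds in five steps corresponding to the five blocks of $S$.

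First I would use the pair $((\mathbf{0},1),-1)$ and $((\mathbf{0},-1),+1)$ in $S_1$, which only involve the $z$-axis terms. The constraint $q'(\mathbf{0},1)<0<q'(\mathbf{0},-1)$ forces $c_z \neq 0$ (specifically $c_z < 0$), justifying the normalization $q := q'/(-c_z)$. After normalization the coefficient of $z$ in $q$ equals $-1$, and one can freely write $q(x,z) = x^T A' x + c_1^T x + c_2 z^2 - z + c_4 + \sum_i \beta_i z x_i$. The remaining $S_1$ points $(\mathbf{0},\pm 2\delta)$ and $(\mathbf{0},\pm\tau')$ give two symmetric inequality pairs involving only $c_2, c_4$: the $(\pm 2\delta)$ pair yields $|4c_2\delta^2+c_4|<2\delta$, while the $(\pm\tau')$ pair with $\tau'=\Omega(n^2/\epsilon)$ yields $|c_2(\tau')^2+c_4|<\tau'$. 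Combining these two bounds (dividing the second by $\tau'$ and using $\tau'$ very large) gives $|c_2|\le \epsilon$ and then $|c_4|\le 4\delta$.

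Next, for each $i$, the four $S_2$ points $((\pm\mathbf{e}_i,\pm\gamma),\mp 1)$ give four inequalities in which the $\pm\mathbf{e}_i$ symmetry lets me isolate the ``even in $x_i$'' and ``odd in $x_i$'' pieces separately. Summing over the $\pm\mathbf{e}_i$ pair eliminates $c_{1,i}$ and the linear-in-$x_i$ term and gives $|a'_{ii}\tau^2 + c_2\gamma^2 + c_4| < \gamma$; using the bounds on $c_2, c_4$ from Step 2, together with $\tau=\Omega(n/\epsilon)$ and $\gamma=4n\tau$, this forces $|a'_{ii}|\le \epsilon$. Taking the cross-differences $q(\mathbf{e}_i,-\gamma) - q(\mathbf{e}_i,\gamma)$ and $q(-\mathbf{e}_i,-\gamma) - q(-\mathbf{e}_i,\gamma)$ isolates $\beta_i\tau\gamma$ against the $\gamma$ term and yields $|\beta_i|\le 1/\tau\le \epsilon$.

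For the off-diagonal entries and linear coefficients, I use the four sign patterns $\mathbf{e}_{\pm i,\pm j}$ across $S_3, S_4, S_5$. With $t = 1/\sqrt{2(\epsilon+|a_{ij}|)}$, the cross term $2 a'_{ij} t^2$ equals $a'_{ij}/(\epsilon+|a_{ij}|)$, which is precisely the quantity we want to bound. Adding $S_3\cup S_5$ constraints at $(\mathbf{e}_{i,j},\pm 2)$ and $(\mathbf{e}_{-i,-j},\pm 2)$ eliminates the $c_{1,i}, c_{1,j}$ and $\beta$-contributions by symmetry and yields
$|t^2(a'_{ii}+a'_{jj}+2a'_{ij}) + 4c_2 + c_4|<2$, and the analogous sum at $\mathbf{e}_{\pm i,\mp j}$ yields the same bound with $-2a'_{ij}$. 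Combined with $|a'_{ii}|,|a'_{jj}|\le\epsilon$ and the already-known bounds on $c_2,c_4$, this gives the upper bound $|a'_{ij}|/(\epsilon+|a_{ij}|)\le 2+4\delta+\epsilon$. The matching lower bound comes from $S_4$, whose labels $\pm \sgn(a_{ij})$ at $2\mathbf{e}_{\pm i,\pm j}$ force $a'_{ij}$ to have the same sign as $a_{ij}$ and to be at least $\frac14-\delta-\epsilon/4$ in the normalized scale. Finally, taking differences (rather than sums) of $q$-values across sign flips only in the $i$-th coordinate—e.g.\ $q(\mathbf{e}_{i,j},\pm 2)$ vs.\ $q(\mathbf{e}_{-i,j},\pm 2)$ in $S_3\cup S_5$, or $q(2\mathbf{e}_{i,j},1)$ vs.\ $q(2\mathbf{e}_{-i,j},1)$ in $S_4$—cancels $a'_{ii},a'_{jj}$ and isolates $c_{1,i} t$ against the gap of $O(\delta)$ set by the $z$-axis labels, giving $|c_{1,i}|\le 8\delta\sqrt{\epsilon+|a_{ij}|}$. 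Taking the minimum over $j\ne i$ yields the claimed bound.

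The main obstacle is the bookkeeping in the last two steps: when isolating $a'_{ij}$ or $c_{1,i}$, one inevitably picks up residual contributions from $a'_{ii},a'_{jj},c_2,c_4,\beta_i$. The scaling choices $\tau'=\Omega(n^2/\epsilon)$, $\tau=\Omega(n/\epsilon)\cdot\max(1,1/(\epsilon+\min_{i\ne j}|a_{ij}|))$, $\gamma=4n\tau$, and $t=1/\sqrt{2(\epsilon+|a_{ij}|)}$ are calibrated precisely so that each residual is absorbed by the bound established in the previous step, giving a chain $c_z\neq 0 \Rightarrow |c_2|,|c_4|\text{ controlled} \Rightarrow |a'_{ii}|,|\beta_i|\text{ controlled} \Rightarrow |a'_{ij}|,|c_{1,i}|\text{ controlled}$; one must verify at each stage that the slack allowed by the strict inequalities $q\cdot y>0$ is compatible with all prior bounds.
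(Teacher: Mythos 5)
Your outline follows the same route as the paper's proof: write each sign constraint $y\cdot q(x,z)>0$ as a linear inequality in the coefficients of $q$, and exploit the $\pm$ symmetries of the blocks $S_1,\dots,S_5$ to pin down $c_2,c_4$, then $a'_{ii},\beta_i$, then $a'_{ij}$ and $c_{1,i}$. Steps 1 and 2 are correct, and your derivation of $|\beta_i|<1/\tau$ by differencing across $z\mapsto -z$ at fixed $\pm\mathbf{e}_i$ is actually cleaner than the paper's. Two points, however, do not go through as written.

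First, the propagation of bounds. In Step 3 the residual is $c_2\gamma^2/\tau^2=16n^2c_2$, so the headline bound $|c_2|\le\epsilon$ recorded at the end of Step 2 only gives $|a'_{ii}|\lesssim n^2\epsilon$; you must carry forward the sharper intermediate bound $|c_2|=O(1/\tau')=O(\epsilon/n^2)$. Likewise, in the $a'_{ij}$ step the residual $t^2\bigl(|a'_{ii}|+|a'_{jj}|\bigr)=(|a'_{ii}|+|a'_{jj}|)/(2(\epsilon+|a_{ij}|))$ can be as large as $1$ if one only uses $|a'_{ii}|\le\epsilon$ (since $\epsilon+|a_{ij}|$ may equal $\epsilon$); what is needed is $|a'_{ii}|=O(\gamma/\tau^2)=O(n/\tau)$ combined with the factor $\max\bigl(1,1/(\epsilon+\min_{i\ne j}|a_{ij}|)\bigr)$ built into $\tau$, which exists precisely to absorb this term. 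The paper substitutes the raw expressions $1/(\tau'-1)$ and $(4\delta+2\gamma+\gamma^2/(\tau'-1))/\tau^2$ into the later inequalities rather than the weakened $\epsilon$-bounds; your plan must do the same or the constants break.

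Second, and more seriously, the $c_{1,i}$ step. Every constraint at a point of the form $\mathbf{e}_{\pm i,\pm j}$ or $2\mathbf{e}_{\pm i,\pm j}$ has $|z|\in\{1,2\}$, so each such inequality has slack of order $1$, not $O(\delta)$. The difference you propose, e.g.\ $q(\mathbf{e}_{i,j},2)$ against $q(\mathbf{e}_{-i,j},-2)$, leaves behind $-4$ (from the $-z$ term) and $2a'_{ij}/(\epsilon+|a_{ij}|)$, both of order $1$, so it only yields $|c_{1,i}|\,t\lesssim 2$, i.e.\ $|c_{1,i}|\lesssim\sqrt{2(\epsilon+|a_{ij}|)}$. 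No combination of these constraints can produce an $O(\delta)$ bound, because the only $O(\delta)$-scale information in $S$ sits at $(\mathbf{0},\pm2\delta)$, which does not involve $x$. Concretely, $x^TAx-z+\lambda x_i$ with $0<|\lambda|$ somewhat below $\min_{j\ne i}\sqrt{2(\epsilon+|a_{ij}|)}$ still classifies all of $S$ correctly, so the bound $|c_{1,i}|\le 8\delta\sqrt{\epsilon+|a_{ij}|}$ cannot be extracted from consistency on $S$ by the cancellation you describe. (The paper's own passage from its combined inequality for $c_{1,i}$ to the stated $O(\delta)$ bound is equally terse and suffers the same difficulty.) To repair this step one would need additional points with $x\ne\mathbf{0}$ and $|z|=O(\delta)$, or one must settle for the $O(\sqrt{\epsilon+|a_{ij}|})$ bound and verify it suffices where the lemma is applied.
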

We first prove Theorem~\ref{THM:lowerbound-approx} assuming the lemma above and finally end the section with the proof of the lemma.
\begin{proof}[Proof of Theorem~\ref{THM:lowerbound-approx}]
Given an $n \times n$ symmetric matrix $A$ with zeros on diagonals and given $s > 100$, we assume that the following cases are hard to distinguish for some $\eta_{approx} > 1$, 

\noindent \textbf{YES Case:} $\max_{x \in \{-1,1\}^n} x^T A x < s$.

\noindent \textbf{NO Case:} $\max_{x \in \{-1,1\}^n} x^T A x > s \eta_{approx}$.
The reduction from the instance of the QP problem is sketched below. Next we establish completeness and soundness of the reduction.
\begin{figure}[H]
	\begin{center}
		\fbox{\parbox{0.98\textwidth}{
				\begin{enumerate}[topsep=0pt,itemsep=0pt,partopsep=1ex,parsep=1ex]
					\item Scale the entries of $A$ such that each non zero entry is greater than 10. Scale $s$ by the same factor. Set $\delta = 1/s$ and $\epsilon = 200/n^2$.
				\item Generate the labeled point set $S$ in $\mathbb{R}^{n+1}$ as specified in Lemma~\ref{lem:poly-small-coeffs} with $\tau' = \Omega(\frac{n^2}{\epsilon}) \max(1, 1/(\epsilon+\min_{i\neq j} |a_{i,j}|)), \tau = \Omega(\frac{n}{\epsilon}) \max(1, 1/(\epsilon+\min_{i\neq j} |a_{i,j}|))$, $\gamma = 4n \tau$. 
				\end{enumerate}
		}}
	\end{center}
	\vspace{-10pt}
	
	\caption{\label{ALG:reduce-hardness-of-approx} Reduction from the QP problem.}
	
	\vspace{-10pt}
\end{figure}

\noindent \textbf{NO Case:} The following claim captures the soundness analysis of the reduction. 
\begin{claim}
There does not exist an $\eta \delta$-robust degree-$2$ polynomial on $S$ for $\eta = \Omega(1/\sqrt{\eta_{approx}})$.
\end{claim}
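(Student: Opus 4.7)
The plan is to argue by contradiction. Suppose some degree-$2$ PTF $\sgn(q')$ is $\eta\delta$-robust on $S$ for the target $\eta = c_0/\sqrt{\eta_{approx}}$ with $c_0>0$ a constant to be fixed. Since $\eta\delta$-robustness implies zero empirical error on $S$, Lemma~\ref{lem:poly-small-coeffs} applies: after the normalization $q = q'/(-c_z)$ we obtain
\[
q(x,z) = x^T A' x + c_1^T x + c_2 z^2 - z + c_4 + \sum_i \beta_i z x_i,
\]
where the lemma guarantees that the diagonal entries $|a'_{i,i}|$ and the coefficients $c_2,\|\beta\|_\infty$ are all $O(\epsilon)$, $|c_4|\le 4\delta$, $|c_{1,i}| \le \min_{j\neq i} 8\delta\sqrt{\epsilon+|a_{i,j}|}$, and the off-diagonal entries $a'_{i,j}$ have signs matching $a_{i,j}$ (pinned down by $S_4$) and magnitudes lying in a constant-factor window of $\epsilon+|a_{i,j}|$. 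With the reduction's choice $\epsilon = 200/n^2$, $\delta = 1/s$, and the initial scaling that forces $|a_{i,j}|\ge 10$ for every non-zero entry, all ``slack'' contributions coming from $c_1,c_2,\beta$ and from the additive $\epsilon$ terms in the bounds on $A'$ will turn out to be $o(\delta)$.

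The central step is to convert the NO-case guarantee $\max_{x\in\{-1,1\}^n} x^T A x > s\,\eta_{approx}$ into a lower bound for $A'$ of the form: there exists $x^* \in \{-1,1\}^n$ with
\[
x^{*T} A' x^* \;\ge\; \kappa \cdot s \, \eta_{approx},
\]
for a universal constant $\kappa>0$. The entry-wise bounds of Lemma~\ref{lem:poly-small-coeffs}, together with the sign-matching $\sgn(a'_{i,j}) = \sgn(a_{i,j})$, imply that every off-diagonal contribution $a'_{i,j} x_i x_j$ carries the same sign as $a_{i,j} x_i x_j$ and is at least a constant fraction of $|a_{i,j}|$ in magnitude; this should let us extract the desired constant-factor lower bound on $\max x^T A' x$ from the corresponding hypothesis on $\max x^T A x$.

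The contradiction is then derived at the $(-1)$-labeled point $(\mathbf{0}, 2\delta) \in S_1$. For $\eta\delta$-robustness we would need $q(z_1, 2\delta + z_2) < 0$ on the entire $\ell_\infty$-ball of radius $\eta\delta$, but the adversarial choice $z_1 = \eta\delta x^*$, $z_2 = -\eta\delta$ gives
\[
q(\eta\delta x^*, 2\delta - \eta\delta) = (\eta\delta)^2 x^{*T} A' x^* + \eta\delta\,c_1^T x^* + c_2(2-\eta)^2\delta^2 - (2-\eta)\delta + c_4 + \eta(2-\eta)\delta^2 \beta^T x^*.
\]
Lower-bounding the leading term by $\kappa (\eta\delta)^2 s\,\eta_{approx}$, using $c_4 \ge -4\delta$, and absorbing the remaining contributions into an $o(\delta)$ error (via the coefficient bounds and the chosen scaling) yields
\[
q(\eta\delta x^*, 2\delta-\eta\delta) \;\ge\; \kappa \,\eta^2 \eta_{approx}/s \;-\; 6/s \;-\; o(1/s),
\]
which is strictly positive once $c_0^2 \kappa \ge 7$. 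This contradicts the required negativity at the $(-1)$-labeled point, so no such $q'$ exists and the claim follows.

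The main obstacle is the second step. The entry-wise bounds in Lemma~\ref{lem:poly-small-coeffs} individually permit the ratios $|a'_{i,j}|/|a_{i,j}|$ to vary across pairs, so in principle cancellations between positive and negative contributions in $\sum a'_{i,j} x^*_i x^*_j$ could shrink $x^{*T} A' x^*$ well below a constant fraction of $x^{*T} A x^*$, even when signs are matched. Carefully leveraging the sign-alignment property (forcing the \emph{same} cancellation pattern for $A'$ and $A$ along any fixed $x^*$) together with the constant-factor two-sided per-pair bounds is the crux; alternatively one can choose $x^*$ as the argmax of $x^T A' x$ rather than of $x^T A x$ and argue that this maximum is still $\Omega(s\,\eta_{approx})$ by the same entry-wise control. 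The remaining steps (bounding error terms and numerical optimization of $c_0$) are routine once this lower bound is established.
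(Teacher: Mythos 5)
Your proposal is structurally the paper's own argument run in the contrapositive direction. The paper starts from $\eta\delta$-robustness at the point $((\mathbf{0},2\delta),-1)$, deduces $\max_{x\in B^n_\infty(0,\eta\delta)}x^TA'x < 20\delta$ from the coefficient bounds of Lemma~\ref{lem:poly-small-coeffs}, transfers this to $\max_{x\in B^n_\infty(0,\delta)}x^TAx < 50\delta/\eta^2$, and contradicts the NO-case bound $\max_{x\in B^n_\infty(0,\delta)}x^TAx > \delta\eta_{approx}$. You instead push the NO-case bound forward through $A'$ and exhibit an explicit violating perturbation $(\eta\delta x^*,\,2\delta-\eta\delta)$ at the same labeled point. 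The bookkeeping (the roles of $c_2,c_4,c_1,\beta$, the use of $\delta=1/s$, and the final threshold $\eta^2\eta_{approx}=\Omega(1)$) matches the paper's.

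That said, the step you yourself flag as the ``main obstacle'' is a genuine gap in the write-up: you never establish $\max_x x^TA'x \ge \kappa\,\max_x x^TAx$, and you correctly observe that the per-pair guarantees $\sgn(a'_{i,j})=\sgn(a_{i,j})$ and $|a'_{i,j}|$ within a constant factor of $\epsilon+|a_{i,j}|$ do not by themselves rule out cancellation: if the maximizer of $x^TAx$ achieves value $P-N$ with $P,N\gg P-N$, rescaling each term by an adversarial factor in $[\tfrac14,2]$ can drive $x^TA'x$ at that point far below $\kappa(P-N)$, and the trivial lower bound $\max_x x^TA'x\ge \tfrac{1}{O(n)}\sum_{i,j}|a'_{i,j}|$ loses a factor of $n$ that the reduction cannot afford. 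Your proposed alternative of taking $x^*$ to be the argmax of $x^TA'x$ does not escape this, since one then needs $\max_x x^TA'x=\Omega(s\,\eta_{approx})$, which is the same transfer inequality in disguise. For what it is worth, this is exactly the step the paper dispatches with the single sentence ``Again using Lemma~\ref{lem:poly-small-coeffs} we get that $\max_{x\in B^n_\infty(0,\delta)}x^TAx < 50\delta/\eta^2$,'' so you have located the crux precisely; but as submitted your proof is incomplete at that point, and closing it requires either an argument that the entrywise sign-and-ratio control really does preserve the QP optimum up to a constant, or additional structure in the instance beyond what Lemma~\ref{lem:poly-small-coeffs} records.
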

\begin{proof}
We will prove by contradiction. Let $q(x,z) = x^T A' x + c^T_1x + c_2 z^2 - z + c_4 + \sum_i \beta_i zx_i$ be an $\eta \delta$-robust polynomial on $S$.\footnote{We can always scale $q$ to get it into this form.} The fact that $q$ is correct on $(\mathbf{0}, 2\delta)$ gives us
\begin{align}
\label{eq:hardness-of-approx-1}
4c_2 \delta^2 - 2\delta + c_4 < 0
\end{align}
Furthermore, the fact the fact that $q$ is $\eta \delta$-robust on $(\mathbf{0},2\delta)$ gives us that
\begin{align}
\label{eq:hardness-of-approx-2}
\max_{x \in B^n_\infty(0, \eta \delta), z \in (2\delta - \eta \delta, 2\delta + \eta \delta)} q(x,z)  < |4c_2 \delta^2 - 2\delta + c_4|
\end{align}
From Lemma~\ref{lem:poly-small-coeffs} this implies that
\begin{align}
\label{eq:hardness-of-approx-2}
\max_{x \in B^n_\infty(0, \eta \delta)} x^T A' x  < |4c_2 \delta^2 - 2\delta + c_4| + (2\delta + \eta \delta) + 12 \delta + \epsilon(2\delta + \eta \delta)^2 +  n \epsilon \eta \delta (2\delta + \eta \delta) 
\end{align}
Substituting the value of $\epsilon$ we get that
\begin{align}
\label{eq:hardness-of-approx-3}
\max_{x \in B^n_\infty(0, \eta \delta)} x^T A' x  < 20 \delta.
\end{align}
Again using Lemma~\ref{lem:poly-small-coeffs} we get that
\begin{align}
\label{eq:hardness-of-approx-4}
\max_{x \in B^n_\infty(0, \delta)} x^T A x  < \frac{50 \delta}{\eta^2 }.
\end{align}
But since we are in the NO case we also know that
\begin{align}
\label{eq:hardness-of-approx-5}
\max_{x \in B^n_\infty(0, \delta)} x^T A x  > \delta^2 s \eta_{approx} = \delta \eta_{approx}.
\end{align}
This contradicts the fact that $\eta = \Omega(1/\sqrt{\eta_{approx}})$.
\end{proof}

\noindent \textbf{YES Case:} 
The analysis of the YES case relies on the following claim which establishes $\delta$-robustness of the PTF given by $p(x,z)$ on the point in $S$. 
\begin{claim}\label{claim:ugly-proof}
The polynomial $p(x,z) = x^T A x - z$ is $\delta$-robust on $S$.
\end{claim}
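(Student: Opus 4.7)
The plan is to verify, for each labeled point $(x_0, z_0, y_0) \in S$ and every perturbation $(u,v)$ with $\|u\|_\infty, |v| \le \delta$, the inequality $y_0 \cdot p(x_0+u, z_0+v) > 0$ where $p(x,z) = x^T A x - z$. I would rely on three standing tools. First, because $A$ is symmetric with zero diagonal, $w \mapsto w^T A w$ is multilinear in each coordinate, so its extrema on $[-M,M]^n$ are attained at the $\{\pm M\}^n$ vertices; together with the YES-case bound $\max_{x \in \{-1,1\}^n} x^T A x < s$, this yields $|w^T A w| < M^2 s$ for every $w \in [-M,M]^n$. Second, Step~1 of the reduction in Figure~\ref{ALG:reduce-hardness-of-approx} fixes $\delta s = 1$. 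Third, by plugging suitable $\pm 1$ vectors into the YES-case bound, one controls the row norms $\|A_{i,\cdot}\|_1$ in terms of $s$, which will be used to bound the linear cross terms that arise.

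For $S_1$ (points $(\mathbf{0}, z_0)$) the bound is immediate: $p(u, z_0+v) = u^T A u - z_0 - v$ with $|u^T A u| \le \delta$, so $p$ lies within $2\delta$ of $-z_0$. Since $|z_0| \in \{1, \tau', 2\delta\}$ and the YES-case inequality is strict, $\sgn(p) = -\sgn(z_0) = y_0$ in every sub-case (the tightest being $|z_0| = 2\delta$). For $S_2$ (points $(\pm\mathbf{e}_i, \pm\gamma)$), zero diagonal gives $\mathbf{e}_i^T A \mathbf{e}_i = 0$, so
\[
p(\pm\mathbf{e}_i + u, \pm\gamma + v) \;=\; \pm 2 \mathbf{e}_i^T A u + u^T A u \mp \gamma - v;
\]
here the polynomial part has absolute value at most $2\tau \|A_{i,\cdot}\|_1 \delta + \delta = O(\tau)$, which is dominated by $\gamma = 4n\tau$, so $\sgn(p) = \mp 1$ matches the label.

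The main obstacle is the analysis of $S_3 \cup S_4 \cup S_5$ (points involving $\mathbf{e}_{i,j}$). The leading value is
\[
\mathbf{e}_{i,j}^T A \mathbf{e}_{i,j} \;=\; 2A_{ij} \cdot \tfrac{1}{2(\epsilon + |a_{ij}|)} \;=\; \tfrac{a_{ij}}{2(\epsilon + |a_{ij}|)},
\]
with sign $\sgn(a_{ij})$ and magnitude at most $1/2$. At the unperturbed points this gives $p \le -3/2$ for $S_3$, $p \ge 3/2$ for $S_5$, and $\sgn(p) = \sgn(a_{ij})$ (times the appropriate $\pm$ combinations) with margin at least $(|a_{ij}|-\epsilon)/(|a_{ij}|+\epsilon)$ for $S_4$. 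To handle perturbations, expand $(\mathbf{e}_{i,j}+u)^T A (\mathbf{e}_{i,j}+u) = \mathbf{e}_{i,j}^T A \mathbf{e}_{i,j} + 2\mathbf{e}_{i,j}^T A u + u^T A u$ and bound $|\mathbf{e}_{i,j}^T A u| \le \tfrac{\delta}{\sqrt{2(\epsilon+|a_{ij}|)}}(\|A_{i,\cdot}\|_1 + \|A_{j,\cdot}\|_1)$ together with $|u^T A u| \le \delta$. The hard part will be ensuring that this cumulative error is strictly smaller than the (small, constant) margins above: this relies on the ``each nonzero entry $> 10$'' scaling step (which may be strengthened to a larger constant if needed to guarantee sufficient margin), on the choice $\epsilon = 200/n^2$, and on the row-norm control from the YES case. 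Carrying out this bookkeeping across the four sign variants $\mathbf{e}_{\pm i, \pm j}$ in each of $S_3, S_4, S_5$, and accounting for the $2\mathbf{e}_{i,j}$ scaling in $S_4$ (which doubles both the leading term and the cross terms), completes the case analysis and shows that $p$ is $\delta$-robust on $S$.
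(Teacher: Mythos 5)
Your proposal is correct and follows essentially the same route as the paper's proof: a case analysis over $S_1,\dots,S_5$ that expands $p$ around each point, bounds the pure quadratic perturbation $u^TAu$ by $\delta^2 s=\delta$ via multilinearity and the \textsc{Yes}-case bound, bounds the cross terms via row $\ell_1$-norms of $A$, and closes the gap using the entry-scaling step and the choice of $\epsilon,\tau,\tau',\gamma$. The only loose point is your claim that the row-norm control comes from ``plugging suitable $\pm1$ vectors'' into the \textsc{Yes}-case bound: the needed inequality $\delta\sum_j|a_{i,j}|\le n$ actually rests on the averaging lemma $s\ge\frac{1}{n}\sum_{i,j}|a_{i,j}|$ of Charikar--Wirth (not a single substitution), which is precisely the fact the paper cites at this step.
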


We defer the proofs of Claim~\ref{claim:ugly-proof} and Lemma~\ref{lem:poly-small-coeffs} to Section~\ref{app:lb-proofs}.

\paragraph{A Lower Bound for Weak Robust Learning.}

We also prove a robust lower bound that rules out the possibility of weak robust learning with $\gamma=1$. 
This hardness result allows the algorithm to output a robust classifier that makes errors on constant fraction of the points! Hence, even when there is a degree-$2$ PTF that has $\delta$ robust error of $0$, it is computationally hard to output a degree-$2$ PTF that has $\delta$-robust error of $\epsilon \le \tfrac{1}{4}$.

%\begin{corollary}[Same as Theorem~\ref{thm:lowerbound-approx}]
%There exists $\delta,\epsilon>0$, %and a distribution $D$ over $\mathbb{R}^n \times \{-1, +1\}$ and $\epsilon > 0$, 
%such that assuming $NP \ne RP$ there is no polynomial time algorithm that given a set of $N=poly(n, \frac{1}{\epsilon})$ samples from a distribution $D$ over $\mathbb{R}^n \times \{-1, +1\}$ can distinguish between the following two cases for any $\delta'=o(\sqrt{\eta_{approx}} \delta)$:
%\begin{itemize}
%\item {\sc Yes:} There exists a degree-$2$ PTF that has $\delta$-robust error of $0$ w.r.t. $D$.
%\item {\sc No:} There exists no degree-$2$ PTF that has $\delta'$-robust error at most $\epsilon$ w.r.t. $D$.
%\end{itemize} 
%Here $\eta_{approx}$ is the hardness of approximation factor of the {\em QP} problem.
%\end{corollary}
%The above theorem proves that any polynomial time algorithm that always outputs a robust classifier (or declares failure if it does not find one) will have to incur an extra factor of $\Omega(\sqrt{\eta_{approx}})$ in the robustness parameter $\delta$. Our upper bound in Section~\ref{sec:upper-bound} on the other hand matches this bound. 

\begin{theorem}\label{thm:lowerbound-strong}[Stronger Distributional Hardness]
For every $\delta>0$ and $\epsilon \in (0, \frac 1 4)$, assuming $NP \ne RP$ there is no polynomial time algorithm 
that given a set of $N=poly(n, \frac{1}{\epsilon})$ samples from a distribution $D$ over $\mathbb{R}^n \times \{-1, +1\}$ can distinguish between the following two cases:
\begin{itemize}
\item {\sc Yes:} There exists a degree-$2$ PTF that has $\delta$-robust error of $0$ w.r.t. $D$.
\item {\sc No:} There exists no degree-$2$ PTF that has $\delta$-robust error at most $\epsilon$ w.r.t. $D$.
\end{itemize} 
%Here $\eta_{approx}$ is the hardness of approximation factor of the {\em QP} problem.

%there exists a distribution $D$ over $\mathbb{R}^n \times \{-1, +1\}$, such that assuming $NP \ne RP$ there is no polynomial time algorithm that given a set of $N=poly(n, \frac{1}{\epsilon})$ points from $D$ labeled by a degree-$2$ PTF that has $\delta$-robust error of $0$ w.r.t. D,  outputs a degree-$2$ PTF of $\delta$-robust error at most $\epsilon$ w.r.t. $D$.
%following two cases.\\
%\noindent {\bf \textsc{Yes:}} There is a degree-$2$ PTF $p^*$ that has $\delta$-robust error of $0$ on these $N$ points. \\
%\noindent {\bf \textsc{No:}} There is no degree-$2$ PTF that has $\delta$-robust error of $0$ on these $N$ points.
\end{theorem}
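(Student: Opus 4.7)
The plan is to extend the reduction from Quadratic Programming (QP) used in Theorem~\ref{THM:lowerbound-approx} to a distribution $D$ over $\R^{n+1} \times \{-1,+1\}$ that witnesses distributional hardness. Given a QP instance $(A,\beta)$ from the NP-hard gap, I would support $D$ on the structural set $S$ of Lemma~\ref{lem:poly-small-coeffs} together with a critical point $(\mathbf{0},\beta)$ labeled $-1$. The role of $S$ is to pin down the coefficients of any degree-$2$ PTF that correctly robustly classifies it, forcing such a PTF close to the target polynomial $p(x,z) = x^T A x - z$. The critical point serves as the QP witness whose robustness discriminates the two cases.

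For completeness, I would reuse Claim~\ref{claim:ugly-proof} in the YES case, where the scaling of Figure~\ref{ALG:reduce-hardness-of-approx} gives $\max_{\|x\|_\infty \le \delta} x^T A x < \beta - \delta$, and conclude that $\sgn(p)$ is $\delta$-robust at every point in the support of $D$, hence achieves zero $\delta$-robust error.

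For soundness, I would argue by contradiction in the NO case. Suppose a degree-$2$ PTF $q$ has $\delta$-robust error at most $\epsilon<\tfrac14$. The plan is to assign the critical point mass exceeding $\epsilon$ (for instance $\tfrac14 + \eta$ for some small $\eta>0$) and the structural set $S$ the remaining mass, and then to show that $q$ must inherit the non-robustness of $p$ at $(\mathbf{0},\beta)$. The essential structural claim is a \emph{robust} version of Lemma~\ref{lem:poly-small-coeffs}: if $q$ correctly and $\delta$-robustly classifies all but an $o(1)$-fraction of the constraints in $S$, then after positive rescaling its coefficients lie within $O(\delta)$ of those of $p$. Granted this, in the NO case where $p$ is not $\delta$-robust at $(\mathbf{0},\beta)$, any PTF sufficiently close to $p$ also fails robustness there, so $q$ contributes robust error at least $\tfrac14+\eta > \epsilon$ from the critical point alone, a contradiction.

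The principal obstacle is establishing this robust version of Lemma~\ref{lem:poly-small-coeffs}. The original proof pins down the coefficients of $q$ through a linear system in $\Theta(n^2)$ constraints with rank deficiency exactly one; naively allowing a few violated constraints could let the solution drift arbitrarily, and we cannot simply assign each of the $\Theta(n^2)$ structural points weight greater than $\epsilon$ since that would force $\epsilon = O(1/n^2)$. I would attack this via a quantitative stability argument: treat $q$ as an approximate solution to the structural linear system, isolate the one-dimensional kernel (which corresponds exactly to positive rescalings of $p$), and use the well-conditionedness of the coefficient matrix built in Lemma~\ref{lem:poly-small-coeffs} to convert an $o(1)$ residual into $O(\delta)$ coefficient error. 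Carrying this stability analysis through with constants compatible with the $\epsilon<\tfrac14$ budget, together with choosing the structural weighting so the robust lemma applies under the $\epsilon$-error budget, is the technical crux of the proof.
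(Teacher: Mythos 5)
Your high-level plan (a structural gadget that pins down the PTF, plus a heavily weighted critical point, plus a pigeonhole/weighting argument to survive an $\epsilon$ fraction of errors) is the right shape, but the specific route has a genuine gap at exactly the place you flag: the ``robust version of Lemma~\ref{lem:poly-small-coeffs}'' is not obtainable by a conditioning/stability argument, because the constraint system defined by $S$ has no redundancy to be stable about. Each coefficient of $q$ is controlled by a \emph{constant} number of designated points: $c_2,c_4$ only by the six points of $S_1$, each $c_{1,i}, a'_{i,i},\beta_i$ only by the $O(1)$ points of $S_2$ on the $i$-th axis, and each $a'_{i,j}$ only by the $O(1)$ points of $S_3\cup S_4\cup S_5$ supported on coordinates $i,j$. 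Since $|S|=\Theta(n^2)$ and the structural mass is a constant, each such group carries probability mass $O(1/n^2)$, so an adversary with a constant error budget $\epsilon$ can simply be wrong (or non-robust) on all points of any chosen group at negligible cost, leaving the corresponding coefficient completely unconstrained --- e.g.\ dropping the group for a single pair $(i,j)$ destroys the lower bound $|a'_{i,j}|/(\epsilon+|a_{i,j}|)\ge \tfrac14-\delta-\tfrac{\epsilon}{4}$, and then the step transferring $\max x^TA'x$ to $\max x^TAx$ in the soundness argument fails. This is not a matter of a residual-to-coefficient bound for a well-conditioned overdetermined system; the system is exactly determined group by group, so losing a group loses a coordinate of the solution entirely.

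The paper sidesteps this by not using the deterministic gadget of Lemma~\ref{lem:poly-small-coeffs} at all for this theorem. Instead (Theorem~\ref{THM:lowerbound-strong-nondistributional}) it reuses the \emph{randomized} gadget of Figure~\ref{ALG:reduce}: $N_2/2=n^3$ pairs $(u^{(\ell)},z_u^{(\ell)}),(v^{(\ell)},z_v^{(\ell)})$, each of which forces any robust consistent PTF to pass exactly through $(x^{(\ell)},z^{(\ell)})$ (Lemma~\ref{LEM:passingthrough}), together with $N_1=n^3$ copies of the critical point $(0,\alpha)$. Here redundancy is automatic: by Lemma~\ref{LEM:uniqueness}, \emph{any} $(n+1)^2$ surviving interpolation conditions pin down the polynomial exactly (up to scale) as $z-x^TAx$, with probability $1$ over the random $x^{(\ell)}$, so a pigeonhole count shows that after discarding an $\epsilon$ fraction of the $3n^3$ points one still retains enough pairs for uniqueness and at least one copy of the critical point. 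Note this yields the $\gamma=1$ statement of Theorem~\ref{thm:lowerbound-strong}; it does not combine with the approximation-factor loss of Theorem~\ref{thm:lowerbound-approx}, and getting constant-error hardness together with the $\sqrt{\eta_{approx}}$ gap (which is what your construction would give if the robust lemma held) appears to require adding genuine redundancy to the deterministic gadget, not a stability analysis of the existing one.
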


The proof of the above theorem uses non-distributional hardness in Theorem~\ref{THM:lowerbound-strong-nondistributional}. Please see Section~\ref{app:weaker-hardness}.

\section{Finding Adversarial Examples for Two Layer Neural Networks}\label{sec:nn}

Next we use the framework in Section~\ref{sec:optimization} to design new algorithms for finding adversarial examples in two layer neural networks with ReLU activations. The description that follows applies to binary classification and can be easily extended to multiclass classification. %; t follows applies to binary classification; this can be extended to multiple label classification by considering the standard approach of using a binary classifier for each label.
The binary classifier corresponding to the network is $sgn(f_1(x) - f_2(x))= sgn(v^T \sigma(Wx))$ where $v=v_1 - v_2$.
The optimization problem that arises is the following: given an instance with $A \in \R^{m_1 \times n}, \beta \in \R^{m_2}, B \in \R^{m_2 \times n}, c_1 \in \R^n, c_2 \in \R^{m_1}, c_0 \in \R$, the goal is to find $opt(A,B,\beta, c)$, defined as :
\begin{align}
 opt(A,B,\beta, c)&:=\max_{z: \norm{z}_\infty \le \delta} \norm{c_2 + A z}_1 +c_1^T z  - \norm{\beta+Bz}_1 +c_0 \nonumber\\
 &=\max_{z: \norm{z}_\infty \le \delta} \max_{y: \norm{y}_\infty \le 1} y^T A z + c_1^T z +c_2^T y - \sum_{j=1}^{m_2} \abs{\beta_j+ B_j^T z}. \label{eq:nn:opt}
\end{align}
%\begin{align}
% &\max_{z: \norm{z}_\infty \le \delta} \max_{y: \norm{y}_\infty \le 1} f(y,z), \text{ where }  f(y,z):=y^T A z +c_1^T z +c_2^T y - \norm{\beta+Bz}_1 +c_0 %\nonumber\\
%% &=\max_{z: \norm{z}_\infty \le \delta} \max_{y: \norm{y}_\infty \le 1} y^T A z + c_1^T z +c_2^T y - \sum_{j=1}^{m_2} \abs{\beta_j+ B_j^T z}.
%\label{eq:nn:opt}
%\end{align}
Here $B_j$ is the $j$th row of $B$. Let $c$ denote $(c_0, c_1, c_2)$, and let $opt(A,B,\beta,c)$ be the optimal value of the above problem.
\anote{Motivate this?}

To see the connection to polynomial optimization, notice that if $B=0$, then the above problem is exactly the one we considered in Section~\ref{sec:optimization} in the context of degree-$2$ PTFs. Furthermore, if $A=0$, then~\ref{eq:nn:opt} is a linear program. However, the presence of both the terms involving $A$ and $B$ make~\ref{eq:nn:opt} a challenging optimization problem. Next we discuss how the problem is related to finding adversarial examples for $2$-layer neural networks. A two layer neural network with ReLU gates is given by parameters $(v_1, v_2, W)$ and outputs $f_1(x)=v_1^T \sigma(Wx), f_2(x)=v_2^T \sigma(Wx)$ where $x \in \R^n, v_1, v_2 \in \R^k$ and $W\in \R^{k \times n}$. Here $\sigma: \R^m \to \R^m$ is a co-ordinate wise non-linear operator $\sigma(y)_i = \max\set{0, y_i}$ for each $i \in [m]$. The classifier corresponding to the network is $sgn(f_1(x) - f_2(x))= sgn( (v_1 - v_2)^T \sigma(Wx))= sgn(v^T \sigma(Wx))$. \anote{Motivate this?}

Our algorithm for solving \eqref{eq:nn:opt} given in Figure~\ref{ALG:SDP:nn} is inspired by Algorithm~\ref{ALG:SDPQP} for polynomial optimization. However, the rounding algorithm differs because the variables $y_j$ and variables $z_i$ serve different purposes in \eqref{eq:nn:opt}, and we need to simultaneously satisfy different constraints on them to produce a valid perturbation. %and the analysis is much more challenging here e.g., the constraint $\norm{y}_\infty \le 1$ can not tolerate much violation unlike $\norm{z}_\infty$.
Moreover when the SDP is negative, then this gives a certificate of robustness around $x$.
\anote{Replaced guarantees by following:}

We remark that one can obtain provable guarantees similar to Theorem~\ref{thm:upper-bound-main} for Algorithm~\ref{ALG:SDP:nn}  under certain regularity conditions about the SDP solution. However, this is unsatisfactory as this depends on the SDP solution to the given instance, as opposed to an explicit structural property of the instance. Obtaining provable guarantees of the latter kind is an interesting open question. The following proposition holds in a more general setting where there can be an extra linear term as described below.
%Our guarantees in terms of the violation of $\ell_\infty$ radius will depend on a natural notion of margin to capture adversarial examples which violate the decision boundary by a small margin parameterized by $\eta$ (see Definition~\ref{def:eta-margin}).
\anote{Should we change constraint $\norm{v_i}^2 \le (1-\eta)^2$ to $\norm{v_i}^2 \le 1$.}
\begin{figure}
\begin{center}
\fbox{\parbox{0.98\textwidth}{
\begin{enumerate}[topsep=0pt,itemsep=-1ex,partopsep=1ex,parsep=1ex]
\item Given instance $I=(A,B,\beta, c)$ of \eqref{eq:nn:opt}, solve SDP with parameter $\eta \in (0,1)$:
\begin{align*}
sdp &= \max \sum_{j \in [m_1],i \in [n]} A_{j,i} \iprod{v_j, u_i} + \sum_{i=1}^n c_1(i) \iprod{u_i,u_0}  + \sum_{j=1}^{m_1} c_2(j) \iprod{u_0,v_j} - \sum_{j \in [m_2]} r_j +c_0 \\
\text{s.t.} & \forall j \in [m_1] ~~ \|v_i\|^2 \leq 1 , ~~ \forall i \in \set{1, \dots, n}~~ \|u_i\|^2 \leq \delta^2, ~~\text{ and } \|u_0\|^2 = 1 \nonumber \\
&\forall  j \in [k_2]~~~ r_j \ge (\beta_j + \sum_j B_{j,i} \iprod{u_i, u_0}),  ~\text{ and } r_j \ge -(\beta_j + \sum_j B_{j,i} \iprod{u_i, u_0}). \nonumber
\end{align*}
\vspace{-10pt}
\item Let $u_i^{\perp}, v_j^{\perp}$ represent the components of $u_i,v_j$ orthogonal to $u_0$. Let $\epsilon \in (0,1)$ with $\epsilon=\Omega(1)/\sqrt{\log m_1}$. Let $\zeta \sim N(0, I)$ be a Gaussian vector; set
$\forall i \in \set{0,1,\dots, n},~\widehat{z}_i  := \iprod{u_i,u_0}+\tfrac{1}{\epsilon}\iprod{u^{\perp}_i,\zeta}, ~ \widehat{y}_j  := \iprod{v_j,u_0}+\epsilon \iprod{v^{\perp}_j,\zeta}
$.
%, where $\Pi^{\perp}$ is the projection matrix onto the subspace of $span(\set{u^{\perp}_1,\dots,u^{\perp}_n})$.
% the SDP solution as:
%Consider the following randomized rounding algorithm that returns a solution $\set{\widehat{x}_i: i \in [n]}$ :
%\begin{align}
%\forall i \in \set{0,1,\dots, n},~~\widehat{z}_i & := \iprod{u_i,u_0}+\tfrac{1}{\epsilon}\iprod{u^{\perp}_i,\zeta}, ~~ \widehat{y}_j & := \iprod{v_j,u_0}+\epsilon \iprod{v^{\perp}_j,\zeta} \label{eq:roundingQP}
%\end{align}
\item Repeat rounding with $poly(n)$ random choices of $\zeta$ and pick the best choice.
\end{enumerate}
}}
\vspace{-10pt}
\end{center}

\caption{\label{ALG:SDP:nn} The SDP-based algorithm for Problem~\eqref{eq:nn:opt}.}

\vspace{-15pt}

\end{figure}

\anote{Remove mention of theoretical guarantees.}

\begin{proposition}
Let $\gamma\ge 1$. Suppose there is an algorithm that given an instance of problem~\eqref{eq:nn:opt} finds a solution $\widehat{z}, \widehat{y}$ with $\norm{\widehat{z}}_\infty \le \gamma \delta, \norm{\widehat{y}}_\infty \le 1$ such that $\widehat{y}^T A\widehat{z} + c_1^T \widehat{z}+c_2^T \hat{y} -\norm{\beta+B \widehat{z}}_1 +c_0 >0$ when $opt(A,B, \beta, ,c)>0$, then
there is a polynomial time algorithm that given a classifier $sgn(f(x))$ corresponding to a two layer neural net where $f(x):=v^T \sigma(Wx)+(v')^Tx$ and an example $x^*$, guarantees to either (a) find an adversarial example in the $\ell_\infty$ ball of $\gamma \delta$ around $x^*$,  or (b) certify the absence of any adversarial example in the $\ell_\infty$ ball of $\delta$.
\end{proposition}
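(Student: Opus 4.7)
The plan is to mimic the template of Proposition~\ref{prop:advexamples}. Set $y^* := \sgn(f(x^*))$ and $u := y^* v \in \R^k$; then finding an adversarial example amounts to deciding whether $\max_{\norm{z}_\infty \le \delta} g(z) > 0$, where
\[
g(z) \;:=\; -y^* f(x^* + z) \;=\; -u^\transpose \sigma(W(x^* + z)) - y^* (v')^\transpose (x^* + z).
\]
Because $\sigma$ is a ReLU, $g$ is no longer a polynomial, so we cannot appeal to Proposition~\ref{prop:advexamples} directly; instead I would cast $g$ in the $\ell_1$-difference form of \eqref{eq:nn:opt}. The key identity is $\sigma(a) = \tfrac{1}{2}(a + \abs{a})$, which lets us rewrite $u^\transpose \sigma(W(x^*+z))$ as an affine function of $z$ plus the signed sum $\tfrac{1}{2} \sum_{j=1}^{k} u_j \abs{W_j (x^* + z)}$. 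Splitting this sum according to the sign of $u_j$ turns it into a difference of two nonnegative $\ell_1$-type terms, which is exactly the structure appearing in \eqref{eq:nn:opt}.

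Concretely, let $I^+ := \set{j : u_j > 0}$ and $I^- := \set{j : u_j < 0}$, and build the instance of \eqref{eq:nn:opt} as follows: let $A$ be the matrix whose rows (indexed by $I^-$) are $\tfrac{\abs{u_j}}{2} W_j$ with $(c_2)_j = \tfrac{\abs{u_j}}{2} W_j x^*$; let $B$ be the matrix whose rows (indexed by $I^+$) are $\tfrac{\abs{u_j}}{2} W_j$ with $\beta_j = \tfrac{\abs{u_j}}{2} W_j x^*$; and set $c_1 = -\tfrac{1}{2} W^\transpose u - y^* v'$ together with $c_0 = -\tfrac{1}{2} u^\transpose W x^* - y^* (v')^\transpose x^*$. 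A direct expansion, using $\sigma(a) = \tfrac{1}{2}(a + \abs{a})$ termwise on the rows of $W(x^* + z)$, then verifies the identity
\[
\norm{c_2 + Az}_1 + c_1^\transpose z - \norm{\beta + Bz}_1 + c_0 \;=\; g(z) \qquad \text{for every } z \in \R^n,
\]
so the optimum of \eqref{eq:nn:opt} on this instance equals $\max_{\norm{z}_\infty \le \delta} g(z)$.

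Now I would run the hypothesized algorithm on this instance with parameter $\delta$. If it returns $(\widehat{z}, \widehat{y})$ with $\widehat{y}^\transpose A \widehat{z} + c_1^\transpose \widehat{z} + c_2^\transpose \widehat{y} - \norm{\beta + B \widehat{z}}_1 + c_0 > 0$, then since $\widehat{y}^\transpose A \widehat{z} + c_2^\transpose \widehat{y} \le \max_{\norm{y}_\infty \le 1}\bigparen{y^\transpose A \widehat{z} + c_2^\transpose y} = \norm{c_2 + A \widehat{z}}_1$, the objective of \eqref{eq:nn:opt} at $\widehat{z}$ is also strictly positive, i.e., $g(\widehat{z}) > 0$, so $\sgn(f(x^* + \widehat{z})) \ne y^*$ with $\norm{\widehat{z}}_\infty \le \gamma \delta$, yielding case (a). Otherwise the returned value is $\le 0$, and the contrapositive of the algorithmic guarantee forces $opt(A, B, \beta, c) \le 0$, meaning $g(z) \le 0$ for every $\norm{z}_\infty \le \delta$, so no adversarial example exists in $B_\infty^n(x^*, \delta)$, yielding case (b). The main (and essentially only) obstacle is identifying the ReLU-to-$\ell_1$ decomposition that makes the neural-network objective fit the algebraic template of \eqref{eq:nn:opt}; once this mapping is in place, the rest is symbol-pushing and an invocation of the algorithm's guarantee.
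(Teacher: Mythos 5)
Your proof is correct and follows essentially the same route as the paper: the identity $\sigma(a)=\tfrac{1}{2}(a+\abs{a})$, splitting the hidden units by the sign of $y^*v_j$ into the $A$-rows (maximized via $\norm{c_2+Az}_1$) and $B$-rows (the $-\norm{\beta+Bz}_1$ term), and then the Proposition~\ref{prop:advexamples}-style case analysis. If anything, your bookkeeping of the $-y^*$ factor in $c_1$ and $c_0$ is more careful than the paper's.
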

\begin{proof}
Let $\ell(x^*)=sgn(f(x^*))$.
We first observe that $\sigma(y_j)= \tfrac{1}{2}(|y_j|+y_j)$, and $\sigma(Wx)_j= \tfrac{1}{2}(|\iprod{W_j, x}| + \iprod{W_j,x})$, where $W_j$ is the $j$th row of $W$.
%Consider a classifier of the form $sgn(f(x))$ where $f(x)=v^T \sigma(Wx)$, where $v:=v_1 - v_2$.
We want to find a $\widehat{z}$ with $\norm{\widehat{z}}_\infty \le \gamma \delta$, such that $(-\ell(x^*)) f(x^*+\widehat{z}) >0$, or certify that there is no such $\widehat{z}$ with $\norm{\widehat{z}}_\infty \le \delta$.

Let $S_+=\set{j \in [k]: -\ell(x^*) v_j \ge 0}$ and $S_-=[k]\setminus S_+$ and let $k_1 = |S_+|$. We now split the rows of $W$ into two ($A$ and $B$) as follows: for every $j \in S_+$, define the row $A_j := \tfrac{1}{2}|v_j| W_j$; otherwise let $B_j:=\tfrac{1}{2}|v_j| W_j$. %Then
\begin{align*}
  -\ell(x^*) f(x^*+z)&=\tfrac{1}{2}\sum_{j \in S_+} |v_j| |\iprod{W_j,x^*+z}| + \tfrac{1}{2}\iprod{v^T W, x^*+z} - \tfrac{1}{2}\sum_{j \in S_-} |v_j| |\iprod{W_j,x^*+z}| \\
  &= \max_{y \in \set{-1,1}^{k_1}} \sum_{j \in S_+}  y_j \iprod{A_j,x^*+z}  - \sum_{j \in S_-}  |\iprod{B_j,x^*+z}| + c_1^T z + c_0,
\end{align*}
where $c_1^T= \tfrac{1}{2}v^T W+ (v')^T$ and $c_0 = \tfrac{1}{2} v^T W x^*$ are constants. Since the dependence on $y$ is linear we also get by substituting $c_2:=A x^*$, $\beta:=B x^*$,
\begin{align*}
  \max_{\norm{z}_\infty \le \delta}  (-\ell(x^*)) f(x^*+z)&= \max_{\norm{z}_\infty \le \delta} \max_{y: \norm{y}_\infty \leq 1} \sum_{j \in S_+}  y_j \iprod{A_j,z} +  c_2^T y+ c_1^T z- \sum_{j \in S_-} |\beta_j+\iprod{B_j, z}| + c_0,
\end{align*}
as required. Now the proposition follows from the same argument as in Proposition~\ref{prop:advexamples}.
\end{proof}

\section{Experiments}
\label{sec:experiments-app}
In this section, we evaluate the performance of the SDP based rounding algorithm outlined in Figure~\ref{ALG:SDP:nn} to generate adversarial examples for depth-2 neural networks with ReLU gates, and compare it with the projected gradient descent(PGD) based attack of Madry et al.~\cite{madry2017towards}. We will show that our approach indeed finds more adversarial examples. This however, comes at a computational cost since we need to solve one SDP per example and per pair of classes. We use the MNIST data set and our two layer neural network has $d=784$ input units, $k=1024$ hidden units and $10$ output units. This leads to an SDP with $d+k+1$ vector variables. On an standard desktop with Intel i5 $4590$ processor, and $4$ cores \@ $3.30$GHz, solving one SDP instance takes $200$ seconds on average. As a consequence we perform our experiments on randomly chosen subsets of the MNIST data set. Another optimization we perform for computational reasons is that given an example $x$ with predicted class $i$, rather than checking for every class $j$, if one can find an attack example $z$ that misclassifies $x+z$ to be in class $j$, we simply pick $j$ to be the class label of the second highest prediction at $x$. Hence, the numbers we report below are an underestimate of the effectiveness of the full SDP based algorithm 

\begin{figure}
	%\label{fig:unorm}
	%\centering
	\includegraphics[width = 0.32\textwidth]{./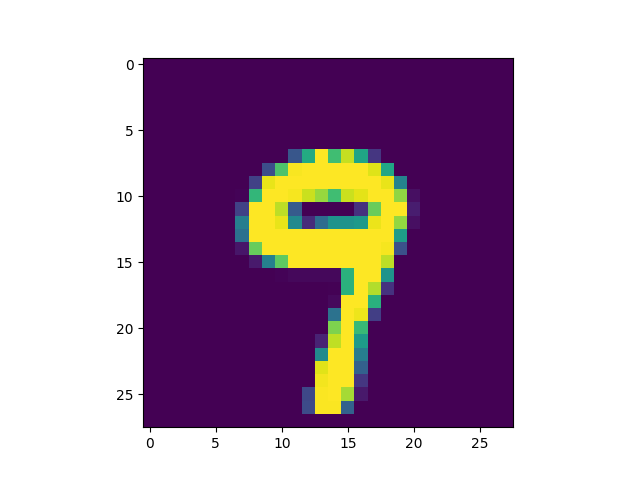}
	\includegraphics[width = 0.32\textwidth]{./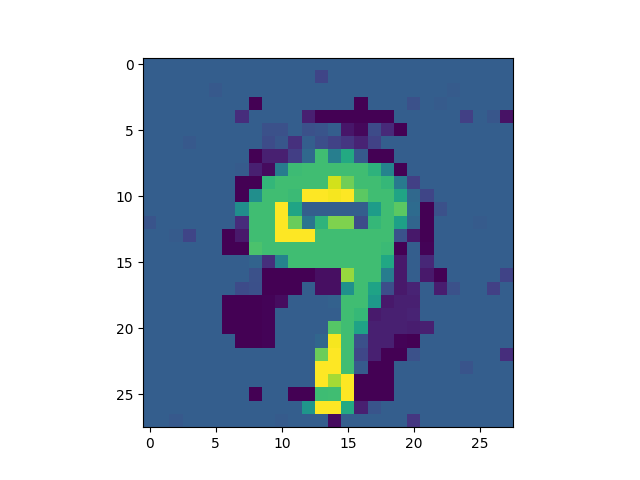}
	\includegraphics[width = 0.32\textwidth]{./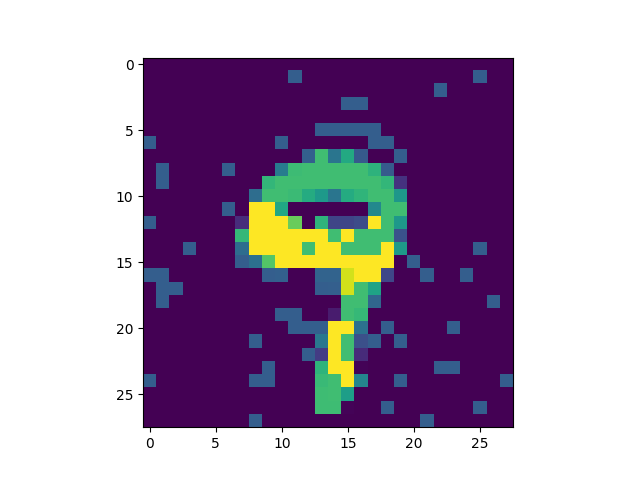}
	\includegraphics[width = 0.32\textwidth]{./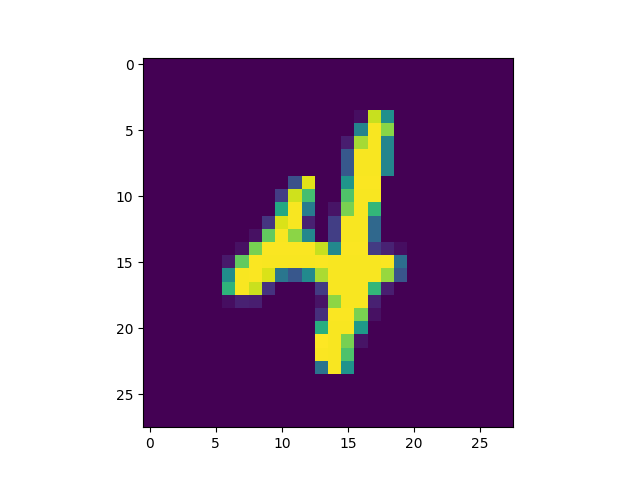}
	\includegraphics[width = 0.32\textwidth]{./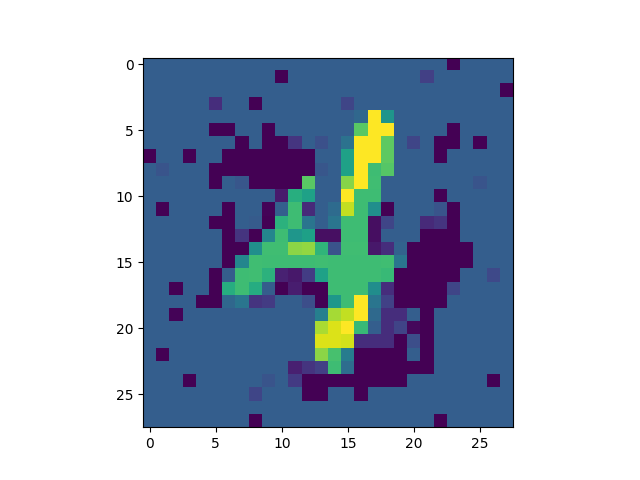}
	\includegraphics[width = 0.32\textwidth]{./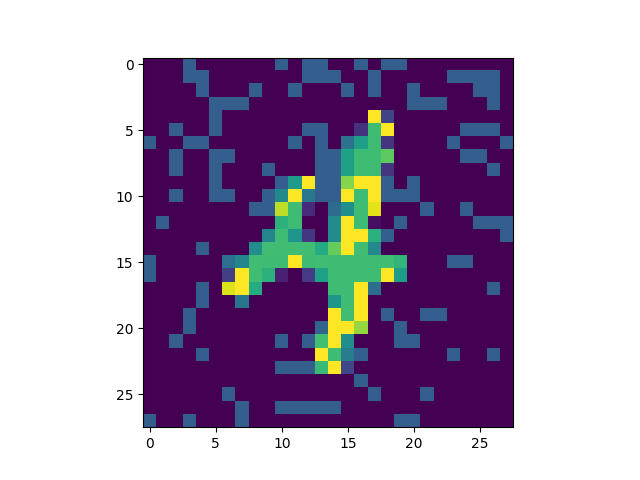}
	\includegraphics[width = 0.32\textwidth]{./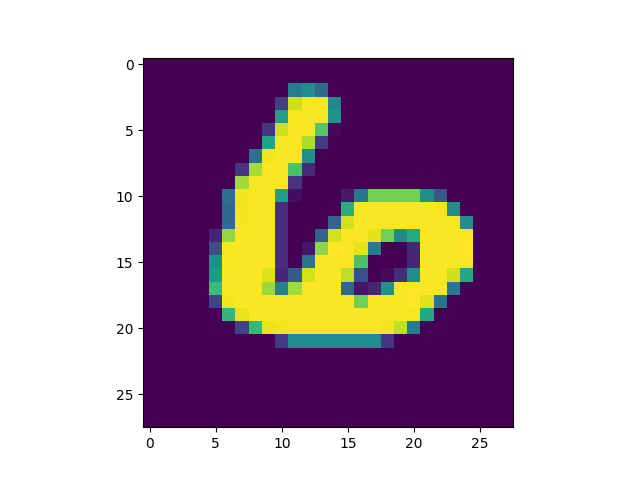}
	\includegraphics[width = 0.32\textwidth]{./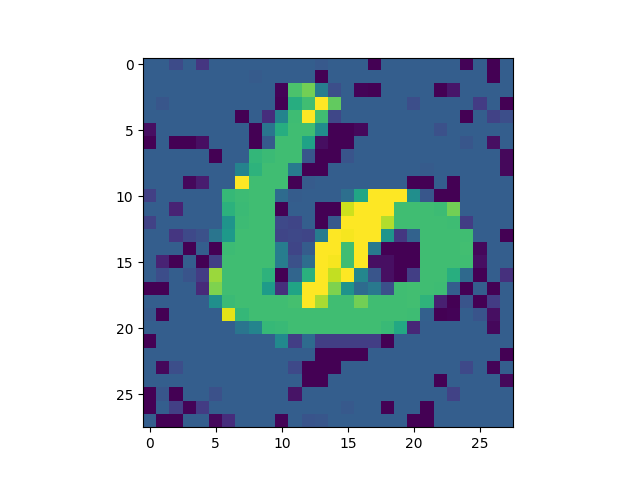}
	\includegraphics[width = 0.32\textwidth]{./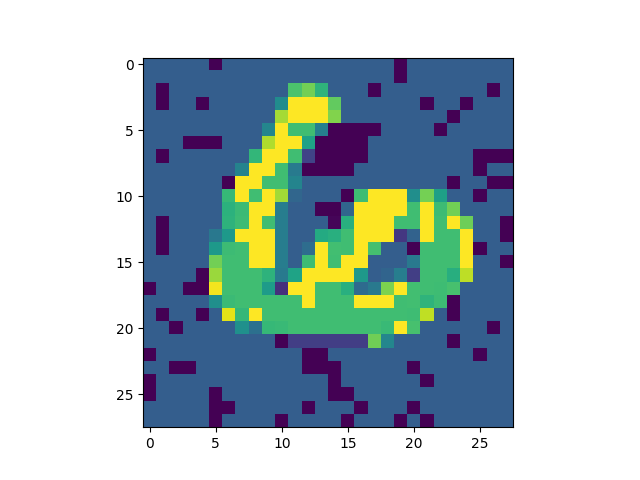}
	%	\label{fig:unorm}
	%\caption{\label{fig:unorm}Histogram plot of $\norm{u_i}^2$ and $\norm{u_i^\bot}^2$ as defined in \ref{ALG:SDP:nn}. The data is from $800$ random examples from PGDfail with $\epsilon = 0.03$ with $784$ data points for every example.}
	\caption{The figure shows three MNIST random samples from PDGfail (i.e., examples where PGDattack failed to find an adversarial perturbation), where SDPattack successfully finds adversarial perturbations for $\delta=0.3$. The images in the first column represent the original images corresponding to three, the second column represents the perturbed images produced by the failed PGDattack, and perturbed images produced by the successful SDPattack. Visual inspection of these examples suggest that our method often produces sparse targeted perturbations. } \label{fig:digits}
\end{figure}

\begin{figure}
	%\label{fig:unorm}
	%\centering
	\includegraphics[width = 0.32\textwidth]{./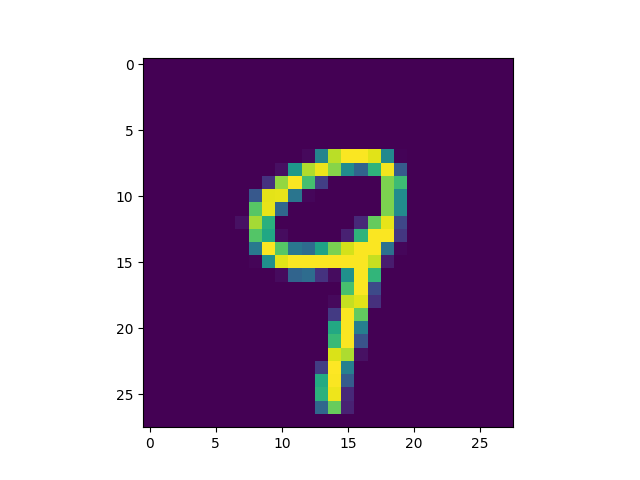}
	\includegraphics[width = 0.32\textwidth]{./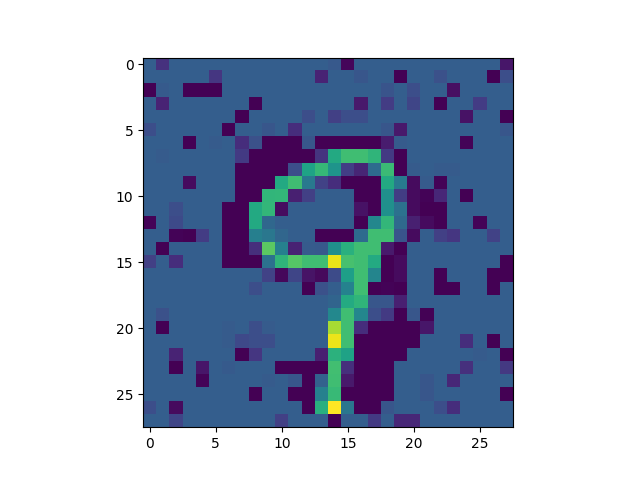}
	\includegraphics[width = 0.32\textwidth]{./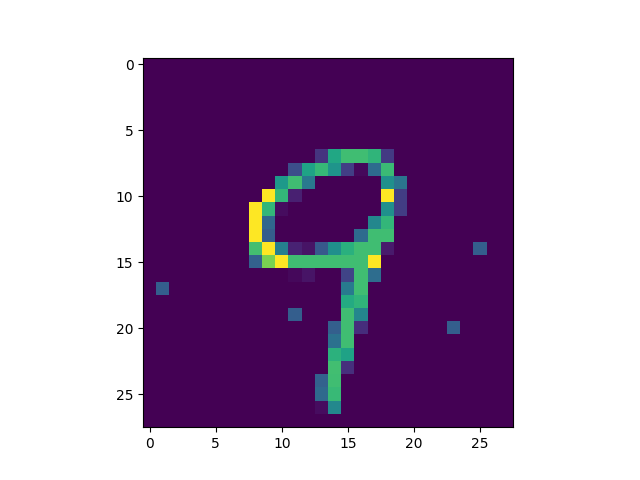}
	\includegraphics[width = 0.32\textwidth]{./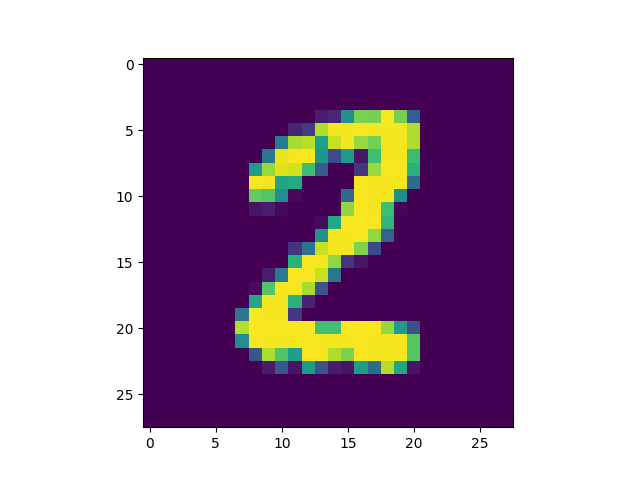}
	\includegraphics[width = 0.32\textwidth]{./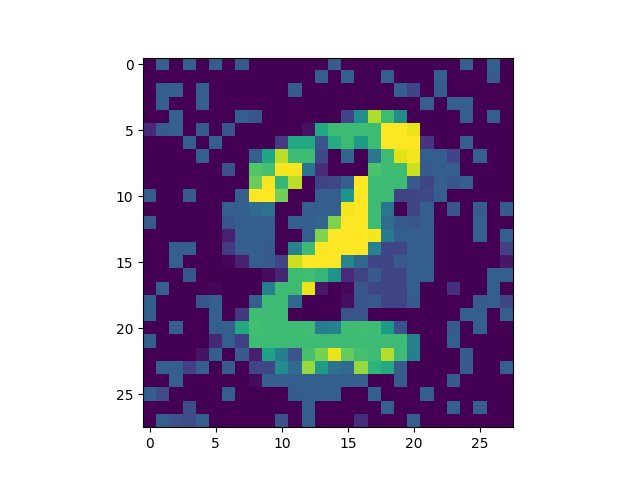}
	\includegraphics[width = 0.32\textwidth]{./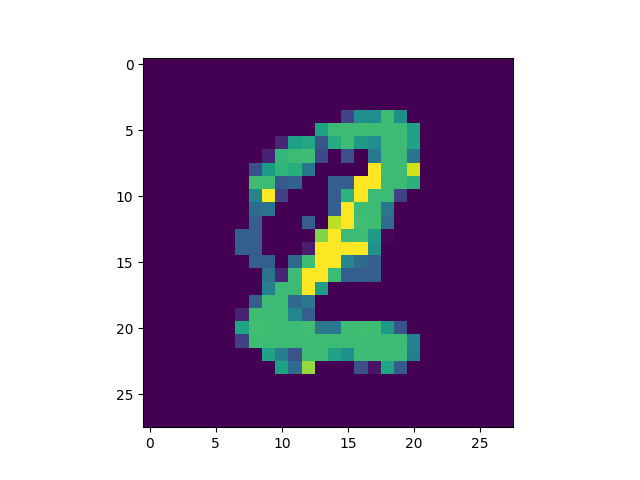}
	\includegraphics[width = 0.32\textwidth]{./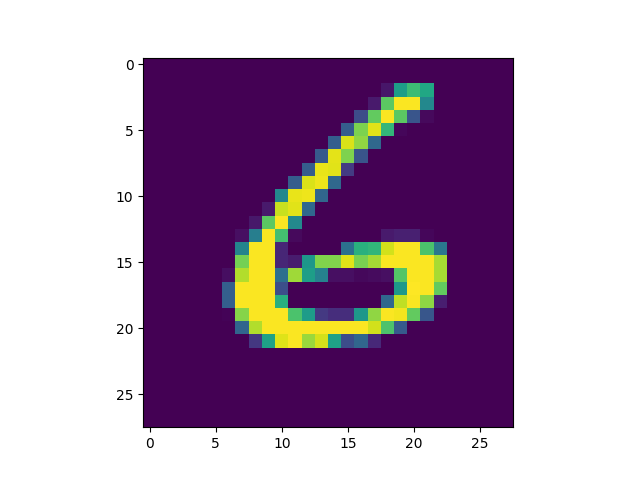}
	\includegraphics[width = 0.32\textwidth]{./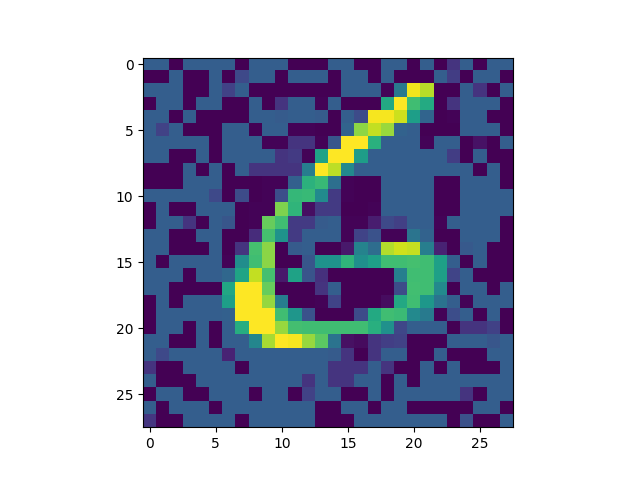}
	\includegraphics[width = 0.32\textwidth]{./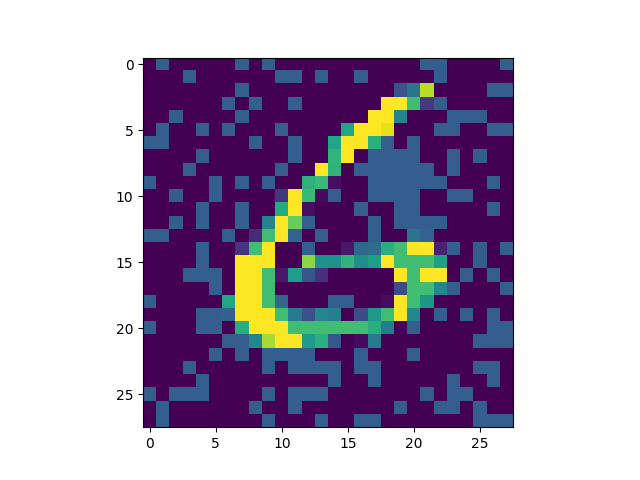}
	%	\label{fig:unorm}
	%\caption{\label{fig:unorm}Histogram plot of $\norm{u_i}^2$ and $\norm{u_i^\bot}^2$ as defined in \ref{ALG:SDP:nn}. The data is from $800$ random examples from PGDfail with $\epsilon = 0.03$ with $784$ data points for every example.}
	\caption{The figure shows three MNIST random samples from PDGpass (i.e., examples where PGDattack succeeded to find an adversarial perturbation), where SDPattack successfully finds adversarial perturbations for $\delta=0.3$. The images in the first column represent the original images corresponding to three, the second column represents the perturbed images produced by the successful PGDattack, and perturbed images produced by the successful SDPattack. Visual inspection of these examples suggest that our method often produces sparse targeted perturbations. } \label{fig:digits-pass}
\end{figure}

\iffalse
First we empirically verify the condition in Equation~\ref{eq:assumptionsdp} under which our theoretical claims hold. Figure~\ref{fig:gap} shows the histogram of the (squared) LHS of Equation~\ref{eq:assumptionsdp} over $100$ random examples. For each example, we compute the ratio for each $j \in [m_2]$ where the corresponding $B_j$ vector is non zero. This gives us $51159$ data points. As shown in the figure, the value is concentrated around $10^8$, and furthermore the minimum value is still larger than $1000$. One might wonder if the gap is large simply because the norms of the $u^\perp_i$ vectors are close to zero. After all, in the intended SDP solution, $u^\perp_i$ equals zero. We empirically demonstrate that this is not the case for the generated SDP solutions. Figure~\ref{fig:unorm} shows the histogram of $\|u_i\|^2$ and $\|u^\perp_i\|^2$ over $800$ random examples with there being $784$ $u_i$ vectors per example. The value of $\delta$ was chosen to be $0.06$. As one can see, both $\|u_i\|^2$ and $\|u^\perp_i\|^2$ concentrate around $\delta^2$. This justifies our assumption that takes into account the behavior of the $B_j$ operator on $u_i$ and $u^\perp_i$ vectors. From the above experiments  we can conclude that our theorems for $2$-layer networks are indeed meaningful.
\fi

% Our choice of comparison was directed by Madry et. al.'s evidence that PGD is a "universal" adversary among first-order adversaries.
We compare the effectiveness of our attack in finding adversarial examples when compared to the the PGD based attack of Madry et al.~\cite{madry2017towards}. We consider two settings of the parameter $\delta$, the maximum amount by which each pixel can be perturbed to produce a valid attack example. As in~\cite{madry2017towards} we first choose $\delta=0.3$ and train a robust $2$-layer network using the algorithm of Madry et al.~\cite{madry2017towards}. We then run the PGD attack and divide the test set into examples where the PGD attack succeeds~(PGDPass) and examples where the PGD attack fails~(PGDfail). We then run our attack on batches of random subsets chosen from each set. In the algorithm we set $\delta' = \alpha \delta$ for a hyperparameter $\alpha \leq 1$. This is because we want to ensure that the
rounded solutions have $\ell_\infty$ norm of at most $\delta$. In our experiment we set $\alpha=0.07$.  The first row of Table~\ref{experiment:table_app} shows the precision and recall of our method. We report the average and the standard deviation across the chosen batches. As one can see, our method has very high recall, i.e., whenever the PGD attack succeeds, our SDP based algorithm also finds adversarial examples. Furthermore, on examples where the PGD attack fails, our method is still able to discover new adversarial examples $30\%$ of the time. Please see Figure~\ref{fig:digits} for the images corresponding to some of the examples where the SDP based attack succeeds, but the PGDattack fails and Figure~\ref{fig:digits-pass} for the images of some examples where both the PGDattack and SDP based attack succeed. A visual inspection of both the figures reveals that our attack often produces sparse targeted attacks as opposed to PGDattack.

We repeat the same methodology with $\delta=0.01, \alpha=0.2$. Here we notice that PGD attack succeeds on only 138 test examples and hence we can afford to run our attack on all of them. As can be seen from the second row of Table~\ref{experiment:table_app} our attack succeeds on all of these examples. Furthermore, we rank the examples in PGDfail according to the difference of the highest and the second highest of the ten network outputs. The smaller the difference, the easier it should be to find an adversarial example. Indeed as can be seen from the table, our method finds 45 new adversarial examples out of the first 100 such ranked examples.

%\textbf{Network} : We use a 2-layer neural network with 1024 hidden units, trained on the MNIST handwritten dataset with adversarial training ($\epsilon = 0.3$). The accuracy of this network on the test set is 82.32\%.

%\textbf{Setting}: We set $\epsilon = 0.3$ and $\epsilon = 0.01$ where $\epsilon$ is the bound on the $\ell_\infty$ adversary.

%We run the neural network on the 10,000 MNIST test images adversarially perturbed by a PGD attack. From the test set we make two groups, one (say PGDfail) where the network predicts correctly on both the original and the perturbed image and another (say PGDpass) where the network predicts correctly on the original but fails on the perturbed image.

%On each group we select original images and run the SDPattack algorithm where we set the target class of each example as the second highest label predicted by the neural network.

%\textbf{$\epsilon = 0.3$ setting :} We select 300 random images from PGDpass and 800 random images from PGDfail. We set the hyperparameter$\alpha = 0.07$.

%\textbf{$\epsilon = 0.01$ setting :} The PGD attack finds 138 adversarial examples(size of PGDpass is 138). We run SDPattack on all these examples.
%For the other set, we sort all examples in PGDfail in ascending order of the difference between the highest and the second highest weights output by the neural network. We then run SDPattack on the first 100 of these examples.

%The

\begin{table}
  \begin{center}
	\begin{tabular}{|c|c|c|c|c|}
		\hline
		$\delta = 0.3$&PGDpass (6 $\times$ 50 random samples)&PGDfail  (8 $\times$ 100 random samples) \\
		\hline
		 SDP succeds&297 out of 300 total& 244 out of 800 total\\
		& Mean : 49.5 of 50, Std : 0.76 &Mean 30.6 of 100, Std : 2.87\\
		\hline
		$\delta = 0.01$&PGDpass (138 samples)&PGDfail  (100 ranked)\\
		\hline
		SDP succeeds&138 & 45\\
		\hline
	\end{tabular}
  \end{center}
  \caption{\label{experiment:table_app} For $\delta = 0.3$, we report mean and standard deviation of number of adversarial examples found by running our SDPattack algorithm on 6 batches of 50 random examples from PGDpass and 8 batches of 100 random samples from PGDfail. For $\delta = 0.01$, we run SDPattack on all 138 examples in PGDpass and first 100 sorted examples from PGDfail.}
\end{table}

The experiments above suggest that our algorithms can lead to improved adversarial attacks.
We would like to note that the recent work of~\cite{raghunathan2018certified} also studied semi-definite programming based methods for
providing adversarial certificates for $2$-layer neural networks. However, our SDP as outlined is Figure~\ref{ALG:SDP:nn} is strictly stronger. In particular, the SDP of~\cite{raghunathan2018certified} is independent of the given example $x$ and as a result we expect our method to produce better certificates. We leave as future work the task of making our theoretical analysis practical for large scale applications.

\dnote{unclipped SDP}

\textbf{}

% !TEX root = paper.tex

\section{Future Directions} 
\label{sec:conclusions}

Design of polynomial time algorithms that provably achieve adversarial robustness is an important direction of research. Several open questions remain to be explored further. In Section~\ref{sec:upper-bound} we provide a general algorithmic framework for designing polynomial time robust algorithms. It would be interesting to use our framework to design robust algorithms for general degree-$d$ PTFs. While there are algorithms to approximately maximize degree-$d$ polynomials, they focus on the homogeneous case which does not suffice for our purposes. % since homogeneous degree-$d$ PTFs do not satisfy our definition of admissibility.  Hence, robust learning of degree-$d$ polynomials in our framework will require the design of new approximation algorithms for the corresponding polynomial maximization problem; a question of independent interest beyond the application to robustness. 
Another important direction for future work is to convert our adversarial attack algorithm for $2$-layer neural networks into a provably robust learning algorithm via the framework of Section~\ref{sec:upper-bound}. A straightforward invocation of the framework does not lead to a convex constraint set. It would also be interesting to design
provable adversarial attacks for higher depth networks. %It would also be interesting to prove computational hardness or cryptographic hardness of robust learning of PTFs when the learning algorithm is not restricted to output a PTF. 
Finally, our experimental results suggest that making our SDP based attack work on a large scale could lead to improved adversarial attacks.

\subsection*{Acknowledgements}
The second and third authors were supported by the National Science Foundation (NSF) under Grant No.~CCF-1652491 and CCF-1637585. Additionally, the second author was funded by the Morrison Fellowship from Northwestern University.

\bibliographystyle{plain}
\bibliography{paper}
%\newpage
\appendix

\section{Proofs for the Computational Hardness Results}
% !TEX root = paper.tex

\subsection{Proofs of Claim~\ref{claim:ugly-proof} and Lemma~\ref{lem:poly-small-coeffs}} \label{app:lb-proofs}

We first prove Claim~\ref{claim:ugly-proof} that helps establish the YES case. 
\paragraph{Proof of Claim~\ref{claim:ugly-proof}.}
\begin{proof}
It is easy to check that $sgn(x^T A x - z)$ classifies all of $S$ correctly.

\noindent \textbf{Robustness at $((\mathbf{0}, 2\delta), -1)$}. Follows from the fact that we are in the YES case and hence $\max_{x \in B^n_\infty(0, \delta)} x^T A x < \delta^2 s = \delta$.

\noindent \textbf{Robustness at $((\mathbf{0}, 1), -1), ((\mathbf{0}, \tau'), -1), ((\mathbf{0}, -1), +1), ((\mathbf{0}, -\tau'), +1)$}. Follows from the fact that we are in the YES case and hence $\max_{x \in B^n_\infty(0, \delta)} x^t A x < \delta^2 s = 1/s  < 1/100$ and that $\tau'  > n/(20 \delta) > 5n$.

\noindent \textbf{Robustness at $((\mathbf{0},2\delta), -1), ((\mathbf{0},-2\delta), +1)$}. Follows from the fact that we are in the YES case and hence $\max_{x \in B^n_\infty(0, \delta)} x^T A x < \delta^2 s = \delta$ and that $\epsilon n/10 = 20\delta$.

\noindent \textbf{Robustness at $((\mathbf{e}_i, \gamma), -1),((\mathbf{e}_i, -\gamma), +1), ((-\mathbf{e}_i, \gamma), -1), ((-\mathbf{e}_i, -\gamma), +1)$}. Let's argue robustness at $((\mathbf{e}_i, \gamma), -1)$ and the other calculations are similar. The maximum value of $x^T A x$ in a $\delta$-ball around $\mathbf{e}_i$ is at most 
$$
(\tau+\delta)\delta \sum_{j} |a_{i,j}| + \delta^2 s.
$$
Hence to establish robustness we need that
\begin{align}
(\tau+\delta)\delta \sum_{j} |a_{i,j}| + \delta^2 s \leq \gamma - \delta.
\end{align}
Substituting the value of $\delta$ and noticing that $\gamma, \tau$ are much larger than $\delta = 1/s < 1/100$ we get that it is enough for the following to hold
\begin{align}
2 \tau \delta \sum_{j} |a_{i,j}| \leq \frac{\gamma}{2}.
\end{align}
In other words we need that
\begin{align}
\frac{\gamma}{\tau} \geq 4\delta \sum_j |a_{i,j}|
\end{align}
Substituting the values of $\gamma, \tau$ we get that
\begin{align}
n \geq \delta \sum_j |a_{i,j}|
\end{align}
This is true since $\delta =1/s$ and the fact that $s \geq \frac{1}{n} \sum_{i,j} |a_{i,j}| > \frac{1}{n} \sum_{j} |a_{i,j}|$ where the first inequality is from~\cite{charikar2004maximizing}.

\noindent \textbf{Robustness at $((\mathbf{e}_{i,j}, 2), -1),((\mathbf{e}_{-i,j}, 2), -1),((\mathbf{e}_{i,-j}, 2), -1),((\mathbf{e}_{-i,-j}, 2), -1)$}. Let's argue robustness at $((\mathbf{e}_{i,j}, 2), -1)$ and the other calculations are similar. The maximum value of $x^T A x$ in a $\delta$-ball around $\mathbf{e}_{i,j}$ is at most 

$$\frac{2\delta \max_i \sum_j |a_{i,j}|}{\sqrt{2(\epsilon+|a_{i,j}|)}} + \delta^2 s + 1$$

Hence to establish robustness we need that
\begin{align}
\frac{2\delta \max_i \sum_j |a_{i,j}|}{\sqrt{2(\epsilon+|a_{i,j}|)}} + \delta^2 s + 1 \leq 2 - \delta.
\end{align}
Noticing that $\delta = 1/s$ and much smaller than $1/100$, we get that it is enough for the following to hold
\begin{align}
\frac{\delta \max_i \sum_j |a_{i,j}|}{\sqrt{2(\epsilon+|a_{i,j}|)}} \leq \frac 1 4.
\end{align}
This is again true since $\delta = 1/s$ and by our assumption $|a_{i,j}| \geq 4$ for non-zero entries of $A$.
\end{proof}

\noindent \textbf{Robustness at $((2\mathbf{e}_{i,j}, 1), \sgn(a_{i,j})),((2\mathbf{e}_{-i,j}, 1), - \sgn(a_{i,j})),((2\mathbf{e}_{i,-j}, 1), - \sgn(a_{i,j})), ((2\mathbf{e}_{-i,-j}, 1),  \sgn(a_{i,j}))$}. We'll argue robustness at $((2\mathbf{e}_{i,j}, 1), +1)$ and the other calculations are similar. Also for simplicity, assume $\sgn(a_{i,j} >  0$. The other case is similar. The minimum value of $x^T A x$ in a $\delta$-ball arond $\mathbf{e}_{i,j}$ is at least

$$2 - \frac{2\delta \max_i \sum_j |a_{i,j}|}{\sqrt{2(\epsilon+|a_{i,j}|)}} - \delta^2s$$

So for robustness, we need 

$$2 - \frac{2\delta \max_i \sum_j |a_{i,j}|}{\sqrt{2(\epsilon+|a_{i,j}|)}} - \delta^2s  > 1 + \delta$$

This is true since we have $$\frac{\delta \max_i \sum_j |a_{i,j}|}{\sqrt{2(\epsilon+|a_{i,j}|)}} \leq \frac 1 4.$$
\end{proof}

\paragraph{Proof of Lemma~\ref{lem:poly-small-coeffs}.}
We now prove the key lemma that is used in the analysis of our reduction. 
\begin{proof}
	
	First we prove that if $q'(x,z)$ has zero error on $S$ then $c_z$ must be non zero. Then it is clear that if $q'(x,z)$ has zero error on $S$, then so does $q(x,z)$. Consider the case when $c_z = 0$. Now $q'(x,z)$ classifies $S_1$ correctly. More specifically, it classifies the two points $((\mathbf{0},1),-1)$ and $((\mathbf{0},-1),1)$ correctly. This gives us the following equations $$c_2 + c_4 < 0$$ $$c_2 + c_4 > 0$$ and hence we get a contradiction. 
%Note that although the current theorem only speaks about correct classification,  $q'(x,z)$ indeed has to robustly classify $S$. This would tell us that $c_z$ cannot be too close to zero.
Moving on to the main part of the proof about the coefficients of $q(x,z)$, the constraints at $(\mathbf{0},1), (\mathbf{0},-1), (\mathbf{0},\tau'), (\mathbf{0},-\tau')$ give us
	\begin{align}
		\label{eq:small-coeff-1}
		c_2-1+c_4 < 0
	\end{align}
	\begin{align}
		\label{eq:small-coeff-2}
		c_2+1+c_4 > 0
	\end{align}
	\begin{align}
		\label{eq:small-coeff-3}
		\tau'^2 c_2-\tau'+c_4 < 0
	\end{align}
	\begin{align}
		\label{eq:small-coeff-4}
		\tau'^2 c_2+\tau'+c_4 > 0
	\end{align}
	From (\ref{eq:small-coeff-1}) and (\ref{eq:small-coeff-2}) we get that
	\begin{align}
		\label{eq:small-coeff-5}
		-1 < c_2+c_4 <1
	\end{align}
	Similarly, from (\ref{eq:small-coeff-3}) and (\ref{eq:small-coeff-4}) we get that
	\begin{align}
		\label{eq:small-coeff-6}
		-\tau' < \tau'^2c_2+c_4 <\tau'
	\end{align}
	This implies that $|c_2| < 1/(\tau'-1) < \epsilon/10$ for $\tau' = \Omega(1/\epsilon)$. 
	
	The constraints at $((\mathbf{0},2\delta), -1), ((\mathbf{0},-2\delta)$ gives us that
\begin{align*}
4c_2 \delta^2 - 2\delta + c_4&< 0\\
4c_2 \delta^2 + 2\delta +c_4 &>0
\end{align*}
From the above equations we get that
	\begin{align}
		\label{eq:small-coeff-23}
		|c_4| \leq c_2 (2 \delta)^2 + 2 \delta < 4 \delta.
	\end{align}
	
	The constraints at $(\mathbf{e}_i,\gamma), (-\mathbf{e}_i,\gamma), (\mathbf{e}_i,-\gamma), (-\mathbf{e}_i,-\gamma)$ give us
	\begin{align}
		\label{eq:small-coeff-7}
		\tau^2a'_{i,i} + \tau c_{1,i} + c_2\gamma^2-\gamma + c_4 + \tau \gamma \beta_i< 0
	\end{align}
	\begin{align}
		\label{eq:small-coeff-8}
		\tau^2a'_{i,i} - \tau c_{1,i} + c_2\gamma^2 - \gamma + c_4 - \tau \gamma \beta_i < 0
	\end{align}
	\begin{align}
		\label{eq:small-coeff-9}
		\tau^2a'_{i,i} + \tau c_{1,i} + c_2 \gamma^2 + \gamma + c_4 - \tau \gamma \beta_i > 0
	\end{align}
	\begin{align}
		\label{eq:small-coeff-10}
		\tau^2 a'_{i,i} - \tau c_{1,i} + c_2 \gamma^2 + \gamma + c_4 + \tau \gamma \beta_i > 0
	\end{align}
	From (\ref{eq:small-coeff-7}) and (\ref{eq:small-coeff-10}) we get that
	\begin{align}
		\label{eq:small-coeff-11}
		\tau c_{1,i} < \gamma
	\end{align}
	Similarly, from (\ref{eq:small-coeff-8}) and (\ref{eq:small-coeff-9}) we get that
	\begin{align}
		\label{eq:small-coeff-12}
		\tau c_{1,i} > -\gamma
	\end{align}
	Plugging back into the equations above we get that
	\begin{align}
		\label{eq:small-coeff-21}
		-(4\delta+2\gamma +  \frac{\gamma^2}{\tau'-1} ) < \tau^2 a'_{i,i} + \tau \gamma \beta_i < (4\delta+2\gamma +  \frac{\gamma^2}{\tau'-1} )
	\end{align}
	and
	\begin{align}
		\label{eq:small-coeff-22}
		-(4\delta+2\gamma + \frac{\gamma^2}{\tau'-1} ) < \tau^2 a'_{i,i} - \tau \gamma \beta_i < (4\delta+2\gamma + \frac{\gamma^2}{\tau'-1} )
	\end{align}
	This implies that 
	$$
	|a'_{i,i}| \leq \frac{1}{\tau^2}(4\delta+2\gamma + \frac{\gamma^2}{\tau'-1} ) \leq \epsilon/10
	$$ 
	for $\tau' = \Omega(\frac{n^2}{\epsilon}) \max(1, 1/\min_{i,j} |a_{i,j}|), \tau = \Omega(\frac{n}{\epsilon}) \max(1, 1/\min_{i,j} |a_{i,j}|)$, $\gamma = 4n \tau$. We also get that
	$$
	|\beta_i| \leq \frac{1}{\tau \gamma}(4\delta+2\gamma + \frac{\gamma^2}{\tau'-1} ) \leq \epsilon/10
	$$ 
	for $\tau' = \Omega(\frac{n^2}{\epsilon}) \max(1, 1/\min_{i,j} |a_{i,j}|), \tau = \Omega(\frac{n}{\epsilon}) \max(1, 1/\min_{i,j} |a_{i,j}|)$, $\gamma = 4n \tau$.
	
	The constraints at $(\mathbf{e}_{i,j},2), (\mathbf{e}_{-i,j},2), (\mathbf{e}_{i,-j},2), (\mathbf{e}_{-i,-j},2)$ give us
	\begin{align}
		\label{eq:small-coeff-13}
		\frac{a'_{i,i}}{2\tilde{a}_{i,j}} + \frac{a'_{j,j}}{2\tilde{a}_{i,j}} + \frac{a'_{i,j}}{\tilde{a}_{i,j}} + \frac{c_{1,i}}{\sqrt{2\tilde{a}_{i,j}}} + \frac{c_{1,j}}{\sqrt{2\tilde{a}_{i,j}}} + 4c_2-2+c_4 + \frac{2\beta_i}{\sqrt{2\tilde{a}_{i,j}}} + \frac{2\beta_j}{\sqrt{2\tilde{a}_{i,j}}} < 0
	\end{align}
	\begin{align}
		\label{eq:small-coeff-14}
		\frac{a'_{i,i}}{2\tilde{a}_{i,j}} + \frac{a'_{j,j}}{2\tilde{a}_{i,j}} - \frac{a'_{i,j}}{\tilde{a}_{i,j}} - \frac{c_{1,i}}{\sqrt{2\tilde{a}_{i,j}}} + \frac{c_{1,j}}{\sqrt{2\tilde{a}_{i,j}}} + 4c_2-2+c_4 - \frac{2\beta_i}{\sqrt{2\tilde{a}_{i,j}}} + \frac{2\beta_j}{\sqrt{2\tilde{a}_{i,j}}} < 0
	\end{align}
	\begin{align}
		\label{eq:small-coeff-15}
		\frac{a'_{i,i}}{2\tilde{a}_{i,j}} + \frac{a'_{j,j}}{2\tilde{a}_{i,j}} - \frac{a'_{i,j}}{\tilde{a}_{i,j}} + \frac{c_{1,i}}{\sqrt{2\tilde{a}_{i,j}}} - \frac{c_{1,j}}{\sqrt{2\tilde{a}_{i,j}}} + 4c_2-2+c_4 + \frac{2\beta_i}{\sqrt{2\tilde{a}_{i,j}}} - \frac{2\beta_j}{\sqrt{2\tilde{a}_{i,j}}} < 0
	\end{align}
	\begin{align}
		\label{eq:small-coeff-16}
		\frac{a'_{i,i}}{2\tilde{a}_{i,j}} + \frac{a'_{j,j}}{2\tilde{a}_{i,j}} + \frac{a'_{i,j}}{\tilde{a}_{i,j}} - \frac{c_{1,i}}{\sqrt{2\tilde{a}_{i,j}}} - \frac{c_{1,j}}{\sqrt{2\tilde{a}_{i,j}}} + 4c_2-2+c_4 - \frac{2\beta_i}{\sqrt{2\tilde{a}_{i,j}}} - \frac{2\beta_j}{\sqrt{2\tilde{a}_{i,j}}} < 0
	\end{align}
where $\tilde{a}_{i,j} = \epsilon + |a_{i,j}|$.
	Combining (\ref{eq:small-coeff-13}) and (\ref{eq:small-coeff-16}) we get
	\begin{align}
		\label{eq:small-coeff-17}
		\frac{a'_{i,i}}{2\tilde{a}_{i,j}} + \frac{a'_{j,j}}{2\tilde{a}_{i,j}} + \frac{a'_{i,j}}{\tilde{a}_{i,j}} + 4c_2 - 2 + c_4 < 0
	\end{align}
	From this we get that
	\begin{align}
		\label{eq:small-coeff-18}
		\frac{a'_{i,j}}{\tilde{a}_{i,j}} < 2+4\delta+4\frac{\epsilon}{10} + \frac{4\delta+2\gamma + \frac{\gamma^2}{\tau'-1}}{\tau^2 \min_{i,j} |a_{i,j}|}  < 2+4\delta+\epsilon
	\end{align}
	for large enough $\tau$.
	Similarly, combining (\ref{eq:small-coeff-14}) and (\ref{eq:small-coeff-15}) we get
	\begin{align}
		\label{eq:small-coeff-19}
		\frac{a'_{i,i}}{2\tilde{a}_{i,j}} + \frac{a'_{j,j}}{2\tilde{a}_{i,j}} - \frac{a'_{i,j}}{\tilde{a}_{i,j}} + 4c_2 - 2 + c_4 < 0
	\end{align}
	From this we get that
	\begin{align}
		\label{eq:small-coeff-20}
		\frac{a'_{i,j}}{\tilde{a}_{i,j}} > -2 - 4\delta - \epsilon.
	\end{align}

	The constraints at $(\mathbf{e}_{i,j},-2), (\mathbf{e}_{-i,j},-2), (\mathbf{e}_{i,-j},-2), (\mathbf{e}_{-i,-j},-2)$ give us
	\begin{align}
		\label{eq:small-coeff-43}
		\frac{a'_{i,i}}{2\tilde{a}_{i,j}} + \frac{a'_{j,j}}{2\tilde{a}_{i,j}} + \frac{a'_{i,j}}{\tilde{a}_{i,j}} + \frac{c_{1,i}}{\sqrt{2\tilde{a}_{i,j}}} + \frac{c_{1,j}}{\sqrt{2\tilde{a}_{i,j}}} + 4c_2 + 2+c_4 + \frac{2\beta_i}{\sqrt{2\tilde{a}_{i,j}}} + \frac{2\beta_j}{\sqrt{2\tilde{a}_{i,j}}} > 0
	\end{align}
	\begin{align}
		\label{eq:small-coeff-44}
		\frac{a'_{i,i}}{2\tilde{a}_{i,j}} + \frac{a'_{j,j}}{2\tilde{a}_{i,j}} - \frac{a'_{i,j}}{\tilde{a}_{i,j}} - \frac{c_{1,i}}{\sqrt{2\tilde{a}_{i,j}}} + \frac{c_{1,j}}{\sqrt{2\tilde{a}_{i,j}}} + 4c_2 + 2+c_4 - \frac{2\beta_i}{\sqrt{2\tilde{a}_{i,j}}} + \frac{2\beta_j}{\sqrt{2\tilde{a}_{i,j}}} > 0
	\end{align}
	\begin{align}
		\label{eq:small-coeff-45}
		\frac{a'_{i,i}}{2\tilde{a}_{i,j}} + \frac{a'_{j,j}}{2\tilde{a}_{i,j}} - \frac{a'_{i,j}}{\tilde{a}_{i,j}} + \frac{c_{1,i}}{\sqrt{2\tilde{a}_{i,j}}} - \frac{c_{1,j}}{\sqrt{2\tilde{a}_{i,j}}} + 4c_2 + 2+c_4 + \frac{2\beta_i}{\sqrt{2\tilde{a}_{i,j}}} - \frac{2\beta_j}{\sqrt{2\tilde{a}_{i,j}}} > 0
	\end{align}
	\begin{align}
		\label{eq:small-coeff-46}
		\frac{a'_{i,i}}{2\tilde{a}_{i,j}} + \frac{a'_{j,j}}{2\tilde{a}_{i,j}} + \frac{a'_{i,j}}{\tilde{a}_{i,j}} - \frac{c_{1,i}}{\sqrt{2\tilde{a}_{i,j}}} - \frac{c_{1,j}}{\sqrt{2\tilde{a}_{i,j}}} + 4c_2 + 2+c_4 - \frac{2\beta_i}{\sqrt{2\tilde{a}_{i,j}}} - \frac{2\beta_j}{\sqrt{2\tilde{a}_{i,j}}} > 0
	\end{align}
	Combining (\ref{eq:small-coeff-13}) and (\ref{eq:small-coeff-44}) we get
	\begin{align}
		\label{eq:small-coeff-47}
		\frac{a'_{i,j}}{\tilde{a}_{i,j}}  + \frac{c_{1,i}}{\sqrt{2\tilde{a}_{i,j}}} - 2 + \frac{2 \beta_i}{\sqrt{2\tilde{a}_{i,j}}} < 0
	\end{align}
	From this we get that
	\begin{align}
		\label{eq:small-coeff-48}
		c_{1,i} < (4\delta + \epsilon)\sqrt{2\tilde{a}_{i,j}} 
	\end{align}
	for large enough $\tau$.
	Similarly, from (\ref{eq:small-coeff-45}) and (\ref{eq:small-coeff-15}) we get
	\begin{align}
		\label{eq:small-coeff-49}
		c_{1,i} > -(4\delta + \epsilon)\sqrt{2\tilde{a}_{i,j}}.
	\end{align}

Finally, the constraints at $(2\mathbf{e}_{i,j},1), (2\mathbf{e}_{-i,j},1), (2\mathbf{e}_{i,-j},1), (2\mathbf{e}_{-i,-j},1)$ give us
	\begin{align}
		\label{eq:small-coeff-33}
		2\frac{a'_{i,i}}{\tilde{a}_{i,j}} + 2\frac{a'_{j,j}}{\tilde{a}_{i,j}} + 4\frac{a'_{i,j}}{\tilde{a}_{i,j}} + \frac{2c_{1,i}}{\sqrt{2\tilde{a}_{i,j}}} + \frac{2c_{1,j}}{\sqrt{2\tilde{a}_{i,j}}} + c_2-1+c_4 + \frac{4\beta_i}{\sqrt{2\tilde{a}_{i,j}}} + \frac{4\beta_j}{\sqrt{2\tilde{a}_{i,j}}} > 0
	\end{align}
	\begin{align}
		\label{eq:small-coeff-34}
		2\frac{a'_{i,i}}{\tilde{a}_{i,j}} + 2\frac{a'_{j,j}}{\tilde{a}_{i,j}} - 4\frac{a'_{i,j}}{\tilde{a}_{i,j}} - \frac{2c_{1,i}}{\sqrt{2\tilde{a}_{i,j}}} + \frac{2c_{1,j}}{\sqrt{2\tilde{a}_{i,j}}} + c_2-1+c_4 - \frac{4\beta_i}{\sqrt{2\tilde{a}_{i,j}}} + \frac{4\beta_j}{\sqrt{2\tilde{a}_{i,j}}} < 0
	\end{align}
	\begin{align}
		\label{eq:small-coeff-35}
		2\frac{a'_{i,i}}{\tilde{a}_{i,j}} + 2\frac{a'_{j,j}}{\tilde{a}_{i,j}} - 4\frac{a'_{i,j}}{\tilde{a}_{i,j}} + \frac{2c_{1,i}}{\sqrt{2\tilde{a}_{i,j}}} - \frac{2c_{1,j}}{\sqrt{2\tilde{a}_{i,j}}} + c_2-1+c_4 + \frac{4\beta_i}{\sqrt2{\tilde{a}_{i,j}}} - \frac{4\beta_j}{\sqrt{2\tilde{a}_{i,j}}} < 0
	\end{align}
	\begin{align}
		\label{eq:small-coeff-36}
		2\frac{a'_{i,i}}{\tilde{a}_{i,j}} + 2\frac{a'_{j,j}}{\tilde{a}_{i,j}} + 4\frac{a'_{i,j}}{\tilde{a}_{i,j}} - \frac{2c_{1,i}}{\sqrt{2\tilde{a}_{i,j}}} - \frac{2c_{1,j}}{\sqrt{2\tilde{a}_{i,j}}} + c_2-1+c_4 - \frac{4\beta_i}{\sqrt{2\tilde{a}_{i,j}}} - \frac{4\beta_j}{\sqrt{2\tilde{a}_{i,j}}} > 0
	\end{align}
	Combining (\ref{eq:small-coeff-33}) and (\ref{eq:small-coeff-36}) we get
	\begin{align}
		\label{eq:small-coeff-37}
		2\frac{a'_{i,i}}{\tilde{a}_{i,j}} + 2\frac{a'_{j,j}}{\tilde{a}_{i,j}} + 4\frac{a'_{i,j}}{\tilde{a}_{i,j}} + c_2 - 1 + c_4 > 0
	\end{align}
	From this we get that
	\begin{align}
		\label{eq:small-coeff-38}
		\frac{a'_{i,j}}{\tilde{a}_{i,j}} > \frac 1 4 - \delta - \frac \epsilon 4
	\end{align}
	for large enough $\tau$.
	Similarly, combining (\ref{eq:small-coeff-34}) and (\ref{eq:small-coeff-35}) we get
	\begin{align}
		\label{eq:small-coeff-39}
		2\frac{a'_{i,i}}{\tilde{a}_{i,j}} + 2\frac{a'_{j,j}}{\tilde{a}_{i,j}} - 4\frac{a'_{i,j}}{\tilde{a}_{i,j}} + c_2 - 1 + c_4 < 0
	\end{align}
	From this we get that
	\begin{align}
		\label{eq:small-coeff-40}
		\frac{a'_{i,j}}{\tilde{a}_{i,j}} > -\frac 1 4 - \delta - \frac \epsilon 4
	\end{align}
	for large enough $\tau$.

\end{proof}

% !TEX root = paper.tex

\subsection{Proof of Weak Robust Learning}\label{app:weaker-hardness}

In this section we prove Theorem~\ref{THM:lowerbound-approx}, which in turns uses the non-distributional hardness in Theorem~\ref{THM:lowerbound-strong-nondistributional}. %Please see Section~\ref{app:weaker-hardness}. 
But to begin with we first prove an alternate NP hardness result. Although weaker than the hardness result of the previous section, this will help us prove the more robust bound. More formally, we will prove that
\begin{theorem}\label{THM:lowerbound}[Hardness]
For every $\delta>0$, assuming $NP \ne RP$ there is no polynomial time algorithm that given a set of $N=O(n^{2})$ labeled points $\set{(x^{(1)},y^{(1)}), \dots, (x^{(N)}, y^{(N)})}$ with $(x^{(j)}, y^{(j)}) \in \R^{n+1} \times \set{-1,1}$ for all $j \in [N]$ can determine whether there exists a degree-$2$ PTF that has $\delta$-robust empirical error of $0$ on these $N$ points.
% (as in Definition~\ref{def:robust-empirical-error}).

%following two cases.\\
%\noindent {\bf \textsc{Yes:}} There is a degree-$2$ PTF $p^*$ that has $\delta$-robust error of $0$ on these $N$ points. \\
%\noindent {\bf \textsc{No:}} There is no degree-$2$ PTF that has $\delta$-robust error of $0$ on these $N$ points.
\end{theorem}
The above theorem immediately implies the following result about hardness of optimal robust learning of degree-$2$ PTFs.
\begin{corollary}\label{cor:lowerbound}[Distributional Hardness]
For every $\delta>0$, there exists an $\epsilon>0$ such that assuming $NP \ne RP$ there is no algorithm 
that given a set of $N=poly(n, \frac{1}{\epsilon})$ samples from a distribution $D$ over $\mathbb{R}^n \times \{-1, +1\}$, runs in time $\poly(N)$ and distinguishes between the following two cases:
\begin{itemize}
\item {\sc Yes:} There exists a degree-$2$ PTF that has $\delta$-robust error of $0$ w.r.t. $D$.
\item {\sc No:} There exists no degree-$2$ PTF that has $\delta$-robust error at most $\epsilon$ w.r.t. $D$.
\end{itemize}

%For every $\delta>0$, there exists a distribution $D$ over $\mathbb{R}^n \times \{-1, +1\}$ and $\epsilon > 0$, such that assuming $NP \ne RP$ there is no polynomial time algorithm that given a set of $poly(n, \frac{1}{\epsilon})$ points from $D$ labeled by a degree-$2$ PTF that has $\delta$-robust error of $0$ w.r.t. D,  outputs a degree-$2$ PTF of $\delta$-robust error at most $\epsilon$ w.r.t. $D$.
\end{corollary}

We again reduce from the QP problem (Problem \QP) which is known to be NP hard. The reduction is sketehced below.
\begin{figure}[H]
	\begin{center}
		\fbox{\parbox{0.98\textwidth}{
				\begin{enumerate}
					\item Let $p(x):=x^{T} A x$ be the polynomial given by Problem \QP, and let $\beta, \delta$ be the given parameters. Set $\alpha:=\delta^2 \beta + \delta, \rho:=c_3 \delta n^{3/2} m$, for some sufficiently large constant $c_3 \ge 1$.
					%\item Scale $A^*$ by $\delta^2$, So define $A = \delta^2A^*$. Here $\delta$ is the amount of robustness for which we show hardness.
					\item Using $A$ we generate $m$ points $(x^{(j)},z^{(j)}) \in \R^{n+1}$ as follows. Sample point $x^{(j)}$ from $\N(0,\rho^2)^n$, then set $z^{(j)} = p(x^{(j)})=(x^{(j)})^T A x^{(j)}$ for each $j \in [m]$.

					\item Define $s^{(j)} = \sgn(\nabla p(x^{(j)}) )$ where the $\sgn(x) \in \set{-1,1}^n$ refers to a vector with entry-wise signs, and $\nabla p$ stands for the gradient of $p$ at $x^{(j)}$. From each $(x^{(j)},z^{(j)})$ generate $(u^{(j)},z^{(j)}_u) = (x^{(j)}-\delta s^{(j)},z^{(j)}+\delta)$ with label $y^{(j)}_u = \sgn(z^{(j)}_u - p(u^{(j)}) )$ and $(v^{(j)},z^{(j)}_v) = (x^{(j)}+\delta s^{(j)},z^{(j)}-\delta)$ with label $y^{(j)}_v = \sgn(z^{(j)}_v - p(v^{(j)}))$.
				\item Generate $\alpha$ (depends on $\delta$ and $\beta$ from problem \QP) and input the $2m+1$ points in $\R^{n+1} \times \set{\pm 1}$ given by $((u^{(j)},z^{(j)}_u), y^{(j)}_u)$, $((v^{(j)},z^{(j)}_v), y^{(j)}_u)$ for each $j \in [m]$ and $(0,\alpha, +1)$ to the algorithm. %We prove that this instance is NP hard.

				\end{enumerate}
		}}
	\end{center}
	\caption{Reduction from the QP problem.}\label{ALG:reduce}
\end{figure}

%\anote{Haven't edited beyond this.}
%Before stating the theorems and the proof, we provide a roadmap of the reduction. Our main theorem is Theorem \ref{THM:main} that states that the given problem $\mathcal{P}$ is hard. We prove this theorem in two parts. 

\begin{figure}
\centering

%\begin{minipage}{0.4\textwidth}
%\centering
    \includegraphics[width=0.7\textwidth]{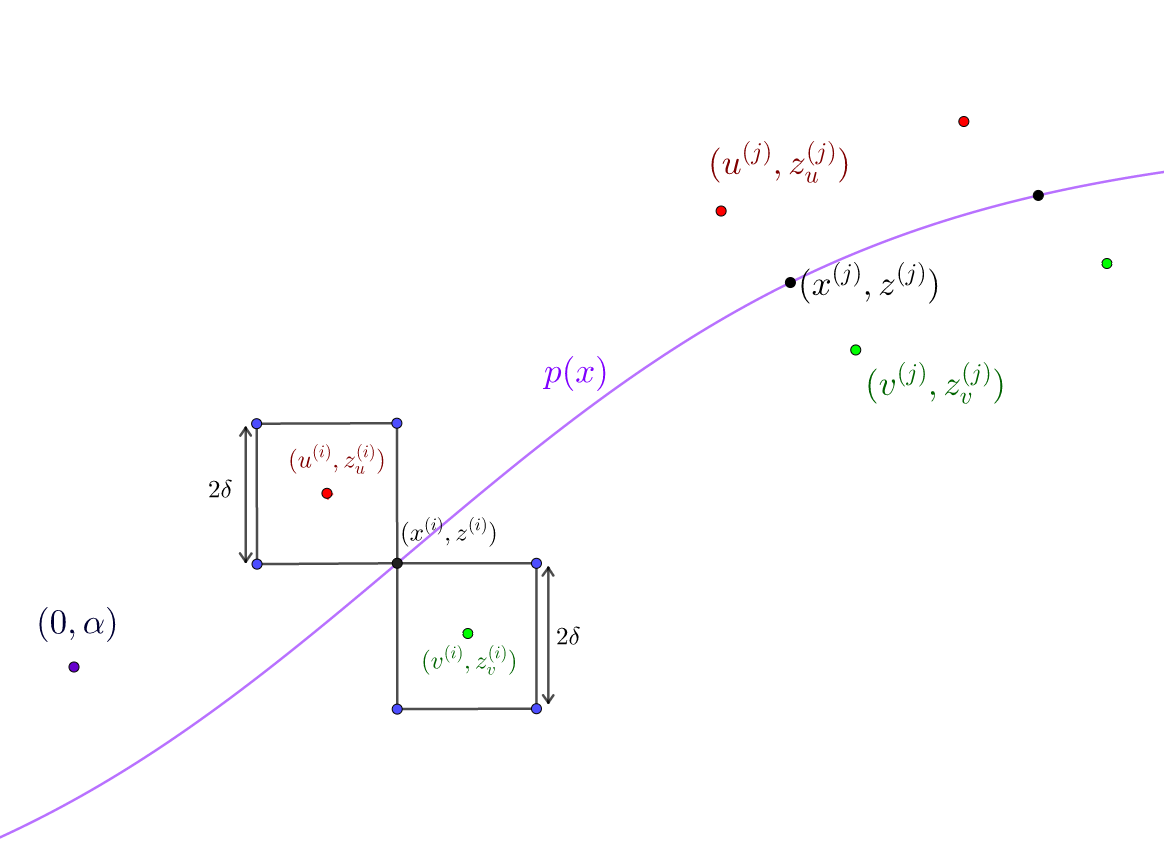}
    %\caption{}
 % \end{minipage}%

\caption{{\em The figure shows the construction of a hard instance for the robust learning problem. First, points $(x^{(j)}, z^{(j)})$ are sampled randomly and staisfying $z^{(j)} = p(x^{(j)})$. Each such point is then perturbed along the direction of the sign vector of the gradient at $(x^{j}, z^{(j)})$ to get two data points of the training set, one labeled as $+1$, and the other labeled as $-1$.} \label{fig:1}}

\vspace{-10pt}
\end{figure}

%\includegraphics[width=\textwidth]{fig1}
\iffalse
We first generate our 'ground truth' deg 2 PTF $(A,0,0)$ where the matrix $A$ is obtained from problem $\mathcal{P}$ and generate points $(x^{(i)},z^{(i)})$ as specified in Step 2 of the reduction. These points can be thought of as lying on the surface that corresponds to the decision boundary for the `ground truth' PTF. The main idea of this reduction now is to generate points such that any deg 2 PTF $\delta$ robust to those points is equal to $(A,0,0)$. We do this by generating points $(u^{(i)},z^{(i)}_u)$ and $(v^{(i)},z^{(i)}_v)$ as outlined in Step 3. We prove in Lemma \ref{LEM:unique} that $(A,0,0)$ is the only degree 2 PTF that is $\delta$ robust to these points.

Next, we generate one more point of the form $(0,\alpha)$ where $\alpha$ is fixed in Lemma \ref{LEM:alpha}. Then in the same lemma we prove that it is NP hard to determine whether the PTF $(A,0,0)$ is $\delta$ robust to $(0,\alpha)$. Since we have proved that $(A,0,0)$ is the only candidate PTF, this concludes the proof of the hardness.

\fi

%\includegraphics[width=\textwidth]{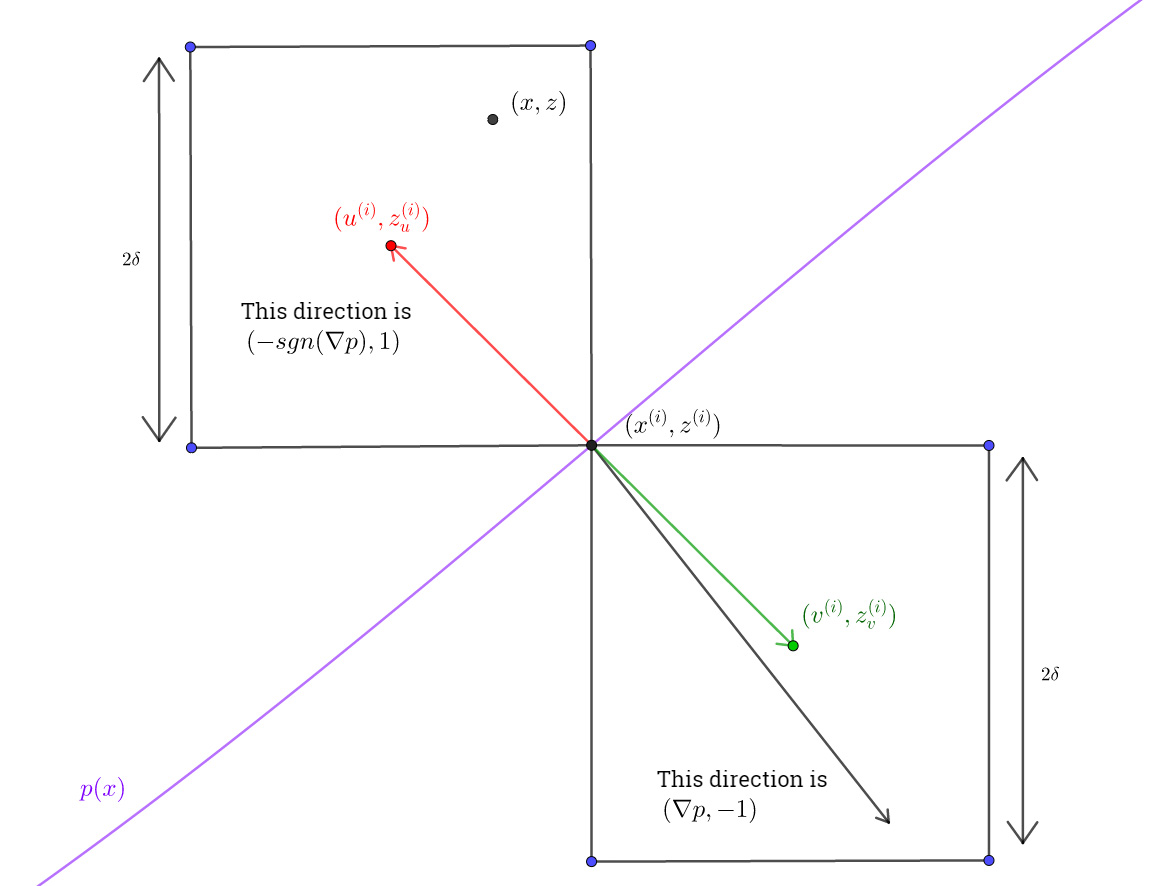}

To argue the soundness and the completeness of our reduction, we will first analyze the robustness of degree-$2$ PTFs on the $2m$ added labeled examples $((u^{(\ell)}, z^{(\ell)}_u), y^{(\ell)}_u)$ and $((v^{(\ell)}, z^{(\ell)}_v), y^{(\ell)}_v)$. We will show that the ``intended'' PTF $\sgn(z-p(x))$ is the {\em unique} degree-$2$ PTF (up to scaling) that is robust at all these $2m$ points. Note that a degree-$2$ PTF $\sgn(q(x,z))$ on the $n+1$ variables $(x,z)$ may {\em not} necessarily be of the form $\sgn(z-g(x))$ for some degree-$2$ polynomial $g(x)$. We need to rule out the existence of any other degree-$2$ PTF of the form $\sgn(q(x,z))$ that is $\delta$-robust at these points. Once we have established this, we will then show that the ``intended'' PTF $\sgn(z-p(x))$ is $\delta$-robust at $((0,\alpha),+1)$ in the \YES case, and not $\delta$-robust at $((0,\alpha),+1)$ in the \NO case.  

We proceed by first proving that the intended PTF $\sgn(z-p(x))$ is robust at the $2m$ added examples. 
Recall that the points $x^{(j)} \in \R^n$ are chosen according to a Gaussian distribution with variance $\rho^2$ in every direction. The following lemma shows a property that holds w.h.p. for the points $\set{x^{(\ell)}: \ell \in [m]}$ that will be key in proving the robustness of $\sgn(z-p(x))$ at the $2m$ added points in Lemma~\ref{LEM:robust}. 

\begin{lemma}\label{LEM:randomproperty}
There exists some universal constant $C>0$ such that for any $\eta>0$, assuming $\rho \ge C \delta n^{3/2} m/\eta$ we have with probability at least $1-\eta$ that 
\begin{equation}\label{eq:randomproperty}
\forall \ell \in [m],~ \forall i \in [n], ~ \frac{|\iprod{A_i,x^{(\ell)}}|}{\norm{A_i}_1} > \delta,  
\end{equation}  
where $A_i$ denotes the $i$th row of $A$. 
\end{lemma}
\begin{proof}
The proof follows from the following standard anti-concentration fact about Gaussians.

\begin{fact}\label{FACT:Gaussian}
	Let $x^*$ be sampled from $\N(0,\rho^2)^n$. Let $a \in \R^n$. There exists a universal constant $C>0$ such that for any $\eta'>0$,
		$$\Pr\Big[ |\iprod{a,x^*}| \leq C\norm{a}_2 \rho \eta \Big] \le \eta'.$$
\end{fact}

Set $\eta':=\eta/(mn)$. Fix $\ell \in [m], i \in [n]$. Using Fact~\ref{FACT:Gaussian} we have with probability at least $1-\eta'$
\begin{align*}
|\iprod{A_i, x^{(j)}}| \ge  \norm{A_i}_2 \rho \eta' \ge \frac{\norm{A_i}_1}{\sqrt{n}} \cdot \rho \cdot \frac{\eta}{mn} \ge \delta, 
\end{align*}
from our assumption on $\rho$. The lemma follows from a union bound over all $\ell \in [m], i \in [n]$.  
\end{proof}

We now prove the $\delta$-robustness of the ``intended'' degree-$2$ PTF $\sgn(z-p(x))$ at the $2m$ added points w.h.p. 
\begin{lemma}\label{LEM:robust}
	There exists constant $C>0$ such that for any $\eta>0$, assuming $\rho \ge C \delta n^{3/2} m/\eta$, then with probability at least $1-\eta$, the degree-$2$ PTF $\sgn(z-p(x))= \sgn(z-x^T A x)$ is $\delta$-robust at all the $2m$ points $\set{((u^{(\ell)},z^{(\ell)}_u), y^{(\ell)}_u), ((v^{(\ell)},z^{(\ell)}_v), y^{(\ell)}_v):\ell \in [m]}$. 
\end{lemma}

\begin{figure}
\centering

%\begin{minipage}{0.4\textwidth}
%\centering
    \includegraphics[width=0.7\textwidth]{}
    %\caption{}
 % \end{minipage}%

\caption{{\em The figure shows the radius of robustness around the point $(x^{(i)}, z^{(i)})$}. Any degree-$2$ PTF that is $\delta$-robust at all the data points, must take a value of $+1$ in the upper ball around each $(x^{(i)}, z^{(i)})$ of $\ell_\infty$ radius of $2\delta$ and must take a value of $-1$ in the lower ball around each $(x^{(i)}, z^{(i)})$ of $\ell_\infty$ radius of $2\delta$. We use this fact to establish that such a PTF must pass through the points $(x^{(i)}, z^{(i)})$.} \label{fig:2}

\vspace{-10pt}
\end{figure}

\begin{proof}
	Consider a fixed $\ell \in [m]$. For convenience let $x^*,z^*,u,v, z_u, z_v$ denote $x^{(\ell)}, z^{(\ell)}, u^{(\ell)},$ $v^{(\ell)}, z^{(\ell)}_u, z^{(\ell)}_v$ respectively, and let $s= \sgn(\nabla p(x^{(\ell)})) \in \set{-1,1}^n$.  
Hence $z^* = x^{*T}Ax^*$,  $(u,z_u) = (x^*-\delta s, z^*+\delta)$ and $(v,z_v) = (x^*+\delta s, z^*-\delta )$. We want to prove that the points $(u,z_u)$ and $(v,z_v)$ are $\delta$ robust i.e.,  these points are $\delta$ away in $\ell_\infty$ distance from the decision boundary of $\sgn(z-p(x))$. 
%We say that a point $(x,z)\in \R^{n+1}$ is on the 'positive' side iff  $z-x^TAx > 0$. 
We now prove the following claim:

\noindent {\em{\bf Claim.} Any point $(x,z) \in B^{n+1}_\infty(u,z_u)$ is on the `positive' side i.e., $z-x^T A x >0$.} 

%Suppose, $(x',z')$ is on the positive side, then we want to prove that $(x,z)$ is also on the positive side for all $(x,z)$ in an $\ell_\infty$ ball of $\delta$ around $(x',z')$. In other words, we want to show 
%$$z-x^TAx >0, \forall (x,z) \in B_\infty((x',z'),\delta)$$
Note that $(u,z_u)$ itself lies inside the ball, and hence the claim will show that $\sgn(z-x^{T} A x)$ is $\delta$-robust at $(u,z_u)$. An analogous proof also holds that $\delta$-robustness at $(v,z_v)$. 

\noindent {\em Proof of Claim.} Let's now define $\tilde{x} = x-x^*$, $\tilde{z} = z-z^*$. 
	A simple observation is that $(x,z)$ lies on the opposite orthant with respect to $(x^*,z^*)$ as $s$ , %or in other words, $\tilde{x}(j)$ has the same sign as $s(j)$ and $x$ is at an $\ell_\infty$ distance of atmost $2\delta$ from $x^*$. So 
	and we have (as shown in Figure~\ref{fig:2}) 
	$$\forall j \in [d], -2\delta\leq s(j)\tilde{x}(j) \leq 0, ~\text{ and } \tilde{z} \ge 0.$$ 
	%Now we want to prove
%	We want to show $z^*+\tilde{z} - p(x^*+\tilde{x}) >0$ . 
Using $z^*=p(x^*)$ and $\tilde{z} \ge 0$, for all $(x,z) \in B^{n+1}((u,z_u),\delta)$ we have  
	%$$z^* + \tilde{z} - p(\tilde{x} + x^*) \geq 0$$ 
	%But $z^* = p(x^*)$, so the left hand side becomes 
\begin{align*}	
	z-p(x)&=z^* + \tilde{z} - p(\tilde{x} + x^*) = \tilde{z} + p(x^*) - p(\tilde{x}+x^*) = \tilde{z} - \inner{\nabla p,\tilde{x}} - \frac{1}{2}\tilde{x}^T\nabla^2p\tilde{x}\\
	&\ge -\sum_{i=1}^n \tilde{x}(i) \Big(\sum_{j=1}^n a_{ij} x^*(j) \Big) - \frac{1}{2}\sum_{i=1}^n \tilde{x}(i) \Big(\sum_{j=1}^n a_{ij} \tilde{x}(j) \Big)\\
	& = \sum_{i=1}^n (-\tilde{x}(i) s(i)) \Bigabs{\sum_{j=1}^n a_{ij} x^*(j) } - \frac{1}{2}\sum_{i=1}^n \tilde{x}(i) \sum_{j=1}^n a_{ij} \tilde{x}(j) \\
	&\ge \sum_{i=1}^n |\tilde{x}(i)| \Big( \Bigabs{\sum_{j=1}^n a_{ij} x^*(j) } - \delta \sum_{j=1}^n |a_{ij}| \Big),
\end{align*}	 
using the fact that $\tilde{x}(i)s(i) \in [-2\delta, 0]$ for each $i \in [n]$. Applying Lemma~\ref{LEM:randomproperty} we see that with probability at least $(1-\eta)$, \eqref{eq:randomproperty} holds, and hence $|\iprod{x^*,A_i}| > \delta \norm{A_i}_1$ for each $i \in [n]$ as required. This establishes the claim, and proves the lemma.

\end{proof}

%\anote{Have to edit beyond this.}

%We need to provide the guarantee that in the YES case, there exists a degree 2 ptf that is $\delta$ robust to the $2n+1$ points : $u^{(i)},z^{(i)}_u$ and $v^{(i)},z^{(i)}_v$ and the point $(0,\alpha)$. In the NO case, we need to show that there exists no ptf that is $\eta\delta$ robust to these $2n+1$ points. We've already shown that the ptf $(A,0,0)$ is robust to all the $2n+1$ points other than $(0,\alpha)$. The next lemma proves that the ptf $(A,0,0)$ is $\delta$ robust to $(0,\alpha)$ in the YES case and not $\eta \delta$ robust to $(0,\alpha)$ in the NO case. We'll later prove why this ptf is the only one we should be looking at in the NO case.

We now prove that the ``intended'' PTF $\sgn(z-p(x))$ is the only degree-$2$ PTF that is robust at the added $2m$ examples.
\begin{lemma}\label{LEM:unique}
	Consider any degree-$2$ PTF $\sgn(q(x,z))$ that is $\delta$-robust at the $2m$ labeled points  $\set{((u^{(\ell)},z^{(\ell)}_u), +1): \ell \in [m]}$ and $\set{((v^{(\ell)},z^{(\ell)}_v),-1): \ell \in [m]}$ and is consistent with their labels. Then $q(x,z)= C ( z-p(x))$ for some $C \ne 0$. 
\end{lemma}

The proof of Lemma~\ref{LEM:unique} follows immediately from the following two lemmas (Lemma~\ref{LEM:passingthrough} and Lemma~\ref{LEM:uniqueness}).
	\begin{lemma}\label{LEM:passingthrough}
Consider any degree-$2$ PTF on $n+1$ variables $\sgn(q(x,z))$ that satisfies the conditions of Lemma~\ref{LEM:unique}. Then $q(x^{(\ell)}, z^{(\ell)})=0$ for each $\ell \in [m]$. 
%		If a degree 2 ptf $q$ is $\delta$ robust to points $%(u^{(i)},z^{(i)}_u)$ and $(v^{(i)},z^{(i)}_v)$ and the corresponding labels $y^{(i)}_u$ and $y^{(i)}_v$ have different signs then we have $q(x^{(i)},z^{(i)}) = 0$.
\end{lemma}
	
\begin{proof}
		Since $\sgn(q(u^{(\ell)},z^{(\ell)}_u))\neq sgn(q(v^{(\ell)},z^{(\ell)}_v))$, by the Intermediate Value Theorem, 
		$$\exists \gamma \in [0,1] \text{ s.t. } (\widehat{x},\widehat{z})= \gamma(u^{(\ell)},z^{(\ell)}_u)+ (1-\gamma)(v^{(\ell)},z^{(\ell)}_v) \text{ and } q(\widehat{x},\widehat{z})= 0.$$ 
		
		Also, since $q$ is $\delta$-robust at $(u^{(\ell)},z^{(\ell)}_u)$ and $(v^{(\ell)},z^{(\ell)}_v)$, we must have that $(\widehat{x},\widehat{y})$ is atleast $\delta$ far away in $\ell_\infty$ distance from both $(u^{(\ell)},z^{(\ell)}_u)$ and $(v^{(\ell)},z^{(\ell)}_v)$. Further by design two points are separated by exactly $2\delta$ in each co-ordinate (see Figure~\ref{fig:2} for an illustration)! Hence it is easy to see that $\gamma=1/2$ i.e., $(\widehat{x}, \widehat{z})=(x^{(\ell)},z^{(\ell)})$ as required.

	\end{proof}
	
%	Since the claim fixes points that the degree 2 ptf must pass through, if we have enough points  then the entire degree 2 ptf is fixed. The next lemma proves how many points we need uniquely determine a degree 2 ptf.

We now show that $q(x,z)=z-p(x)$ is the only polynomial over $(n+1)$ variables that evaluates to $0$ on all points $\set{(x^{(\ell)}, z^{(\ell)}): \ell \in [m]}$. Together with Lemma~\ref{LEM:passingthrough} this establishes the proof of Lemma~\ref{LEM:unique}.

	\begin{lemma}\label{LEM:uniqueness}
Let $m > (n+1)^2$ and let $q:\R^{n+1} \to \R$ be any degree-$2$ polynomial with $q(x^{(\ell)}, z^{(\ell)})=0$ for all $\ell \in [m]$, where $z^{(\ell)}=(x^{(\ell)})^T A^* x^{(\ell)}$ and $x^{(\ell)} \sim N(0,\rho^2)^n$ with $\rho>0$.  Then with probability $1$, $q(x,z)= C (z- x^T A^* x)$ for $C \ne 0$.  	
%		Given $\Omega(n+1)^d$ points $(x^{(i)},z^{(i)})$ where $x^{(i)}$ are points in $\R^n$ sampled from a Gaussian distribution and $z^{(i)} = x^{(i)T}A^*x^{(i)}$, if $\forall i, z^{(i)} = x^{(i)T}Ax^{(i)}$ then $\Pr(A=A^*) =1$
	\end{lemma}
	
	\begin{proof}
%		This is the proof for $d=2$. 
We can represent a general degree-$2$ polynomial $q:\R^{n+1} \to \R$ given by 
	$$ q(x,z) = x^T A x + b_1^Tx + c_1 + z b_2^Tx + c_2z^2 + c_3z, \text{ where } x \in \R^n, z \in \R. $$
	This polynomial is parameterized by a vector $w=(A, b_1, c_1, b_2, c_2, c_3) \in \R^{r}$ where $r={n+1 \choose 2}+2n+3$ (since $A$ is symmetric). 
		Now given a point $(x^{(\ell)},z^{(\ell)})$, the equation $q(x^{(\ell)},z^{(\ell)}) = 0$ is a linear equation over the coefficients $w$ of $q$. Hence, the set of conditions $q(x^{(\ell)}, z^{(\ell)})=0$ can be expressed as a systems of linear equations $Mw =0$ over the (unknown) co-efficients $w$. Hence any valid polynomial $q$ corresponds to a solution of the linear system $Mw=0$ and vice-versa. We now describe the matrix $M \in \R^{m \times r}$. Define 
\begin{align*}
f(x,z)  &:=(1)\oplus(x_1,\dots, x_n)\oplus (x_i x_j: i \le j \in [n]) \oplus (x_1 z, \dots x_n z) \oplus (z^2), \oplus(z) \in \R^{r},\\
\text{ and } M_\ell &:= f(x^{(\ell)},z^{(\ell)})~ \forall \ell \in [m],  
\end{align*}
where $u \oplus v$ refers to the concatenation of vectors $u$ and $v$, and $M_\ell$ represents the row $\ell$ of $M$. In other words $f(x,z)=(1, x_1,\dots, x_n, x_1^2, \dots, x_jx_k, \dots ,x_n^2, x_1 z,\dots, x_j z \dots ,x_n z,  z^2, z )$, where $x_j$ is the $j$th component of $x$ and $z=x^TA^*x$. %Define a matrix $M$ where the $\ell$th row of $M$, $M_\ell = f(x^{(\ell)},z^{(\ell)}).$ 
Observe that the ``intended'' polynomial $q^*(x,z)=z-x^T A^* x$ is a valid solution to this system of equations. Hence, it will suffice to prove that $M$ has rank exactly $r-1$ i.e., $M$ has full column rank minus one. 
%\noindent {\em {\bf Claim:} The rank of $M$ is exactly $r-1$ i.e., $M$ has full column rank minus one. }\\
First observe that as polynomials over the formal variables $x,z$, {\em all but one} of the columns of $f$ are linearly independent --  in fact the only linear dependency in $f(x,z)$ corresponds to the column $z$ that can be expressed as a linear combination of degree-$2$ monomials $\set{x_ix_j: i \le j}$ since $z:=x^T A^* x$ is a homogenous degree-$2$ polynomial. Further the columns $\set{x_j z: j \in [n]}$ have degree $3$ and $z^2$ has degree $4$. Hence excluding the column corresponding to $z$, it is easy to see that the rest of the columns are linearly independent (either they correspond to different monomials, or the degrees are different).  %is because clearly the terms that don't have $z$ are linearly independent. Meanwhile $z$ is a degree 2 term so it can be written as a linear combination of the terms $x^{(i)}x^{(j)} $. However the terms of the form $x^{(j)}z$ are degree 3 terms and $z^2$ is a degree 4 term that do not introduce any new dependency. 
Now define $g(x,z), M'$ analogously to $f(x,z)$ and $M$ respectively, without the last column that corresponds to $z$ i.e., 
\begin{align*}
g(x,z)  &:=(1)\oplus(x_1,\dots, x_n)\oplus (x_i x_j: i \le j \in [n]) \oplus (x_1 z, \dots x_n z) \oplus (z^2)  \in \R^{r-1},\\
\text{ and } M'_\ell &:= g(x^{(\ell)},z^{(\ell)}) ~\forall \ell \in [m].  
		\end{align*}
From our earlier discussion, the columns of $g(x,z)$ when seen as polynomials over the formal variables $x,z$ are linearly independent. %Now define matrix $M'$ where the $j$th row of the matrix $$M'_j = g(x^{(j)},z^{(j)})$$ Instead of proving matrix $M$ has full column rank minus one, we will equivalently prove that $M'$ has full column rank.
Hence, it suffices to prove the following claim:

\noindent {\em {\bf Claim:} $M'$ has full column rank i.e., rank of $M'$ is $r$.  }

%		Note that $M$ is exactly the matrix we have already analyzed, i.e. $M_i = f(x^{(i)},z^{(i)})$ for $f$ defined as above. $C$ is the vector of the coefficients. Note that we want to show $q=cp$, or in other words, the solution space is a one dimensional subspace. This is exactly true when $M$ has full column rank minus one as we have already proved.

To see why the claim holds consider the first $\ell$ rows of the matrix $M'$ and look at their span $S(R_\ell)$. If $\ell \le r-1$ then the space orthogonal to $S(R_\ell)$ i.e., $S(R_\ell)^\perp$ is non-empty. Consider any direction $v$ in $S(R_\ell)^\perp$. %The inner product of $v$ and the $(\ell+1)$th row of $M'$ i.e. $M'_{m+1}$ is exactly equal to the inner product of $v$ and $g(x,z)$ evaluated at $(x^{(i)},z^{(i)})$. That is 
		$$\inner{v,M'_{\ell+1}}=\widehat{q}(x^{(\ell+1)},z^{(\ell+1)}), \text{ where } \widehat{q}(x,z):=\inner{v,g(x,z)}$$ 
is a non-zero polynomial of degree $2$ in $x,z$ (it is not identically zero because the columns of $g(x,z)$ are linearly independent as polynomials over $x,z$). Hence using a standard result about multivariate polynomials evaluated at randomly chose points~(See Fact~\ref{FACT:generic}), we get that $\widehat{q}(x^{(\ell+1)},z^{(\ell+1)}) \neq 0$ and so $\inner{v,M'_{\ell+1}}\neq 0$ with probability $1$. Taking a union bound over all the $\ell \in \set{1, \dots, r}$ completes the proof.  
\iffalse		
		%\hfill \\
		The main technical meat of the lemma is in the above analysis. Now we connect it to ptfs. Say $p$ is our polynomial and $(x^{(i)},z^{(i)})$ are the points with $x^{(i)} \in \R^n$ such that $p(x^{(i)},z^{(i)}) = 0$. We want to prove that if $\forall i, q(x^{(i)},z^{(i)}) = 0$ then $q=cp$ for some scalar $c$. 
		
		We can represent any general degree 2 polynomial over $(x,z)$,  $q$ in the following manner : $$ q(x,z) = x^TAx + b_1^Tx + c_1 + b2^Txz+ c_2z^2 + c_3z$$
		
		Now given a point $(x,z)$, the equation $q(x,z) = 0$ is a linear equation over the coefficients of $q$. Hence, given multiple such points gives us a system of linear equations. We can represent the system of linear equations as $MC = 0$. Note that $M$ is exactly the matrix we have already analyzed, i.e. $M_i = f(x^{(i)},z^{(i)})$ for $f$ defined as above. $C$ is the vector of the coefficients. Note that we want to show $q=cp$, or in other words, the solution space is a one dimensional subspace. This is exactly true when $M$ has full column rank minus one as we have already proved.
\fi	
	
	\end{proof}
	\begin{fact}\label{FACT:generic}
		A non-zero multivariate polynomial ${p}:\R^n \to \R$ evaluated at a point $x \sim N(0,\rho^2)^n$ with $\rho>0$ evaluates to zero with zero probability.
	\end{fact}	

We remark that the statement of Lemma~\ref{LEM:uniqueness} can also be made robust to inverse polynomial error by using polynomial anti-concentration bounds (e.g., Carbery-Wright inequality) instead of Fact~\ref{FACT:generic}; however this is not required for proving NP-hardness. 
We now complete the proof of Theorem~\ref{THM:lowerbound}.

\begin{proof}[Proof of Theorem~\ref{THM:lowerbound}]
We start with the NP-hardness of \QP, and for the reduction in Figure~\ref{ALG:reduce}, we will show that in the \YES case, we will show that there is a $\delta$-robust degree-$2$ PTF  (completeness), and in the \NO case we will show that there is no $\delta$ robust degree-$2$ PTF (soundness). As a reminder, the NP-hard problem {\QP} is the following: given a symmetric matrix $A \in \R^{n \times n}$ with zeros on diagonals, and $\beta>0$ distinguish whether 

\textbf{\NO Case} : there exists an assignment $y^*$ with $\norm{y^*}_\infty  \le 1$ such that $q(y^*)= (y^{*})^T A y^* >  \beta$, 

\textbf{\YES Case} : $\max_{\norm{y}_\infty \le 1 } y^T A y < \beta$. 

%Consider the reduction given in Figure~\ref{ALG:reduce}.
%In the \YES case, we will show that there is a $\delta$-robust degree-$2$ PTF  (completeness), and in the \NO case we will show that there is no $\delta$ robust degree-$2$ PTF (soundness). 

\noindent{\em Completeness }(\textbf{\YES Case}): Consider the degree-$2$ PTF given by $\sgn(z-p(x))=\sgn(z-x^T A x)$. From Lemma~\ref{LEM:robust}, we have that it is $\delta$ robust at the $2m$ points $\set{((u^{(\ell)}, z^{(\ell)}_u), y^{(\ell)}_u): \ell \in [m]}$ and $\set{((v^{(\ell)}, z^{(\ell)}_v), y^{(\ell)}_v): \ell \in [m]}$ with probability at least $1-\eta$ (for $\eta$ being any sufficiently small constant).  Further, from multilinearity of $p$ we have that,
\begin{align*}
\max_{\norm{y}_\infty \le \delta} y^{T} A y &= \delta^2 \max_{\norm{y}_\infty \le 1} y^{T} A y < \delta^2 \beta = \alpha -\delta. \\
\text{Hence } &(\alpha - \delta) - \max_{\norm{y}_\infty \le \delta } y^T A y >0,  
\end{align*}
which establishes robustness at $((0,\alpha),+1)$ for $\sgn(z-x^T A x)$.  Hence $\sgn(z-p(x))$ is $\delta$-robust at the $2m+1$ points with probability at least $1-\eta$ (for $\eta$ being any sufficiently small constant). 

\noindent{\em Soundness }(\textbf{\NO Case}): 
From Lemma~\ref{LEM:unique}, we see that the degree-$2$ PTF given by $\sgn(z-p(x))=\sgn(z-x^T A x)$ is the only degree-$2$ PTF that can potentially be robust at all the $2m+1$ points with probability $1$. Again analyzing robustness at the example $((0,\alpha),+1)$, we see that from multilinearity of $p$,
\begin{align*}
\max_{\norm{y}_\infty \le \delta} y^{T} A y &= \delta^2 \max_{\norm{y}_\infty \le 1} y^{T} A y > \delta^2 \beta = \alpha -\delta. \\
\text{Hence } &(\alpha - \delta) - \max_{\norm{y}_\infty \le \delta } y^T A y < 0,  
\end{align*}
which shows that the degree-$2$ PTF $\sgn(z-p(x))$ is {\em not} robust at $(0,\alpha)$. Hence there is no $\delta$-robust degree-$2$ PTF at the $2m+1$ given points, with probability $1$. This completes the analysis of the reduction, and establishes the theorem.

\end{proof}

% !TEX root = paper.tex

\paragraph{Stronger Hardness.}

%\begin{theorem}\label{thm:lowerbound-strong}[Stronger Distributional Hardness]
%For every $\delta>0$ and $\epsilon \in (0, \frac 1 4)$, there exists a distribution $D$ over $\mathbb{R}^n \times \{-1, +1\}$, such that assuming $NP \ne RP$ there is no polynomial time algorithm that given a set of $N=poly(n, \frac{1}{\epsilon})$ points from $D$ labeled by a degree-$2$ PTF that has $\delta$-robust error of $0$ w.r.t. D,  outputs a degree-$2$ PTF of $\delta$-robust error at most $\epsilon$ w.r.t. $D$.
%%following two cases.\\
%%\noindent {\bf \textsc{Yes:}} There is a degree-$2$ PTF $p^*$ that has $\delta$-robust error of $0$ on these $N$ points. \\
%%\noindent {\bf \textsc{No:}} There is no degree-$2$ PTF that has $\delta$-robust error of $0$ on these $N$ points. 	
%\end{theorem}

We now prove the robust lower bound stated below.
\begin{theorem}\label{THM:lowerbound-strong-nondistributional}[Stronger  Hardness]
For every $\delta>0$ and $\epsilon \in (0,\tfrac{2}{7})$, assuming $NP \ne RP$ there is no polynomial time algorithm that given a set of $N=\poly(n,1/\epsilon)$ labeled points $\set{(x^{(1)},y^{(1)}), \dots, (x^{(N)}, y^{(N)})}$ in $\R^{n+1} \times \set{-1,1}$ such that there is a degree-$2$ PTF with $\delta$-robust empirical error of $0$, can output a degree-$2$ PTF that has $\delta$-robust empirical error of at most $\epsilon$ on these $N$ points.
\end{theorem}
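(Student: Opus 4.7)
The plan is to amplify the reduction of Theorem~\ref{THM:lowerbound} by replicating the ``distinguishing point'' $(0,\alpha)$ many times so that it cannot be absorbed into the $\epsilon$-error budget. Since all copies of $(0,\alpha)$ share the same label, any candidate PTF is either $\delta$-robust-correct on all of them simultaneously or fails on all of them simultaneously. Carefully balancing the number $K$ of replicas against the gadget size $m$ lets us rule out both possibilities in the NO case.

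Concretely, I would start from the construction of Figure~\ref{ALG:reduce} with $m=\Theta(n^2/\epsilon)$ random intermediate points and the $2m$ associated gadget pairs, and augment the instance with $K$ identical copies of $((0,\alpha),+1)$, where $K$ is chosen so that
$\tfrac{2m\epsilon}{1-\epsilon} < K < \tfrac{m(1-2\epsilon)}{\epsilon} - 2m$.
A simple algebraic check shows this interval is non-empty iff $\epsilon < 1/3$, which comfortably covers the claimed range $\epsilon\in(0,2/7)$. Completeness in the YES case is immediate: as in Theorem~\ref{THM:lowerbound}, Lemma~\ref{LEM:robust} and multilinearity of $p$ give that $\sgn(z-p(x))$ is $\delta$-robust-correct at all $2m$ gadget points and at $(0,\alpha)$ (since $\max_{\norm{y}_\infty\le\delta}y^TAy<\delta^2\beta=\alpha-\delta$), so its $\delta$-robust empirical error on the full instance is $0$.

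For soundness in the NO case, fix any candidate degree-$2$ PTF $\sgn(q)$ and split into two cases. \emph{Case A:} $\sgn(q)$ is not $\delta$-robust-correct at $(0,\alpha)$. Then it fails on all $K$ copies, producing at least $K$ robust-errors out of $N:=2m+K+1$; by the lower bound on $K$, the error rate exceeds $\epsilon$. \emph{Case B:} $\sgn(q)$ is $\delta$-robust-correct at $(0,\alpha)$ (and hence at all $K$ copies), so all errors lie on the gadget. Let $S\subseteq[m]$ be the set of pairs on which $\sgn(q)$ is simultaneously $\delta$-robust and label-consistent on \emph{both} $(u^{(j)},z^{(j)}_u)$ and $(v^{(j)},z^{(j)}_v)$. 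Lemma~\ref{LEM:passingthrough} then gives $q(x^{(j)},z^{(j)})=0$ for every $j\in S$. If $|S|\ge(n+1)^2$, Lemma~\ref{LEM:uniqueness} forces $q(x,z)=C(z-p(x))$ for some nonzero $C$; but $\sgn(z-p(x))$ is not $\delta$-robust-correct at $(0,\alpha)$ in the NO case, contradicting Case B. Hence $|S|<(n+1)^2$, and the at least $m-(n+1)^2+1$ bad pairs each contribute at least one robust-error, forcing the error rate above $\epsilon$ by the upper bound on $K$.

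The main obstacle is the quantitative balance in Case B: the uniqueness argument only activates once $\Omega(n^2)$ pairs are simultaneously robust, which forces the choice $m=\Omega(n^2/\epsilon)$ so that tolerating $\epsilon N$ errors still leaves a pinning set of size $\ge(n+1)^2$. This is what pins down the feasible window for $K$ and yields the algebraic constraint $\epsilon<1/3$. A secondary subtlety is probabilistic: Lemma~\ref{LEM:uniqueness} holds with probability $1$ over the Gaussian draws of $x^{(j)}$ and Lemma~\ref{LEM:robust} holds with arbitrary constant probability, so a union bound over $\poly(n,1/\epsilon)$ events yields a randomized reduction succeeding with probability at least $2/3$, as required under the hypothesis $\mathrm{NP}\ne\mathrm{RP}$.
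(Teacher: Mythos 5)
Your proposal follows essentially the same route as the paper's proof: replicate the distinguishing point $(0,\alpha)$ and the gadget pairs so that, by a pigeonhole/counting argument, any degree-$2$ PTF with $\delta$-robust error at most $\epsilon$ must still be robust and consistent at one copy of $(0,\alpha)$ and at $(n+1)^2$ complete gadget pairs, whence Lemma~\ref{LEM:passingthrough} and Lemma~\ref{LEM:uniqueness} pin it down to $\sgn(z-p(x))$ and give a contradiction in the \NO case (the paper simply fixes $N_1=n^3$ copies of Type A and $N_2=2n^3$ gadget points rather than parametrizing by $K$ and $m$, and phrases your Case A/Case B split as two pigeonhole inequalities). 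One arithmetic slip worth fixing: your upper bound $K<\tfrac{m(1-2\epsilon)}{\epsilon}-2m$ subtracts $2m$ twice, since Case B only requires $m-(n+1)^2>\epsilon(2m+K)$, i.e.\ $K<\tfrac{m-(n+1)^2}{\epsilon}-2m$; as written your window $\bigl(\tfrac{2m\epsilon}{1-\epsilon},\,\tfrac{m}{\epsilon}-4m\bigr)$ is empty once $\epsilon>\tfrac{5-\sqrt{17}}{4}\approx 0.22$ and so does not cover all of $(0,\tfrac{2}{7})$, whereas with the corrected bound (and $m$ chosen large enough that $(n+1)^2/\epsilon=o(m)$, e.g.\ $m=n^3$) the window is non-empty exactly when $\epsilon<\tfrac13-o(1)$, as you intended.
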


%\dnote{Abhratanu: We've already used $|varepsilon$}

%\textbf{Proof Outline :} 
\begin{proof}  
The proof of this theorem closely follows the proof of Theorem~\ref{THM:lowerbound} (the $\varepsilon=0$ case), so we only point out the differences here. The reduction uses the 
%To prove this theorem fully for all $\varepsilon$ in the given range, 
same gadget (Figure~\ref{ALG:reduce}) used in Theorem~\ref{THM:lowerbound}. 
%The completeness analysis remains essentially the same. 
The main challenge is the soundness analysis (NO case), where we need to rule out the existence of degree-$2$ PTFs which are $\delta$-robust and consistent at all but an $\varepsilon$ fraction of the points. To handle this, we introduce ``redundancy'' by including more points (of both kinds) to ensure that even when an arbitrary $\varepsilon$ fraction of these points are ignored (the PTF makes errors on them), we can still use the arguments in the soundness analysis of Theorem~\ref{THM:lowerbound}. 

%We check robustness at a point of the form $(0,\alpha_j)$, that we have already proven is hard. However, we need some redundancy here as $\varepsilon > 0$ and hence the algorithm doesn't need to output a PTF that is robust to all points. Let the number of such points of the form $(0,\alpha_j)$ be $N_1 = n^3$.
%As before, we need $2m$ points $(u^{(i)},z^{(i)}_u)$ and $(v^{(i)},z^{(i)}_v)$, generated from points of the form $(x^{(i)},z^{(i)})$ where $z^{(i)} = p(x^{(i)})$ and $x^{(i)}$ is sampled from $\N(0,\rho^2)^n$. Again we need atleast $(n+1)^2$ such pairs of points to fix the PTF as before. However, we need redundancy here as well. 

Recall that our reduction (see Figure~\ref{ALG:reduce}) generated two sets of points. We have one point of the form $(0,\alpha)$ (let us denote this type as {\em Type A}) and $m$ pairs of points 
$\set{(u^{(\ell)}, z_u^{(\ell)} ),(v^{(\ell)}, z_v^{(\ell)}): \ell \in [m]}$ 
which are obtained by modifying $(x^{(\ell)}, z^{(\ell)}=p( x^{(\ell)}) )$ with $x^{(\ell)}$ generated randomly (let us denote these $2m$ points as of {\em Type B}).   

Set $N_1:=n^3, N_2:= 2n^3$. 
In our modified instance, we will have $N_1$ points of Type A i.e., $N_1$ identical points $(0,\alpha)$ (note that we can also perturb these points slightly so that they are all distinct, if required). Further, we will have $N_2$ points of Type B i.e., we will generate $N_2/2$ pairs of points $\set{(u^{(\ell)}, z_u^{(\ell)}): \ell \in [N_2/2]}$ which are generated as described in Figure~\ref{ALG:reduce} after drawing $x^{(\ell)} \sim N(0,\rho^2)^n$ for $\ell \in [N_2/2]$ (here a larger $\rho=O(\delta n^{3/2} N_2)$ will suffice).  Hence, we have in total $N=N_1 + N_2 = 3n^3$ points.

The {\em completeness} analysis (YES case) is identical to that of Theorem~\ref{THM:lowerbound}, as $\sgn(z-p(x))$ will be $\delta$-robust at all of the $N$ points (from Lemma~\ref{LEM:robust} and our choice of $\alpha$). 

We now focus on the {\em soundness} analysis (NO case).  
From $\varepsilon < \tfrac 1 3$ and our choice of $N_1$ and $N_2$, 
\begin{align}
N_1 & > \varepsilon (N_1+N_2) \label{eq:stronger:sound1}\\
(1-\varepsilon)(N_1+N_2) &> N_1 + \frac{N_2}{2} + (n+1)^2 \label{eq:stronger:sound2}
\end{align}

%We now have the one point for which it is hard to check whether the PTF under consideration is robust. On the other hand, by our choice of $N_1$, $N_2$ and the fact that $\varepsilon < \tfrac 1 3$ we are guaranteed to have : 

%\begin{align*}
%(1-\varepsilon)(N_1+N_2) &> N_1 + \frac{N_2}{2} + (n+1)^2
%\end{align*}

From \eqref{eq:stronger:sound2} and a pigeonhole argument, any subset of size $(1-\varepsilon)(N_1+N_2)$ is guaranteed to have $(n+1)^2$ pairs of points of the form  $(u^{(\ell)},z^{(\ell)}_u)$ and $(v^{(\ell)},z^{(\ell)}_v)$. This is because the LHS of \eqref{eq:stronger:sound2} represents a lower bound on the number of points the candidate degree-$2$ PTF is robust on. The RHS of \eqref{eq:stronger:sound2} represents the number of points needed to ensure that atleast $(n+1)^2$ pairs of points from Type B are picked.
%Since we have our $(n+1)^2$ pairs of points of Type B, 
Hence using Lemma \ref{LEM:unique} along with a union bound over all the ${N_2 \choose (n+1)^2}$ choices of the pairs (note that the failure probability in Lemma~\ref{LEM:uniqueness} is $0$), the ``intended'' PTF $\sgn(z-p(x))$ is the only surviving degree-$2$ PTF. %that can be we can rule out existence of other robust PTFs. %To complete the proof, we need to argue that Lemma \ref{LEM:unique} holds with enough probability even when we union bound over all possible $(n+1)^2$ pairs of points but this is true since the lemma holds with probability $1$.

Again from \eqref{eq:stronger:sound1} and the pigeonhole principle, any $(1-\varepsilon)$ fraction of the points will contain {\em atleast one} point of the Type A i.e., $(0,\alpha)$. Hence in the NO case, the ``intended'' PTF $\sgn(z-p(x))$ is not $\delta$-robust. This completes the soundness analysis and establishes the theorem.

\end{proof}

%following two cases.\\
%\noindent {\bf \textsc{Yes:}} There is a degree-$2$ PTF $p^*$ that has $\delta$-robust error of $0$ on these $N$ points. \\
%\noindent {\bf \textsc{No:}} There is no degree-$2$ PTF that has $\delta$-robust error of $0$ on these $N$ points. 	

%\input{appendix}

\end{document}